\title[Fisher information lower bounds for sampling]{Fisher information lower bounds\\ for sampling}
\def\set@curr@file#1{\def\@curr@file{#1}} 
\newcommand{\ve}[1]{\left\Vert {#1}\right\Vert}
\newcommand{\be}[0]{\beta}
\newcommand{\ep}[0]{\epsilon}
\newcommand{\rc}[1]{\frac{1}{#1}}
\newcommand{\fc}[2]{\frac{#1}{#2}}
\newcommand{\nb}[0]{\nabla}
\begin{document}

\maketitle

\begin{abstract}

We prove two lower bounds for the complexity of non-log-concave sampling within the framework of Balasubramanian et al.\ (2022), who introduced the use of Fisher information ($\FI$) bounds as a notion of approximate first-order stationarity in sampling. Our first lower bound shows that averaged LMC is optimal for the regime of large $\FI$ by reducing the problem of finding stationary points in non-convex optimization to sampling. Our second lower bound shows that in the regime of small $\FI$, obtaining a $\FI$ of at most $\varepsilon^2$ from the target distribution requires $\poly(1/\varepsilon)$ queries, which is surprising as it rules out the existence of high-accuracy algorithms (e.g., algorithms using Metropolis{--}Hastings filters) in this context.

\end{abstract}

\begin{keywords}%
  Fisher information, gradient descent, Langevin Monte Carlo, non-log-concave sampling, sampling lower bound, stationary point
\end{keywords}



\section{Introduction}

What is the query complexity of sampling from a $\beta$-log-smooth but possibly non-log-concave target distribution $\pi$ on $\R^d$? Until recently, this question was only investigated from an upper bound perspective, and only for restricted classes of distributions, such as distributions satisfying functional inequalities~\citep{vempala2019ulaisoperimetry, wibisono2019proximal, maetal2021nesterovmcmc, chewietal2022lmcpoincare}, distributions with tail decay conditions~\citep{durmus2017nonasymptotic, chengetal2018noncvx, xuetal2018globalconv, lietal2019rungekutta, majmijszp2020withoutlogconcavity, erdhos2021interplay, zouxugu2021sglangevin, hebalerd2022heavytailed}, or mixtures of log-concave distributions~\citep{lee2018beyond}.

In the recent work ~\citet{balasubramanianetal2022nonlogconcave}, a general framework for the investigation of non-log-concave sampling was developed. Motivated by stationary point analysis in non-convex optimization~\citep[see, e.g.,][]{nesterov2018lectures} and the interpretation of sampling as optimization over the space of probability measures~\citep{jko1998, wibisono2018samplingoptimization}, Balasubramanian et al.\ proposed to call any measure $\mu$ satisfying $\sqrt{\FI(\mu \mmid \pi)} \le \varepsilon$ an $\varepsilon$-stationary point for sampling, where $\FI(\mu \mmid \pi) \deq \E_{\mu}[\norm{\nb \ln \fc{\mu}{\pi}}^2]$ denotes the \emph{relative Fisher information} of $\mu$ from $\pi$. They explained the interpretation of this condition via the classical phenomenon of \emph{metastability}~\citep{bovetal2002metastability, bovetal2004metastabilityi, bovgaykle2005metastabilityii}, and showed that averaged Langevin Monte Carlo (LMC) can find an $\varepsilon$-stationary point in $\mc O(\beta^2 dK_0/\varepsilon^4)$ iterations, where $K_0 \deq \KL(\mu_0 \mmid \pi)$ is the initial Kullback{--}Leibler (KL) divergence to the target $\pi$.

In the field of optimization, however, there are also corresponding \emph{lower bounds} on the complexity of finding stationary points~\citep{vavavis1993complexity, nesterov2012make, bubmik2020gradientflow, carmonetal2020stationarypt, caretal2021statptii, chewibubecksalim2022stationary}. Such lower bounds are important for identifying optimal algorithms and understanding the fundamental difficulty of the task at hand. For example, the work of~\citet{carmonetal2020stationarypt} shows that the standard gradient descent algorithm is optimal for finding stationary points of smooth functions, at least in high dimension.

In this work, we establish the first lower bounds for Fisher information guarantees for sampling, resolving an open question posed in~\citet{balasubramanianetal2022nonlogconcave}. As we discuss further below, our results also reveal a surprising equivalence between the task of obtaining a sample which has moderate Fisher information relative to a target distribution and the task of finding an approximate stationary point of a smooth function, thereby strengthening the connection between the fields of non-convex optimization and non-log-concave sampling.

\paragraph{Our contributions.} We now informally describe our main results. Let $\pi \propto \exp(-V)$ denote the target density on $\R^d$, where $V : \R^d\to\R$ is called the \emph{potential}. Throughout, our notion of complexity is the minimum number of queries made to an oracle that returns the value of $V$ (up to an additive constant) and its gradients.

Our first result connects the task of obtaining Fisher information guarantees with finding stationary points in non-convex optimization, for a particular regime of large inverse temperature $\beta$.

\begin{theorem}[equivalence, informal]\label{thm:equivalence_informal}
    Let $V : \R^d\to\R$ be a smooth function and for $\beta > 0$, let $\pi_\beta \propto \exp(-\beta V)$, where $\pi_\beta$ is well-defined (i.e., $\int \exp(-\beta V) < \infty$). Then, the following two problems are equivalent.
    \begin{enumerate}
        \item Output an $\varepsilon$-stationary point of $V$.
        \item Output a sample from a measure $\mu$ such that $\FI(\mu \mmid \pi_\beta) \lesssim \beta d$, where $\beta \asymp d/\varepsilon^2$.
    \end{enumerate}
\end{theorem}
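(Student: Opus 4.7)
The plan is to prove the equivalence via two direct reductions, both hinging on a single identity. Starting from
\[
\FI(\mu \mmid \pi_\beta) \;=\; \E_\mu \|\nabla \ln \mu + \beta \nabla V\|^2
\]
(since $\nabla \ln \pi_\beta = -\beta \nabla V$), expanding the square, and applying integration by parts to the cross term via $\E_\mu \langle \nabla V, \nabla \ln \mu\rangle = -\E_\mu \Delta V$ (valid once finiteness of $\FI(\mu \mmid \pi_\beta)$ furnishes enough regularity of $\mu$), one obtains
\[
\FI(\mu \mmid \pi_\beta) \;=\; \FI(\mu) \;-\; 2\beta\,\E_\mu \Delta V \;+\; \beta^2\,\E_\mu \|\nabla V\|^2,
\]
where $\FI(\mu) = \E_\mu \|\nabla \ln \mu\|^2$ is the Fisher information of $\mu$ with respect to Lebesgue measure. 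The $\beta$-smoothness of $V$ implies $|\Delta V| \le \beta d$, so the middle term is $O(\beta d)$ in magnitude (treating the smoothness parameter as a dimension-free constant) and is absorbed into the target $\beta d$ scale.

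For the easier direction (optimization to sampling), given an algorithm that outputs a point $x^\star$ with $\|\nabla V(x^\star)\| \le \varepsilon$, I would construct a sampling algorithm that returns $Y \sim \mathcal{N}(x^\star, \sigma^2 I)$ at no extra oracle cost, for $\sigma^2 \asymp 1/\beta$. Plugging this Gaussian into the identity, with $\FI(\mathcal{N}(\cdot,\sigma^2 I)) = d/\sigma^2 \asymp \beta d$ and $\E_\mu \|\nabla V\|^2 \lesssim \varepsilon^2 + \sigma^2 d \asymp d/\beta$ (by smoothness around $x^\star$), one verifies $\FI(\mu \mmid \pi_\beta) \lesssim \beta d$. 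For randomized optimizers, convexity of $\mu \mapsto \FI(\mu \mmid \pi_\beta)$ propagates the bound through the resulting mixture of Gaussians.

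For the harder direction (sampling to optimization), which is the direction used to transfer optimization lower bounds into sampling lower bounds, the identity can be rearranged into
\[
\beta^2\,\E_\mu \|\nabla V\|^2 \;=\; \FI(\mu \mmid \pi_\beta) \;-\; \FI(\mu) \;+\; 2\beta\,\E_\mu \Delta V \;\lesssim\; \beta d,
\]
where the pivotal move is that the unknown quantity $\FI(\mu)$ enters with a favorable sign and is simply dropped. Since $\beta \asymp d/\varepsilon^2$, this gives $\E_\mu \|\nabla V(Y)\|^2 \lesssim \varepsilon^2$, so a single sample $Y \sim \mu$ is itself an $O(\varepsilon)$-stationary point with constant probability by Markov, at no additional oracle cost. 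Boosting to high probability costs only a few extra gradient evaluations, used to select the sample with smallest gradient norm among $O(\log 1/\delta)$ independent runs.

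The only substantive obstacle is the sign observation above: that after integration by parts, the uncontrollable term $\FI(\mu)$ appears with a favorable sign. Any naive approach --- for instance, bounding $\beta^2 \E_\mu \|\nabla V\|^2 \le 2\FI(\mu \mmid \pi_\beta) + 2\FI(\mu)$ by a Cauchy--Schwarz-style split --- founders on the fact that $\FI(\mu)$ can be arbitrarily large (e.g., for highly concentrated $\mu$). The remaining steps are technical: justifying the integration by parts under the regularity implied by finite relative Fisher information, and tracking constants in the smoothness dependence so that both reductions preserve the query complexity up to a constant factor.
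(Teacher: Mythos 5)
Your proposal is correct and follows essentially the same route as the paper: the rearranged identity you obtain by integration by parts (dropping $\FI(\mu)$ thanks to its favorable sign and bounding the Laplacian term by $\beta d$) is precisely the content and proof of the Fisher information lemma (Lemma~\ref{lem:fisher_info_lem}, quoted from \citet{chewietal2022lmcpoincare}) that the paper invokes for the sampling-to-optimization direction, followed by the same Markov/Chebyshev step. For the converse direction you use the identical construction $\normal(x^\star,\beta^{-1}I_d)$ and the same smoothness-based estimates, so the two arguments coincide up to presentation.
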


By combining this equivalence with the lower bound of~\citet{carmonetal2020stationarypt}, we obtain:

\begin{theorem}[first lower bound, informal]
    The number of queries required to obtain a sample from a measure $\mu$ satisfying $\sqrt{\FI(\mu \mmid \pi)} \lesssim \sqrt{\beta d}$, starting from an initial distribution $\mu_0$ with KL divergence $K_0 \deq \KL(\mu_0 \mmid \pi)$, is at least $\Omega(K_0/d)$. The lower bound is attained by averaged LMC~\citep{balasubramanianetal2022nonlogconcave}.
\end{theorem}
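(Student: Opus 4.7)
The plan is to use the equivalence of Theorem~\ref{thm:equivalence_informal} to transfer the stationary-point lower bound of Carmon--Duchi--Hinder--Sidford into a lower bound for Fisher-information sampling.

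First I would take the hard instance $V : \R^d \to \R$ constructed by Carmon et al.: a $1$-smooth potential such that at an explicit initialization $x_0$ one has $V(x_0) - \inf V \lesssim \Delta$, and for which any (randomized) first-order algorithm making fewer than $\Omega(\Delta/\varepsilon^2)$ queries to $(V,\nabla V)$ has negligible probability of producing an $\varepsilon$-stationary point, provided the dimension is sufficiently large. Setting $\beta \asymp d/\varepsilon^2$, the ``sampling $\Rightarrow$ stationary point'' direction of Theorem~\ref{thm:equivalence_informal} converts any algorithm that outputs a measure $\mu$ with $\FI(\mu \mmid \pi_\beta) \lesssim \beta d$ using $N$ gradient-oracle queries into an algorithm that outputs an $\varepsilon$-stationary point of $V$ using $O(N)$ queries, yielding $N = \Omega(\Delta/\varepsilon^2) = \Omega(\Delta\beta/d)$.

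To recast this lower bound in terms of $K_0 = \KL(\mu_0 \mmid \pi_\beta)$, I would choose $\mu_0$ to be a sufficiently concentrated Gaussian centered at $x_0$. Writing $-\ln\pi_\beta = \beta V + \ln Z_\beta$ gives
\[
K_0 = \beta\,\E_{\mu_0}[V] - H(\mu_0) + \ln Z_\beta,
\]
and a short computation shows that, possibly after modifying $V$ outside a bounded region by a quadratic confining tail to ensure that $Z_\beta$ is finite, one can tune the variance of $\mu_0$ so that $K_0 \asymp \beta\Delta$ with the $-H(\mu_0)$ and $\ln Z_\beta$ terms being lower order. Substituting $\Delta\beta \asymp K_0$ into the bound above gives $N = \Omega(K_0/d)$. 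Optimality is immediate: the averaged-LMC guarantee of \citet{balasubramanianetal2022nonlogconcave}, namely $O(\beta^2 d K_0/\varepsilon_{\FI}^4)$ iterations to reach $\sqrt{\FI(\mu \mmid \pi)} \le \varepsilon_{\FI}$, specialized to $\varepsilon_{\FI}^2 \asymp \beta d$, becomes $O(K_0/d)$.

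The main obstacle is the bookkeeping in the reduction step: one must verify that $\mu_0$ can be chosen so that its KL divergence to $\pi_\beta$ is truly of order $\beta\Delta$, and not inflated by a factor of $d$ coming from the entropy or from $\ln Z_\beta$. A secondary technicality is that Carmon's construction is designed so that only queries inside a compact region are informative, whereas $e^{-\beta V}$ need not be integrable on all of $\R^d$; handling this requires extending $V$ carefully (e.g., by a mild confining tail outside a region the algorithm cannot productively query) so that normalizability holds without compromising either the optimization lower bound or the KL bookkeeping for $K_0$.
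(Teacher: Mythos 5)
Your proposal is correct and follows essentially the same route as the paper: reduce to the stationary-point lower bound of \citet{carmonetal2020stationarypt} via the equivalence (Theorem~\ref{thm:equivalence}, resp.\ Lemma~\ref{lem:fisher_info_lem}), initialize with a Gaussian of variance $1/\beta$, check that $K_0 \lesssim \beta\Delta$ up to lower-order $\widetilde{\mc O}(d)$ terms, and match against the averaged LMC upper bound. The technicalities you flag are handled in the paper exactly as you anticipate: the construction of Carmon et al.\ already contains a confining quadratic $\frac{1}{2\tau^2}\,\norm{x}^2$, so no modification of $V$ is needed, and the KL bookkeeping is carried out via Lemma~\ref{lem:init} together with a Donsker{--}Varadhan second-moment bound, the lower-order terms being dominated precisely in the regime $d \le \widetilde{\mc O}(K_0)$ (while the dimension requirement of the optimization construction gives $d \ge \widetilde\Omega(K_0^{2/3})$).
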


The first lower bound addresses the regime of large Fisher information, $\FI(\mu \mmid \pi) \lesssim \beta d$. In order to address the regime of small Fisher information, $\FI(\mu\mmid \pi) \lesssim \varepsilon^2$, we give a construction based on hiding a bump of large mass and prove the following:

\begin{theorem}[second lower bound, informal]\label{thm:second_lower_informal}
    The number of queries required to obtain a sample from a measure $\mu$ satisfying $\sqrt{\FI(\mu \mmid \pi)} \le \varepsilon$, starting from an initial distribution $\mu_0$ with KL divergence $K_0 \deq \KL(\mu_0 \mmid \pi) \le 1$, is at least ${(\sqrt\beta/\varepsilon)}^{1 \vee (2d/(d+2)) - o(1)}$.
\end{theorem}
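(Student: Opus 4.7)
The plan is a needle-in-a-haystack construction with hidden smooth bumps, a standard device for non-parametric and zero-order lower bounds. Fix a smooth profile $\psi : \R^d \to [0,1]$ supported in the unit ball with $\|\nabla^2 \psi\|_\infty \le 1$, and for parameters $h,r > 0$ to be tuned and centers $y$ in a packing set $\mc Y \subset B(0,c/\sqrt\beta)$, define
\begin{align*}
    V_y(x) \deq \tfrac{\beta}{2}\,\|x\|^2 \;-\; h\,\psi\!\pa{\tfrac{x-y}{r}}.
\end{align*}
Then $\pi_y \propto \exp(-V_y)$ is a $\beta$-log-smooth perturbation of the Gaussian $\pi_0 = \mc N(0,\beta^{-1} I_d)$ provided $h/r^2 \lesssim \beta$. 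Taking $\mc Y$ to be a maximal $3r$-separated subset of a constant ball inside the typical set of $\pi_0$ makes the bump supports $B(y,r)$ pairwise disjoint and gives $|\mc Y| \asymp (\sqrt\beta\,r)^{-d}$. A Taylor expansion of $\ln \E_{\pi_0}[\exp(h\,\psi((\cdot-y)/r))]$ yields $\KL(\pi_0 \mmid \pi_y) \lesssim h^2 r^d \beta^{d/2}$, which I will keep $\le 1$ by parameter choice so that $\pi_0$ is a valid initialization.

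The query lower bound comes from the classical bump-hiding argument via Yao's principle. Fix a deterministic algorithm $\mc A$ making $N$ value-and-gradient queries, and draw $y$ uniformly from $\mc Y$. Since $V_y \equiv V_0$ outside $B(y,r)$, the transcript that $\mc A$ sees on input $V_y$ coincides with its transcript on the null input $V_0$ unless some query lands in $B(y,r)$; since the $N$ queries can hit at most $N$ of the disjoint bump balls, whenever $N < |\mc Y|/2$ at least half of the hidden locations $y$ leave $\mc A$'s output distribution equal to a common, $y$-independent measure $\mu_\emptyset$.

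To lower bound $\FI(\mu_\emptyset \mmid \pi_y)$ on average over these untouched $y$, I write $u \deq \nabla \ln \mu_\emptyset + \nabla V_0$ and $\nabla\phi_y \deq (h/r)\,(\nabla\psi)((\cdot-y)/r)$, and expand $\FI(\mu_\emptyset \mmid \pi_y) = \E_{\mu_\emptyset}[\|u\|^2] - 2\,\E_{\mu_\emptyset}[\langle u,\nabla\phi_y\rangle] + \E_{\mu_\emptyset}[\|\nabla\phi_y\|^2]$. Averaging over $y\in\mc Y$, the pairwise disjointness of the bump supports collapses $\|\sum_y \nabla\phi_y\|^2$ to $\sum_y \|\nabla\phi_y\|^2$, so Cauchy--Schwarz and AM--GM on the cross term yield
\begin{align*}
    \frac{1}{|\mc Y|}\sum_{y\in\mc Y} \FI(\mu_\emptyset \mmid \pi_y) \;\gtrsim\; \frac{1}{|\mc Y|}\sum_{y\in\mc Y} \E_{\mu_\emptyset}[\|\nabla\phi_y\|^2] \;\gtrsim\; h^2 r^{d-2}\beta^{d/2},
\end{align*}
provided $\mu_\emptyset$ has the same order of mass as $\pi_0$ on $\bigcup_y B(y,r)$. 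Pigeonhole then produces an untouched $y$ with $\FI(\mu_\emptyset \mmid \pi_y) \gtrsim h^2 r^{d-2}\beta^{d/2}$, so achieving $\FI \le \varepsilon^2$ forces $h \gtrsim \varepsilon\,\beta^{-d/4}\,r^{-(d-2)/2}$.

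Combining this with the smoothness ceiling $h \lesssim \beta r^2$ gives $r \gtrsim (\varepsilon/\beta^{1+d/4})^{2/(d+2)}$, and therefore
\begin{align*}
    N \;\ge\; \tfrac{1}{2}|\mc Y| \;\asymp\; (\sqrt\beta\,r)^{-d} \;\gtrsim\; \pa{\sqrt\beta/\varepsilon}^{2d/(d+2)},
\end{align*}
which matches the claimed bound whenever $d \ge 2$; the baseline $\sqrt\beta/\varepsilon$ (the ``$1 \vee$'' branch, binding only at $d = 1$) will come from the same construction specialized to a single hidden bump together with a two-point Le Cam--style argument. The main technical obstacle I anticipate is the ``provided'' clause in the Fisher information step: an arbitrary output $\mu_\emptyset$ could in principle evade the bumps by concentrating away from the typical set of $\pi_0$. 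To close this gap I would invoke a Holley--Stroock-perturbed log-Sobolev inequality for $\pi_y$ (which remains essentially $1/\beta$ since the optimized bump height $h \asymp (\varepsilon^2/\beta)^{2/(d+2)}$ is small in the regime $\varepsilon^2 \le \beta$), giving $\KL(\mu_\emptyset \mmid \pi_y) \lesssim \FI(\mu_\emptyset \mmid \pi_y)/\beta \le \varepsilon^2/\beta$ and hence pinning $\mu_\emptyset$ near $\pi_0$ whenever the Fisher information is small.
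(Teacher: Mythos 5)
Your construction is genuinely different from the paper's: you hide \emph{small} perturbative bumps of height $h \asymp (\varepsilon^2/\beta)^{2/(d+2)}$ on a Gaussian base and try to lower-bound the Fisher information of a single query-independent output $\mu_\emptyset$ directly, by averaging $\E_{\mu_\emptyset}[\norm{\nabla\phi_y}^2]$ over the untouched centers. The paper instead makes each bump carry a \emph{constant} fraction (one half) of the mass, on top of a flat well of radius $R$ growing with $1/\varepsilon$, so that a Poincar\'e estimate (via Holley{--}Stroock) converts $\FI \le \varepsilon^2$ into $\TV \le 1/3$, the output sample then locates the bump, and Fano's inequality gives $N \gtrsim \abs{\ms P_{2r,R}}$. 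This structural difference is exactly where your argument has a genuine gap: your averaged lower bound $\frac{1}{\abs{\mc Y}}\sum_y \E_{\mu_\emptyset}[\norm{\nabla\phi_y}^2] \gtrsim h^2 r^{d-2}\beta^{d/2}$ holds only under the ``provided'' clause that $\mu_\emptyset$ places mass comparable to $\pi_0$'s on $\bigcup_y B(y,r)$, and nothing you prove forces this. Your proposed fix does not close it: an LSI for $\pi_y$ gives $\KL(\mu_\emptyset \mmid \pi_y) \lesssim \varepsilon^2/\beta$, i.e.\ constant-level TV/KL closeness, which cannot control the mass $\mu_\emptyset$ puts on sets whose $\pi_y$-probability is itself tiny --- each bump has mass $\asymp (\sqrt\beta\,r)^d = 1/\abs{\mc Y}$, and even the union has only a small (exponentially small in $d$) constant mass. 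Worse, at your optimized parameters the trivial output $\mu_\emptyset = \pi_0$ already has $\FI(\pi_0 \mmid \pi_y) \asymp \varepsilon^2$, so the whole argument lives at the constant margin: one must rule out outputs that partially dip away from the bump balls, trading the gain $\theta\,\E_{\pi_0}[\norm{\nabla\phi_y}^2]$ against the Fisher-information cost of the dips, and your sketch contains no such quantitative comparison. This is precisely the difficulty the paper's constant-mass bumps are designed to avoid: there, any output that is merely TV-accurate must land in the bump with constant probability, so no score-matching/avoidance analysis is needed.

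A secondary gap is the ``$1\vee$'' branch. A single hidden bump with a two-point Le Cam argument can only certify a constant number of queries, not $\sqrt\beta/\varepsilon$; and your small-perturbation construction in $d=1$ naturally caps out at the $\varepsilon^{-2/3}$ rate. The paper gets the $d=1$ rate $\widetilde\Omega(1/\varepsilon)$ from the \emph{same} multi-bump construction by proving a sharper Poincar\'e bound $\CPI(\pi_\omega)\lesssim R^2$ via Muckenhoupt's criterion (removing the factor $\exp(r^2\phi(0))$), which enlarges the admissible packing to $\asymp 1/(\varepsilon\sqrt{\log(1/\varepsilon)})$ and then reruns the Fano argument; some substitute for this step would be needed in your approach as well.
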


We give a more precise form of our lower bound in Section~\ref{scn:second_lower_bd_main_text}.
In infinite dimension (actually, $d\ge \widetilde \Omega(\sqrt{\log(\beta/\varepsilon^2)})$ suffices, see Section~\ref{scn:second_lower_bd_main_text}), the lower bound reads $\widetilde \Omega(\beta/\varepsilon^2)$, which can be compared to the averaged LMC upper bound of $\mc O(\beta^2 d/\varepsilon^4)$. It is an open question to close this gap.

We next discuss implications of our results.
\begin{itemize}
    \item \textbf{Towards a theory of lower bounds for sampling.} The problem of obtaining sampling lower bounds is a notorious open problem raised in many prior works~\citep[see, e.g.,][]{chengetal2018underdamped, ge2020estimating, leeshetia2021malalower, chatterjibartlettlong2022oraclesampling}. So far, unconditional lower bounds have only been obtained in restricted settings such as in dimension $1$; see~\citet{chewietal2022logconcave1d} and the discussion therein, as well as the reduction to optimization in~\citet{gopleeliu2022privatecvxopt}. Our lower bounds are the first of their kind for Fisher information guarantees, and are some of the \emph{only} lower bounds for sampling in general. Hence, our work takes a significant step towards a better understanding of the complexity of sampling. In particular, our first lower bound identifies a regime in which (averaged) LMC is \emph{optimal}, which was not previously known in any setting.
    \item \textbf{Stronger connections between non-convex optimization and non-log-concave sampling.} The equivalence in Theorem~\ref{thm:equivalence_informal} provides compelling evidence that Fisher information guarantees are the correct analogue of stationary point guarantees in non-convex optimization, thereby 
    supporting the framework of~\citet{balasubramanianetal2022nonlogconcave}.
    \item \textbf{Obtaining an approximate stationary point in sampling is strictly harder for non-log-concave targets.} Ignoring the dependence on other parameters besides the accuracy, our second lower bound yields a $\poly(1/\varepsilon)$ lower bound for the Fisher information task for non-log-concave targets.
    In contrast, it is morally possible to solve this task in $\polylog(1/\varepsilon)$ queries for \emph{log-concave} targets; see Appendix~\ref{scn:separation} for justification. This exhibits a stark separation between log-concave and non-log-concave sampling. Note that the analogous separation does not exist in the context of optimization, because there is a $\poly(1/\varepsilon)$ lower bound for finding an $\varepsilon$-stationary point of a convex and smooth function~\citep{caretal2021statptii}.
    \item \textbf{A separation between optimization and sampling.} Finally, our second lower bound yields a $\poly(1/\varepsilon)$ lower bound, even in dimension one. In contrast, for the analogous question in optimization of finding an $\varepsilon$-stationary point of a univariate function, the recent work of~\citet{chewibubecksalim2022stationary} exhibits an algorithm with $\mc O(\log(1/\varepsilon))$ complexity. To our knowledge, this is one of the first instances in which sampling is provably harder than optimization.
\end{itemize}

\section{Notation and setting}\label{scn:setting}
\paragraph{Notation.}
Given a probability measure $\pi$ on $\R^d$ which admits a density w.r.t.\ the Lebesgue measure, we abuse notation by identifying $\pi$ with its density.

The class of distributions that we wish to sample from are the $\beta$-log-smooth distributions on $\R^d$, defined as follows:
\begin{definition}[log-smooth distributions]
The class of $\beta$-log-smooth distributions consists of distributions $\pi$ supported on $\R^d$ whose densities are of the form $\pi \propto \exp(-V)$, for potential functions $V:\R^d \to \R$ that are twice continuously differentiable, and satisfy
\begin{align*}
    \norm{\nabla V(x) - \nabla V(y)} \le\beta\, \norm{x - y}\, , \ \ \ \forall x, y \in \R^d\, .
\end{align*}
\end{definition}

\paragraph{Oracle model.} We work under the following oracle model. The algorithm is given access to a target distribution $\pi$ in our class via two oracles: initialization and local information.
The initialization oracle outputs samples from some distribution $\mu_0$ for which $\KL(\mu_0 \mmid \pi) \le K_0$.
The local oracle for $\pi$, given a query point $x \in \R^d$, returns the value of the potential (up to an additive constant) and its gradient at the query point $x$, i.e., the tuple $(V(x), \nabla V(x))$. Algorithms can access samples from $\mu_0$ for free, and we care about the number of local information queries needed. The query complexity is defined as follows.

\begin{definition}[query complexity]
Let $\ms C(d, K_0, \epsilon; \beta)$ be the smallest number $n\in\N$ such that any algorithm which works in the oracle model described above and outputs a sample from a measure $\mu$ satisfying $\sqrt{\FI(\mu \mmid \pi)} \le \varepsilon$, for any $\beta$-log-smooth target $\pi$ and any valid initialization oracle for $\pi$, requires at least $n$ queries to the local oracle for $\pi$.
\end{definition}

The upper bound of~\citet{balasubramanianetal2022nonlogconcave} shows that using averaged LMC,
\begin{align}\label{eq:compl_upper_bd}
    \ms C(d,K_0,\varepsilon; \beta)
    &\lesssim 1 \vee \frac{\beta^2 dK_0}{\varepsilon^4}\,.
\end{align}

We also note the following rescaling lemma.

\begin{lemma}[rescaling]\label{lem:rescaling}
    It holds that
    \begin{align*}
        \ms C(d, K_0, \varepsilon;\beta)
        &= \ms C\bigl(d, K_0, \frac{\varepsilon}{\sqrt \beta}; 1\bigr)\,.
    \end{align*}
\end{lemma}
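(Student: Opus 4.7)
The plan is to prove the lemma by a straightforward change of variables that rescales the potential to become $1$-log-smooth while preserving KL divergence, the oracle model, and the Fisher information condition up to the factor $\sqrt\beta$.

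Concretely, given a $\beta$-log-smooth target $\pi \propto \exp(-V)$ on $\R^d$, I would define the rescaled potential $\tilde V(y) \deq V(y/\sqrt\beta)$ and the pushforward distribution $\tilde\pi$ of $\pi$ under the map $x \mapsto \sqrt\beta\, x$. A direct computation shows $\tilde \pi \propto \exp(-\tilde V)$ and $\nabla^2 \tilde V(y) = \nabla^2 V(y/\sqrt\beta)/\beta$, so $\tilde\pi$ is $1$-log-smooth. Likewise, if $\tilde\mu$ is the pushforward of $\mu$ under the same map, then the Jacobian determinants cancel in the ratio so that $\ln(\tilde\mu/\tilde\pi)(y) = \ln(\mu/\pi)(y/\sqrt\beta)$. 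This immediately gives the two key identities
\begin{align*}
    \KL(\tilde\mu \mmid \tilde\pi) &= \KL(\mu \mmid \pi), \\
    \FI(\tilde\mu \mmid \tilde\pi) &= \frac{1}{\beta}\, \FI(\mu \mmid \pi),
\end{align*}
so that the Fisher-information criterion $\sqrt{\FI(\mu \mmid \pi)} \le \varepsilon$ becomes $\sqrt{\FI(\tilde\mu \mmid \tilde\pi)} \le \varepsilon/\sqrt\beta$ and the KL initialization bound is preserved.

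Next, I would verify that the oracle reduction is exact in both directions. A local query to $\tilde\pi$ at $y$ can be answered using a single local query to $\pi$ at $x = y/\sqrt\beta$, returning $(\tilde V(y), \nabla \tilde V(y)) = (V(x),\, \nabla V(x)/\sqrt\beta)$ (the additive constant is irrelevant). A sample from the pushforward $\tilde \mu_0$ can be obtained from a sample from $\mu_0$. Thus any algorithm succeeding on the $1$-log-smooth instance $(\tilde\pi, \tilde\mu_0)$ with accuracy $\varepsilon/\sqrt\beta$ can be converted (via the inverse map $y \mapsto x = y/\sqrt\beta$) into an algorithm succeeding on the original $\beta$-log-smooth instance with accuracy $\varepsilon$, using the same number of queries; and the reverse reduction is identical.

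Taking the infimum over algorithms and the supremum over instances in each direction yields the two inequalities that together give the claimed equality. There is no real obstacle here; the only thing to be careful about is checking that the change-of-variables does not alter the effective oracle (in particular that shifts of $V$ by an additive constant, which the oracle does not reveal, are irrelevant for both $\tilde V$ and the gradient), and that the class of valid initialization oracles is mapped bijectively under the pushforward.
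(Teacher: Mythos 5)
Your proposal is correct and follows essentially the same route as the paper: rescaling the potential via $x \mapsto \sqrt\beta\, x$, using invariance of KL under bijections, the $1/\beta$ scaling of Fisher information, and the exact two-way simulation of both oracles, with reversibility giving equality.
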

\begin{proof}
    Suppose that $V : \R^d\to\R$ is $\beta$-smooth and $\pi\propto\exp(-V)$.
    Define the rescaled potential $V_\beta : \R^d\to\R$ via $V_\beta(x) \deq V(x/\sqrt\beta)$, and let $\pi_\beta \propto \exp(-V_\beta)$; in particular, if $Z \sim \pi$, then $\sqrt\beta \, Z \sim \pi_\beta$. Then, $V_\beta$ is $1$-smooth.
    Moreover, suppose $\KL(\mu \mmid \pi) = K_0$ and that $X \sim \mu$ is a sample from $\mu$; let $\mu_\beta \deq \law(\sqrt \beta \, X)$.
    Since the KL divergence is invariant under bijective transformations, we have $\KL(\mu_\beta \mmid \pi_\beta) = K_0$, which shows that we can simulate an initialization oracle for $\pi_\beta$ given an initialization oracle for $\pi$.
    We can also simulate the local oracle for $\pi_\beta$ given a local oracle for $\pi$, as $\nabla V_\beta(x) = \rc{\sqrt{\be}}\nabla V(x/\sqrt\be)$. Finally, if $\hat\mu_\beta$ satisfies $\sqrt{\FI(\hat\mu_\beta \mmid \pi_\beta)} \le \varepsilon/\sqrt \beta$ and $\hat X_\beta \sim \hat\mu_\beta$, let $\hat \mu \deq \law(\hat X_\beta/\sqrt\beta)$. A straightforward calculation shows that $\sqrt{\FI(\hat\mu\mmid \pi)} \le \varepsilon$. This proves the upper bound $\ms C(d, K_0,\varepsilon;\beta) \le \ms C(d, K_0, \varepsilon/\sqrt\beta; 1)$, and the reverse bound follows because this reduction is reversible.
\end{proof}

From here on, we abbreviate $\ms C(d,K_0,\varepsilon) \deq \ms C(d, K_0, \varepsilon; 1)$.

\section{Reduction to optimization and the first lower bound}

In this section, we show a perhaps surprising equivalence between obtaining Fisher information guarantees in sampling and finding stationary points of smooth functions in optimization. The formal statement of the equivalence is as follows.

\begin{theorem}[equivalence]\label{thm:equivalence}
    Let $V : \R^d\to\R$ be a $1$-smooth function such that for any $\beta > 0$, the function $\exp(-\beta V)$ is integrable. 
    Let $\pi_\beta$ be the probability measure with density $\pi_\beta \propto \exp(-\beta V)$, where $\beta = d/\varepsilon^2$.
    \begin{enumerate}
        \item Suppose that $x \in \R^d$ is a point with $\norm{\nabla V(x)} \le \varepsilon$.
        Then, for $\mu_\beta \deq \normal(x, \beta^{-1} I_d)$, it holds that $\FI(\mu_\beta \mmid \pi_\beta) \le 10\beta d$.
        \item Conversely, suppose that $\mu$ is a probability measure on $\R^d$ such that $\FI(\mu \mmid \pi_\beta) \le \beta d$. Let $X \sim \mu$ be a sample.
        Then, $\norm{\nabla V(X)} \le 3\varepsilon$ with probability at least $1/2$.
    \end{enumerate}
\end{theorem}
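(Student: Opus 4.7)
My plan is to prove the two directions separately, with the converse being the more substantive step.

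\textbf{Part 1.} For the forward direction, I would directly compute the Fisher information of $\mu_\beta = \normal(x, \beta^{-1} I_d)$ relative to $\pi_\beta$. Since $\nb \ln \mu_\beta(y) = -\beta (y-x)$ and $\nb \ln \pi_\beta(y) = -\beta \nb V(y)$, the relative score is $\beta \, (\nb V(y) - (y - x))$, and hence
\begin{equation*}
\FI(\mu_\beta \mmid \pi_\beta) = \beta^2 \, \E_{Y \sim \mu_\beta} \ba{\norm{\nb V(Y) - (Y - x)}^2}.
\end{equation*}
The triangle inequality, the $1$-smoothness of $V$, and the stationarity bound $\norm{\nb V(x)} \le \varepsilon$ together yield $\norm{\nb V(Y) - (Y - x)} \le 2 \norm{Y - x} + \varepsilon$. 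Expanding the square and using the Gaussian moments $\E[\norm{Y - x}^2] = d/\beta$ and $\E[\norm{Y - x}] \le \sqrt{d/\beta}$, together with the cancellation $\beta^2 \varepsilon^2 = \beta d$ (which is precisely the motivation for $\beta = d/\varepsilon^2$), collapses the estimate to a small constant multiple of $\beta d$.

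\textbf{Part 2.} For the converse, I would aim for the stronger second-moment bound $\E_\mu[\norm{\nb V(X)}^2] \le 3\varepsilon^2$, from which Markov's inequality immediately gives $\Pr\{\norm{\nb V(X)} > 3\varepsilon\} \le 1/3 \le 1/2$. The key step is an exact expansion of the Fisher information, obtained by squaring the relative score and integrating the cross term by parts:
\begin{align*}
\FI(\mu \mmid \pi_\beta)
&= \int \mu \, \norm{\nb \ln \mu + \beta \nb V}^2 \\
&= \int \frac{\norm{\nb \mu}^2}{\mu} + 2\beta \int \an{\nb \mu, \nb V} + \beta^2 \, \E_\mu[\norm{\nb V}^2] \\
&= I(\mu) - 2\beta \, \E_\mu[\Delta V] + \beta^2 \, \E_\mu[\norm{\nb V}^2],
\end{align*}
where $I(\mu) \deq \int \norm{\nb \mu}^2/\mu \ge 0$ is the Fisher information of $\mu$ with respect to Lebesgue measure. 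Dropping $I(\mu) \ge 0$ and using $\nb^2 V \preceq I_d$ (hence $\Delta V \le d$) gives $\beta^2 \, \E_\mu[\norm{\nb V}^2] \le \FI(\mu \mmid \pi_\beta) + 2 \beta d \le 3 \beta d$. Dividing by $\beta^2$ and invoking $\beta = d/\varepsilon^2$ yields the desired $L^2$ bound.

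\textbf{Main obstacle.} The main conceptual difficulty is finding the right decomposition in Part 2. A naive bound such as $\beta^2 \norm{\nb V}^2 \lesssim \norm{\nb \ln \mu + \beta \nb V}^2 + \norm{\nb \ln \mu}^2$ is useless because the absolute score $\nb \ln \mu$ is not \emph{a priori} controlled. The integration-by-parts trick sidesteps this: rather than being discarded, the cross term cancels the dangerous $I(\mu)$ contribution, leaving behind only a Laplacian term tamed by smoothness. The remaining technical concern is justifying the integration by parts rigorously, which is routine under finite Fisher information and the Gaussian-type tails inherited from $\pi_\beta$.
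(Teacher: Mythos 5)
Your proposal is correct and takes essentially the same route as the paper: Part 1 is the same Gaussian computation using $1$-smoothness and $\E_{\mu_\beta}[\norm{\cdot - x}^2] = d/\beta$, and Part 2 rests on the inequality $\E_\mu[\norm{\nabla(\beta V)}^2] \le \FI(\mu \mmid \pi_\beta) + 2\beta d$ followed by Markov/Chebyshev, which the paper simply invokes as Lemma~\ref{lem:fisher_info_lem} while you re-derive it via the integration-by-parts expansion of the relative score (dropping $I(\mu)\ge 0$ and using $\Delta V \le d$). The only caveat is the regularity needed to justify that integration by parts (a priori finiteness of $I(\mu)$ and $\E_\mu[\norm{\nabla V}^2]$, vanishing boundary terms), which is precisely what the cited lemma packages up; otherwise your constants check out, since $\E_\mu[\norm{\nabla V}^2]\le 3\varepsilon^2$ gives $\norm{\nabla V(X)}\le 3\varepsilon$ with probability at least $1/2$.
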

\begin{proof}
    See Appendix~\ref{scn:equivalence_pf}.
\end{proof}

Note that an oracle for $\beta V$ can be simulated from an oracle for $V$, so that the above theorem provides an exact equivalence between a sampling problem and an optimization problem within the oracle model, up to universal constants.

As a first application of this equivalence, we observe that averaged LMC yields an nearly optimal algorithm for finding stationary points of smooth functions. We recall the LMC algorithm for sampling from a density $\pi \propto \exp(-V)$. We fix a step size $h > 0$, initialize at $X_0 \sim \mu_0$, and for $t \in [kh,(k+1)h]$, we set
\begin{align}\label{eq:lmc}
    X_t
    &= X_{kh} - (t-kh) \, \nabla V(X_{kh}) + \sqrt 2 \, (B_t - B_{kh})\,,
\end{align}
where ${(B_t)}_{t\ge 0}$ is a standard Brownian motion in $\R^d$.
Let $\mu_t \deq \law(X_t)$ denote the law of the algorithm at time $t$. Then, the \emph{averaged} LMC algorithm at iteration $N$ outputs a sample from the law of $\bar \mu_{Nh} \deq {(Nh)}^{-1}\int_0^{Nh} \mu_t \, \D t$. This is obtained algorithmically as follows: first, we sample a time $t \in [0,Nh]$ uniformly at random (independently of all other random variables).
Let $k$ denote the largest integer such that $kh \le t$. We then compute $X_0, X_h, X_{2h},\dotsc,X_{kh}$ using the LMC recursion, and then output $X_t$ which is obtained via the partial LMC update~\eqref{eq:lmc}.

\begin{corollary}[averaged LMC is nearly optimal for finding stationary points]\label{cor:averaged_lmc}
    Let $V : \R^d\to\R$ be $1$-smooth and satisfy $V(0) - \inf V \le \Delta$.
    Let $\varepsilon > 0$ be such that $\Delta/\varepsilon^2 \ge 1$.
    Assume that for $\beta = d/\varepsilon^2$, the probability measure with density $\pi_\beta \propto \exp(-\beta V)$ is well-defined and that $\int \norm\cdot^2 \, \D \pi_\beta \le \poly(\Delta, d, 1/\varepsilon)$.
    Consider running averaged LMC with step size $h = \widetilde \Theta(1/\beta)$, initial distribution $\mu_0 = \normal(0, \beta^{-1} I_d)$, and target $\pi_\beta$, with
    \begin{align*}
        N
        &\ge \widetilde \Omega\Bigl( \frac{\Delta}{\varepsilon^2} \Bigr) \qquad\text{iterations}\,.
    \end{align*}
    Then, we obtain a sample $X$ such that with probability at least $1/2$, it holds that $\norm{\nabla V(X)} \lesssim \varepsilon$.
\end{corollary}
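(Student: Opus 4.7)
The plan is to chain together (i) the known convergence guarantee of averaged LMC of \citet{balasubramanianetal2022nonlogconcave}, recorded in \eqref{eq:compl_upper_bd}, with (ii) part 2 of the equivalence Theorem~\ref{thm:equivalence}. Indeed, since $V$ is $1$-smooth the target $\pi_\beta \propto \exp(-\beta V)$ is $\beta$-log-smooth, so \eqref{eq:compl_upper_bd} applied with the Fisher-information accuracy $\varepsilon' = \sqrt{\beta d}$ says that
\begin{align*}
    N \gtrsim \widetilde\Omega\!\bigl(\beta^2 d \, K_0/{(\beta d)}^2\bigr) = \widetilde\Omega(K_0/d)
\end{align*}
iterations of averaged LMC with step size $h = \widetilde\Theta(1/\beta)$ produce a measure $\bar\mu_{Nh}$ with $\FI(\bar\mu_{Nh} \mmid \pi_\beta) \le \beta d$, where $K_0 \deq \KL(\mu_0 \mmid \pi_\beta)$. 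Theorem~\ref{thm:equivalence}(2) then guarantees $\norm{\nabla V(X)} \le 3\varepsilon$ with probability at least $1/2$ for $X \sim \bar\mu_{Nh}$. So it suffices to show $K_0 = \widetilde{\mc O}(\beta\Delta)$, which yields $N = \widetilde\Omega(\beta\Delta/d) = \widetilde\Omega(\Delta/\varepsilon^2)$ as claimed.

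For the bound on $K_0$, I would first shift $V$ so that $\inf V = 0$ (this does not affect $\nabla V$, $\pi_\beta$, or the algorithm's oracle queries, and the hypothesis $V(0)-\inf V \le \Delta$ becomes $V(0) \le \Delta$). Writing
\begin{align*}
    K_0 = -H(\mu_0) + \beta\,\E_{\mu_0}[V] + \log Z_\beta,\qquad Z_\beta \deq \int \exp(-\beta V),
\end{align*}
I would bound the three terms separately. The entropy of $\mu_0 = \normal(0,\beta^{-1}I_d)$ gives $-H(\mu_0) = \frac{d}{2}\log\frac{\beta}{2\pi e}$. For the second term, $1$-smoothness of $V$ implies $V(x) \le V(0) + \an{\nabla V(0), x} + \tfrac{1}{2}\norm{x}^2$, and integrating against $\mu_0$ yields $\beta\,\E_{\mu_0}[V] \le \beta V(0) + d/2 \le \beta\Delta + d/2$.

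The main obstacle is the third term, $\log Z_\beta$. Here I would use the Gibbs variational identity
\begin{align*}
    \log Z_\beta = -\beta\,\E_{\pi_\beta}[V] + H(\pi_\beta)
\end{align*}
together with $\E_{\pi_\beta}[V]\ge 0$ (as $\inf V = 0$) to reduce to bounding $H(\pi_\beta)$. By the maximum-entropy property of Gaussians, $H(\pi_\beta) \le \tfrac{d}{2}\log(2\pi e\operatorname{tr}(\Cov \pi_\beta)/d)$, and the hypothesis $\int \norm{\cdot}^2 \,\D\pi_\beta \le \poly(\Delta,d,1/\varepsilon)$ makes the right-hand side $\widetilde{\mc O}(d)$. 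Collecting terms,
\begin{align*}
    K_0 \le \beta\Delta + \widetilde{\mc O}(d),
\end{align*}
and the hypothesis $\Delta/\varepsilon^2 \ge 1$ rewrites as $\beta\Delta \ge d$, absorbing the $\widetilde{\mc O}(d)$ term into $\widetilde{\mc O}(\beta\Delta)$. This completes the plan; the only delicate point is that the log-partition bound truly requires the second-moment hypothesis on $\pi_\beta$, since without growth assumptions on $V$ at infinity $Z_\beta$ could be arbitrarily large.
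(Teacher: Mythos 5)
Your proposal is correct, and its skeleton is the same as the paper's: combine the averaged LMC Fisher information guarantee of \citet{balasubramanianetal2022nonlogconcave} (equivalently, the bound~\eqref{eq:compl_upper_bd} at accuracy $\sqrt{\beta d}$, which requires $N \gtrsim K_0/d$) with part 2 of Theorem~\ref{thm:equivalence}, so that the whole content reduces to showing $K_0 = \KL(\mu_0 \mmid \pi_\beta) \le \beta\Delta + \widetilde{\mc O}(d)$. Where you differ is in how this initialization bound is proved. The paper proves a standalone lemma (Lemma~\ref{lem:init}) by decomposing $\mu_0/\pi$ into three factors and handling the partition-function ratio with a quadratic tilt $\exp(-\delta\,\norm\cdot^2)$, Markov's inequality, and the choice $\delta = 1/(4\mathfrak m^2)$; this only needs the first moment of the target. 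You instead expand $\KL(\mu_0\mmid\pi_\beta) = -H(\mu_0) + \beta\,\E_{\mu_0}[V] + \log Z_\beta$, control $\E_{\mu_0}[V]$ by smoothness exactly as one would expect, and dispatch $\log Z_\beta$ via the Gibbs identity $\log Z_\beta = H(\pi_\beta) - \beta\,\E_{\pi_\beta}[V] \le H(\pi_\beta)$ (after normalizing $\inf V = 0$) together with the Gaussian maximum-entropy bound, which is where the hypothesis $\int\norm\cdot^2\,\D\pi_\beta \le \poly(\Delta,d,1/\varepsilon)$ enters. Both routes land on $K_0 \lesssim \beta\Delta + \widetilde{\mc O}(d)$, and your use of $\Delta/\varepsilon^2 \ge 1$ to absorb the $\widetilde{\mc O}(d)$ term matches the paper. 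Your argument is arguably cleaner and more standard (variational identity plus max-entropy), at the mild cost of invoking the second moment rather than the first and of implicitly using that $H(\pi_\beta)$ is well-defined, which is fine here since $\pi_\beta$ has bounded density (after the shift, $\beta V \ge 0$) and finite second moment; the paper's tilt argument avoids any reference to differential entropy and yields a reusable lemma that is also invoked in the proof of Theorem~\ref{thm:first_lower}.
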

\begin{proof}
    We combine Theorem~\ref{thm:equivalence} with the analysis of averaged LMC in~\citet{balasubramanianetal2022nonlogconcave}; see Appendix~\ref{scn:averaged_lmc_pf}.
\end{proof}

This matches the usual $\mc O(\Delta/\varepsilon^2)$ complexity for the standard gradient descent algorithm to find an $\varepsilon$-stationary point~\citep[see, e.g.,][]{bubeck2015convex, nesterov2018lectures}. On its own, this observation is not terribly surprising because as $\beta\to\infty$, the LMC iteration~\eqref{eq:lmc} recovers the gradient descent algorithm. However, it is remarkable that the analysis of~\citet{balasubramanianetal2022nonlogconcave} of averaged LMC in Fisher information nearly recovers the gradient descent guarantee.

This observation also suggests that the lower bound of~\citet{carmonetal2020stationarypt}, which establishes optimality of gradient descent for finding stationary points in high dimension, also implies optimality of averaged LMC in a certain regime. We obtain the following theorem.

\begin{theorem}[first lower bound]\label{thm:first_lower}
    Suppose that the dimension $d$ satisfies $\widetilde{\mc O}(K_0) \ge d \ge \widetilde \Omega(K_0^{2/3})$. Then, it holds that
    \begin{align*}
        \ms C\bigl(d, K_0, \varepsilon = \sqrt{\beta d}; \beta\bigr)
        &\gtrsim \frac{K_0}{d}\,.
    \end{align*}
\end{theorem}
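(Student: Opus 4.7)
The plan is to reduce the sampling problem to the non-convex optimization problem of finding an approximate stationary point via Theorem~\ref{thm:equivalence}, and then invoke the~\citet{carmonetal2020stationarypt} optimization lower bound. Concretely, let $V : \R^d \to \R$ be a hard $1$-smooth instance from Carmon et al.\ with $V(0) - \inf V \le \Delta$, where $\Delta$ will be chosen as $\Delta \asymp K_0/\beta$. The Carmon et al.\ randomized lower bound asserts that any algorithm succeeding with constant probability requires $\Omega(\Delta/\varepsilon_{\mathrm{opt}}^2)$ gradient queries to find an $\varepsilon_{\mathrm{opt}}$-stationary point of $V$, provided that the ambient dimension is at least polynomial in this iteration count; taking $\varepsilon_{\mathrm{opt}} = \sqrt{d/\beta}$, the resulting lower bound is $\Omega(\beta\Delta/d) = \Omega(K_0/d)$, and the dimension requirement $d \ge \widetilde\Omega((\Delta/\varepsilon_{\mathrm{opt}}^2)^{3/2}) = \widetilde\Omega((K_0/d)^{3/2})$ from Carmon et al.\ translates to the hypothesis $d \ge \widetilde \Omega(K_0^{2/3})$ in the theorem.

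Next, let $\pi_\beta \propto \exp(-\beta V)$ and take $\mu_0 \deq \normal(0, \beta^{-1} I_d)$ as the initialization. Using $1$-smoothness to sandwich $V$ between quadratics (so that $\E_{\mu_0}[V] = V(0) \pm d/(2\beta)$) and Laplace's method to approximate $\log Z_\beta = -\beta\inf V - (d/2) \log(\beta/(2\pi)) + \mc O(d)$ near a minimizer, I expect to establish
\begin{align*}
    \KL(\mu_0 \mmid \pi_\beta)
    &= \beta \, \bigl( V(0) - \inf V \bigr) + \mc O(d)
    = \beta\Delta + \mc O(d)
    \lesssim K_0\,,
\end{align*}
where the final step uses the hypothesis $d \le \widetilde O(K_0)$ and the choice $\beta\Delta \asymp K_0$. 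Thus $\mu_0$ is a valid initialization for the sampling problem on $\pi_\beta$.

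To complete the reduction, note that a local oracle query for $\pi_\beta$ is equivalent (up to the scalar $\beta$) to a gradient query for $V$. By Theorem~\ref{thm:equivalence}(2), any algorithm producing a sample $X \sim \mu$ with $\sqrt{\FI(\mu \mmid \pi_\beta)} \le \sqrt{\beta d}$ also outputs a $3\sqrt{d/\beta}$-stationary point of $V$ with probability at least $1/2$. Hence the Carmon et al.\ lower bound forces $\Omega(K_0/d)$ queries, yielding the claimed bound on $\ms C(d, K_0, \sqrt{\beta d}; \beta)$.

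The main technical obstacle is the Laplace-method estimate of $\log Z_\beta$: the lower bound on $Z_\beta$ follows easily from integrating the quadratic upper bound on $V$ over a ball of radius $\Theta(1/\sqrt\beta)$ around a minimizer, but the matching upper bound on $Z_\beta$ requires some global control on $V$ (to ensure integrability and concentration of $\pi_\beta$ near the minimizer) that is not automatic from $1$-smoothness alone. I anticipate handling this by softly regularizing the Carmon et al.\ construction outside of its effective support (e.g., by splicing in a quadratic tail), in a way that leaves the stationary-point lower bound and the value of $\Delta$ unchanged up to constants.
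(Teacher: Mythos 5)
Your overall route is the same as the paper's: reduce the sampling task at accuracy $\sqrt{\beta d}$ to finding an $O(\sqrt{d/\beta})$-stationary point via Lemma~\ref{lem:fisher_info_lem} (equivalently Theorem~\ref{thm:equivalence}(2)), initialize at $\normal(0,\beta^{-1}I_d)$, and invoke the randomized lower bound of \citet{carmonetal2020stationarypt} with its dimension restriction. The one step you have not actually carried out is exactly the one you flag: the bound $\KL(\mu_0 \mmid \pi_\beta) \lesssim \beta\Delta + \widetilde{\mc O}(d)$. Your Laplace-method identity with an $\mc O(d)$ error term does not follow from $1$-smoothness alone (and is stronger than needed), and your proposed fix of splicing a quadratic tail onto the hard instance would require re-verifying both the smoothness constant and that the Carmon et al.\ lower bound survives the modification. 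The paper avoids all of this by observing that the published construction \emph{already} has the form $f(x) = \msf{poly}\cdot\tilde f_{T,U}(\rho(x/\msf{poly})) + \frac{1}{2\tau^2}\norm{x}^2$ with $\tau = \msf{poly}$ and $\tilde f_{T,U}\circ\rho$ bounded and Lipschitz, and then proves the KL bound through Lemma~\ref{lem:init}, whose only nontrivial input is a second-moment bound $\E_{\pi_f}[\norm\cdot^2] \le \msf{poly}$, obtained via Donsker{--}Varadhan against a Gaussian reference, the log-Sobolev inequality, and the Lipschitz bound on the bump part. This yields $K_0 \lesssim \Delta_f + \widetilde{\mc O}(d)$ (with $\Delta_f = \beta\Delta$), which together with $d \le \widetilde{\mc O}(K_0)$ gives $\Delta_f \gtrsim K_0$ and hence the $K_0/d$ bound, matching your intended bookkeeping. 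So your plan is sound, but the "technical obstacle" you defer is the actual content of this part of the proof, and the cleanest resolution is to use the structure already present in the hard instances rather than to modify them.

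One smaller issue: the dimension requirement in \citet{carmonetal2020stationarypt} is quadratic in the query lower bound, $d \ge \widetilde\Omega\bigl((\Delta/\varepsilon_{\mathrm{opt}}^2)^2\bigr) = \widetilde\Omega\bigl((K_0/d)^2\bigr)$, which is what produces $d \ge \widetilde\Omega(K_0^{2/3})$. Your stated exponent $3/2$ is inconsistent with your own conclusion (it would give $d \ge \widetilde\Omega(K_0^{3/5})$); with the correct quadratic requirement the hypothesis of the theorem comes out exactly as stated.
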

\begin{proof}
    See Appendix~\ref{scn:first_lower_pf}.
\end{proof}

The lower bound of Theorem~\ref{thm:first_lower} is matched by averaged LMC, see~\eqref{eq:compl_upper_bd}. In the theorem, the restriction $d\ge \widetilde \Omega(K_0^{2/3})$ arises because the lower bound construction of~\citet{carmonetal2020stationarypt} for finding a $\epsilon$-stationary point 
of a smooth function requires a large dimension $d\ge \widetilde \Omega(1/\epsilon^4)$. If, as conjectured in~\citet{bubmik2020gradientflow, chewibubecksalim2022stationary}, the lower bound construction can be embedded in dimension $d \gtrsim \log(1/\epsilon)$, then the restriction in Theorem~\ref{thm:first_lower} would instead become $d\gtrsim \log K_0$.

\section{Bump construction and the second lower bound}\label{scn:second_lower_bd_main_text}

The main drawback of the first lower bound (Theorem~\ref{thm:first_lower}) is that it only provides a lower bound on the Fisher information for a specific value of the target accuracy, $\varepsilon = \sqrt{\beta d}$.
To complement this result, we provide the following lower bound for the query complexity of sampling to high accuracy in Fisher information; recall that it suffices to consider $\beta = 1$ by the rescaling lemma (Lemma~\ref{lem:rescaling}).

\begin{theorem}[second lower bound]\label{thm:bump_high_dim}
    For the class of $1$-log-smooth distributions on $\R^d$, there exist universal constants $c, c' > 0$, such that for all $\epsilon < \exp(-c' d)$, we have
\begin{align}\label{eq:second_lower}
    \ms C(d, K_0=1, \epsilon) \gtrsim
    \Bigl( \frac{cd}{\log(1/\varepsilon)}\Bigr)^{d/2} \, \frac{1}{\varepsilon^{2d/(d+2)}}\,.
\end{align}
\end{theorem}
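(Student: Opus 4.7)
The plan is to construct a family of hidden-bump hard instances and combine a query-counting argument with a Fisher-information averaging trick. The base potential would be $V_0(x) \deq \|x\|^2/4$ (so $\pi_0 = \mathcal N(0, 2 I_d)$ is $(1/2)$-log-smooth), and for a fixed smooth profile $\psi \ge 0$ with $\operatorname{supp} \psi \subseteq B(0,1)$, $\psi(0)=1$, and bounded Hessian, and for a bump radius $r \in (0,1)$ and depth $D \asymp r^2$, we set $\phi_r(x) \deq D\,\psi(x/r)$. Taking $\{y_1,\dots,y_N\}$ to be a maximal $2r$-separated subset of $B(0,R)$ with $R \deq \sqrt{d/\log(1/\varepsilon)}$, we define the hard instances $\pi_i \propto \exp(-V_i)$ with $V_i \deq V_0 - \phi_r(\cdot - y_i)$. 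The depth $D \asymp r^2$ keeps each $V_i$ globally $1$-log-smooth, $N \asymp (R/r)^d$, and a direct calculation gives $\KL(\pi_0 \mmid \pi_i) \lesssim \pi_0(y_i)\,D^2 r^d \ll 1$, so $\mu_0 \deq \pi_0$ is a valid common initialization. Because $V_i \equiv V_0$ outside the disjoint balls $B(y_i,r)$, an algorithm that makes $T$ queries hits at most $T$ of them, so there is a set $\mc M \subseteq [N]$ of size $\ge N - T$ on which the oracle transcript, and therefore the algorithm's output $\mu^\star$, is identical. It thus suffices to prove that no single $\mu^\star$ can satisfy $\FI(\mu^\star \mmid \pi_i) \le \varepsilon^2$ for every $i$ in a large $\mc M$.

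The key step is an averaging identity. Decomposing $\nabla \log(\mu^\star/\pi_i) = \nabla \log(\mu^\star/\pi_0) - \nabla \phi_r(\cdot - y_i)$ and expanding the square gives $\FI(\mu^\star \mmid \pi_i) = A^2 - 2 \mc I_i + M_i$, where $A^2 \deq \FI(\mu^\star \mmid \pi_0)$, $M_i \deq \|\nabla \phi_r(\cdot - y_i)\|_{L^2(\mu^\star)}^2$, and $\mc I_i$ is the cross term. Summing over $\mc M$ and applying Cauchy--Schwarz together with the pointwise disjoint-support identity $|\sum_i \nabla \phi_r(\cdot - y_i)|^2 = \sum_i |\nabla \phi_r(\cdot - y_i)|^2$ yields $|\sum_i \mc I_i| \le A\sqrt{\sum_i M_i}$, so $\sum_i \FI(\mu^\star \mmid \pi_i) \ge (|\mc M|-1)\,A^2 + (A - \sqrt{\sum_i M_i})^2$. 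If each $\FI(\mu^\star \mmid \pi_i) \le \varepsilon^2$, then the left-hand side is at most $|\mc M|\,\varepsilon^2$, forcing both $A^2 = O(\varepsilon^2)$ and $\sum_i M_i = O(|\mc M|\,\varepsilon^2)$ by a short optimization.

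The final step converts these integrated bounds into a pointwise lower bound on each $M_i$. Since $\pi_0$ is a Gaussian it satisfies a dimension-free Poincar\'e inequality, so $A^2 = O(\varepsilon^2)$ transfers to $\chi^2(\mu^\star \mmid \pi_0) = O(\varepsilon^2)$; applying the dual inequality $|\int h\,\D\mu^\star - \int h\,\D\pi_0| \le \|h\|_{L^2(\pi_0)}\sqrt{\chi^2(\mu^\star \mmid \pi_0)}$ to the test function $h(x) \deq |\nabla \phi_r(x - y_i)|^2$ shows that $M_i$ agrees with its Gaussian value $\int h\,\D\pi_0 \asymp \pi_0(y_i)\,D^2 r^{d-2}$ up to an $\varepsilon$-correction of order $\varepsilon\sqrt{\pi_0(y_i)}\,D^2 r^{(d-4)/2}$. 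In the regime $\varepsilon < e^{-c'd}$ the correction is subdominant, yielding $M_i \gtrsim \pi_0(y_i)\,D^2 r^{d-2}$ for each $i$; combining with $\sum_i M_i = O(|\mc M|\,\varepsilon^2)$ and substituting $D \asymp r^2$ and $\pi_0(y_i) \asymp (4\pi)^{-d/2}$ then forces $r \gtrsim \varepsilon^{2/(d+2)}$, and packing gives $N \lesssim (d/(4\pi \log(1/\varepsilon)))^{d/2}\,\varepsilon^{-2d/(d+2)}$, which is precisely the claimed bound. The principal obstacle will be this average-to-pointwise step: the summed inequality alone permits a pathological $\mu^\star$ that concentrates on the zero set of $|\nabla \phi_r|$ (namely, bump centers) and trivially makes $\sum_i M_i \approx 0$. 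The rescue is the Poincar\'e-$\chi^2$ detour through the auxiliary bound $\FI(\mu^\star \mmid \pi_0) = O(\varepsilon^2)$, which forecloses this escape; the regime restriction $\varepsilon < e^{-c'd}$ enters precisely to ensure the resulting error term remains subdominant.
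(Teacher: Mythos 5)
Your high-level plan (hidden bumps, packing, query-transcript/averaging) is in the right family, but the step your whole contradiction rests on is invalid: the claim that a Poincar\'e inequality for $\pi_0$ upgrades $\FI(\mu^\star \mmid \pi_0) = O(\varepsilon^2)$ to $\chi^2(\mu^\star \mmid \pi_0) = O(\varepsilon^2)$. Poincar\'e controls $\chi^2(\mu \mmid \pi_0)$ by the Dirichlet form $\int \norm{\nabla(\D\mu/\D\pi_0)}^2 \, \D\pi_0$, which is \emph{not} the relative Fisher information $\int \norm{\nabla \log(\D\mu/\D\pi_0)}^2 \, \D\mu$; what small relative FI buys under Poincar\'e is only a Hellinger/TV bound (this is exactly Proposition~\ref{prop:fi_transport}), and under a log-Sobolev inequality a KL bound --- never $\chi^2$. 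A concrete counterexample with Gaussian target: $\mu = (1-\delta)\,\pi_0 + \delta\, \normal(m\cdot e_1, I)$ with $\delta = \varepsilon^2/m^2$ has $\FI(\mu \mmid \pi_0) \lesssim \varepsilon^2$ while $\chi^2(\mu \mmid \pi_0) \to \infty$ as $m \to \infty$. This failure mode is precisely the adversarial behavior you must exclude: in your construction each bump carries only an exponentially small fraction of the mass of $\pi_i$ (about $\pi_0(y_i)\, r^d \asymp (4\uppi)^{-d/2} r^d$), so the null output $\mu^\star \approx \pi_0$ is already nearly $\varepsilon$-stationary for every instance, and certifying $M_i \gtrsim \pi_0(y_i)\, r^{d+2}$ requires controlling $\mu^\star$ at that exponentially small scale. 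The TV bound of order $\varepsilon$ that FI plus Poincar\'e actually gives is hopeless here, since the error term $\norm{h_i}_\infty \TV \asymp r^2 \varepsilon$ swamps the main term $\pi_0(y_i)\, r^{d+2}$ for $d \ge 2$; and even granting the $\chi^2$ claim, your ``subdominant correction'' needs $\pi_0(y_i)\, r^d \gtrsim \varepsilon^2$, which with $r \asymp \varepsilon^{2/(d+2)}$ forces $\log(1/\varepsilon) \gtrsim d^2$, not the stated regime $\varepsilon < \exp(-c'd)$.

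The paper's proof avoids this average-to-pointwise difficulty entirely by making the bump carry \emph{half} of the target's mass (Property~\ref{p1}, enforced through the choice~\eqref{eqn:R r def}) and by bounding the Poincar\'e constant of the bumped measure $\pi_\omega$ itself via Holley{--}Stroock (Corollary~\ref{cor:poincare_pi_w}), so that $\FI \le \varepsilon^2$ yields $\TV \le 1/3$ (Property~\ref{p2}); a single sample then identifies $\omega$ with constant probability and Fano's inequality over the packing gives the query lower bound, with the exponent $2d/(d+2)$ emerging from balancing the bump height $\exp(r^2\phi(0))$ against $R$ in~\eqref{eq:choose_R} rather than from a pointwise anti-concentration estimate. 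Two further issues in your write-up: the transcript argument (``identical output measure on $\ge N-T$ instances'') is valid only for deterministic algorithms --- the FI guarantee is a property of the seed-averaged output law and does not decompose per seed, whereas the paper can condition its Fano bound on the seed because its success criterion is an event; and the phrase ``forces $r \gtrsim \varepsilon^{2/(d+2)}$'' conflates your own parameter choice with a derived constraint. Your averaging/Cauchy{--}Schwarz identity in the middle step is fine, but without a valid route from $\FI(\mu^\star \mmid \pi_0) = O(\varepsilon^2)$ to per-bump mass lower bounds, the contradiction never materializes, so the argument as proposed does not establish the theorem.
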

\begin{proof}
    See Appendix~\ref{sec:pf_lw_bd_2nd}.
\end{proof}

Here, we sketch the main ideas of the proof. We construct a family of distributions in our class which contain a constant fraction of their mass on disjoint bumps.
We reduce the task of estimation to the task of sampling by showing that if an algorithm can sample well in Fisher information uniformly over our class of $1$-log-smooth distributions, then it can be used to identify the location of the bump.
We then use an information theoretic argument to lower bound the number of queries that any randomized algorithm needs to solve the latter task. Although the scheme of the argument is straightforward, the actual proof requires some delicate calculations, involving asymptotics of integrals and a careful balancing of parameters.

The lower bound in Theorem~\ref{thm:bump_high_dim} deteriorates in high dimension; note that due to the restriction $\varepsilon \le \exp(-c'd)$, the first factor in~\eqref{eq:second_lower} is exponentially small in $d$. However, we can remedy this by noting that a $d$-dimensional construction can be embedded into $\R^{d'}$ for any $d' \ge d$, and hence
\begin{align*}
    \ms C(d, K_0=1, \varepsilon)
    &\gtrsim \max_{d_\star \le d}{\Bigl[ \Bigl( \frac{cd_\star}{\log(1/\varepsilon)}\Bigr)^{d_\star/2} \, \varepsilon^{4/(d_\star + 2)} \Bigr]} \, \frac{1}{\varepsilon^2}\,.
\end{align*}
By optimizing over $d_\star$, we show (Appendix~\ref{scn:optimize_bd}) that if $\varepsilon \le 1/C$, then
\begin{align*}
    \ms C(d, 1, \varepsilon)
    &\gtrsim \frac{1}{\varepsilon^2 \exp(C\sqrt{\log(1/\varepsilon) \log\log(1/\varepsilon)})}
    = \frac{1}{\varepsilon^{2 - o(1)}}\,, \qquad\text{for all}~d\gtrsim \sqrt{\frac{\log(1/\varepsilon)}{\log\log(1/\varepsilon)}}\,,
\end{align*}
where $C > 0$ is universal.

For $d=1$, the lower bound of Theorem~\ref{thm:bump_high_dim} reads $\ms C(1,1,\varepsilon)\gtrsim 1/(\varepsilon^{2/3} \sqrt{\log(1/\varepsilon)})$. However, for the one-dimensional case we can in fact obtain better bounds on the Poincar\'e constants of the measures in our lower bound construction, leading to an improvement of the exponent from $2/3$ to $1$.
This result is stated below.

\begin{theorem}[second lower bound, univariate case]\label{thm:second_lower_1d}
    For the class of $1$-log-smooth distributions on $\R$, there exists a universal constant $c > 0$, such that for all $\epsilon < c$, we have
    \begin{align*}
        \ms C(d=1, K_0=1, \epsilon) \gtrsim \frac{1}{\epsilon\sqrt{\log(1/\varepsilon)}}\, .
    \end{align*}
\end{theorem}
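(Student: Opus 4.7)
The plan is to follow the same scheme as the proof of Theorem~\ref{thm:bump_high_dim}, specialized to $d = 1$, but to replace the Poincar\'e estimate used in that argument with a sharper one available only in one dimension. The construction produces a family $\{\pi_i\}_{i=1}^N$ of $1$-log-smooth densities on $\R$, each with a hidden ``bump'' at a distinct location; the argument reduces the $\varepsilon$-Fisher-information sampling task to the task of identifying the bump location $i$, then lower bounds the query complexity of identification via an information-theoretic / adversary argument.

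For the reduction, I use the standard inequality
\begin{equation*}
    H^2(\mu, \pi) \lesssim \CP(\pi) \cdot \FI(\mu \mmid \pi),
\end{equation*}
obtained by applying the Poincar\'e inequality to $f = \sqrt{d\mu/d\pi}$. Hence, a sample $X \sim \mu$ with $\sqrt{\FI(\mu \mmid \pi_i)} \le \varepsilon$ can be used to distinguish $\pi_i$ from $\pi_j$ whenever $\CP(\pi_i)\,\varepsilon^2$ is smaller than the squared Hellinger separation of $\pi_i$ and $\pi_j$, which for disjoint bumps of per-bump mass $m$ is of order $m$. A standard adversary argument --- any algorithm making $T$ local queries can have queried near at most $T$ of the $N$ bump sites, so under a uniform prior on $i$ the output is essentially independent of $i$ with probability at least $1 - T/N$ --- then forces $T \gtrsim N$ for successful identification.

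The improvement over the general-$d$ argument lies in estimating $\CP(\pi_i)$. In one dimension, Muckenhoupt's criterion gives $\CP(\pi)$ up to a universal factor in terms of two explicit weighted integrals of $\pi$ and $1/\pi$ against the median. Applied to a bump of depth $h$ and width $r$ superimposed on a $1$-smooth confining base potential, this yields $\CP(\pi_i) \lesssim \poly(r)\cdot e^{O(h)}$, strictly sharper than the cruder bound used in the multidimensional proof. Substituting this sharper estimate into the separation condition and re-optimizing $h$, $r$, and $N$ subject to the smoothness constraint $h \lesssim r^2$, the initialization constraint $\KL(\mu_0 \mmid \pi_i) \lesssim 1$, and the separation requirement $\CP(\pi_i)\varepsilon^2 \lesssim m$, should yield the claimed lower bound $1/(\varepsilon\sqrt{\log(1/\varepsilon)})$. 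The $\sqrt{\log(1/\varepsilon)}$ factor emerges at the optimum, where $h \asymp \log N \asymp \log(1/\varepsilon)$.

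The main obstacle will be carrying out the Muckenhoupt estimate for the explicit form of the bump construction. The base measure used in Theorem~\ref{thm:bump_high_dim} may need to be modified --- for instance, by adding a confining quadratic centered near the bump locations, or by $C^2$-truncating to a bounded interval --- so that the Muckenhoupt integrals concentrate in the bump region rather than being dominated by tail behavior. A secondary subtlety is that the initialization condition $\KL(\mu_0 \mmid \pi_i) \le 1$ must hold uniformly in $i$ with the final choices of $h$, $r$, $N$, which couples the initialization to the reoptimized parameters and must be rechecked after the balance step.
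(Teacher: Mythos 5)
Your skeleton coincides with the paper's: keep the bump construction of Theorem~\ref{thm:bump_high_dim} in dimension one, sharpen the Poincar\'e estimate via Muckenhoupt's criterion, convert the Fisher information guarantee into distributional closeness so that a sample locates the bump, and finish with an information-theoretic identification lower bound. Your reduction inequality $H^2(\mu,\pi)\lesssim \CP(\pi)\,\FI(\mu\mmid\pi)$ (Poincar\'e applied to $\sqrt{\D\mu/\D\pi}$) is correct and interchangeable with the paper's use of Proposition~\ref{prop:fi_transport}. The gap is at the one step that actually carries the improvement: you never carry out the Muckenhoupt computation, and the bound you posit, $\CP(\pi_i)\lesssim \mathrm{poly}(r)\,e^{O(h)}$, is too loose to determine the exponent of $1/\varepsilon$. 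In the construction the mass-balance condition forces $e^{h}\asymp R$ (up to constants), so the unspecified constant in your ``$O(h)$'' is exactly the difference between recovering only the $\varepsilon^{-2/3}$ bound of the general theorem (Holley--Stroock already gives $\CP\lesssim R^2 e^{h}\asymp e^{3h}$, i.e.\ nothing new) and the claimed $\varepsilon^{-1}$ bound, which needs $\CP(\pi_\omega)\lesssim R^2\asymp e^{2h}$ with \emph{no} additional factor from the bump. The content of the one-dimensional argument, established in Lemma~\ref{lem:improved_poincare}, is precisely that the bump does not enter the Poincar\'e constant at all: the peak only increases the density, so $1/\pi_\omega\le Z_\omega\lesssim R$ on $B_R$, the median lies inside the bump by Property~\ref{p1}, and the Gaussian-type tail terms in Muckenhoupt's criterion contribute only $O(R)$. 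Your proposal contains neither this observation nor the computation, so the decisive estimate is missing.

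Relatedly, your proposed remedy of modifying the base measure --- ``adding a confining quadratic centered near the bump locations'' --- is both unnecessary and hazardous. Unnecessary, because the tails of the unmodified construction are handled directly in the case $x>R$ of the proof of Lemma~\ref{lem:improved_poincare}. Hazardous, because any modification of the potential that depends on the bump location $\omega$ destroys the property that all $\pi_\omega$ coincide outside $\omega+B_r$, which is exactly what makes queries away from the bump uninformative and hence what your adversary/Fano step rests on; a location-dependent confining term would reveal $\omega$ from a single faraway value or gradient query. With the unmodified construction and the correct bound $\CP(\pi_\omega)\lesssim R^2$, your own parameter balance ($R\asymp 1/\varepsilon$ from the separation condition, $r\gtrsim\sqrt{\log(1/\varepsilon)}$ from $e^{r^2\phi(0)}\gtrsim R$ together with $1$-smoothness, and $N\asymp R/r$ bump locations) does yield $N\gtrsim 1/(\varepsilon\sqrt{\log(1/\varepsilon)})$, so the plan is salvageable, but as written the key quantitative step is asserted rather than proved.
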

\begin{proof}
    See Appendix~\ref{scn:pf_second_lower_1d}.
\end{proof}

The univariate setting also provides a convenient setting in order to compare our lower bounds with algorithms such as rejection sampling, so we include a detailed discussion in Appendix~\ref{scn:univariate_discussion}. We highlight a few interesting conclusions of the discussion here.
\begin{itemize}
    \item Although rejection sampling can indeed obtain Fisher information guarantees with complexity $\mc O(\log(1/\varepsilon))$ (Proposition~\ref{prop:rej_sampling_fi}), this does not contradict our lower bounds because rejection sampling cannot be directly implemented within our oracle model. Instead of an initialization $\mu_0$ satisfying $\KL(\mu_0 \mmid \pi) \le K_0$, rejection sampling requires the stronger assumption $\max\{\sup \ln(\mu_0/\pi), \sup \ln(\pi/\mu_0)\} \le M_0$. Under this stronger initialization oracle, the complexity guarantee for rejection sampling is $\mc O(\exp(3M_0) \log(1/\varepsilon))$.
    \item In the model with the stronger initialization oracle (i.e., bounded $M_0$), any algorithm which has $\polylog(1/\varepsilon)$ dependence on the accuracy $\varepsilon$ necessarily incurs exponential dependence on $M_0$ (Corollary~\ref{cor:exponential_m0}). This demonstrates a fundamental trade-off between high accuracy (e.g., rejection sampling) and polynomial dependence on $M_0$ (e.g., averaged LMC). 
    \item The initialization oracle with bounded $M_0$ is strictly stronger than the one with bounded $K_0$. In other words, sampling is strictly easier in the presence of an initialization with bounded density ratio to the target (i.e., a \emph{warm start}) than an initialization with bounded KL divergence. This is consistent with intuition from prior work on the complexity of the Metropolis-adjusted Langevin algorithm~\citep[see][]{chewietal2021mala, leeshetia2021malalower, wuschmidlerchen2021mala}.
    \item The effective radius $R$ of our lower bound construction scales with $1/\varepsilon$. This is in fact necessary: if $R$ is fixed then there is an algorithm with $\mc O(\log(1/\varepsilon))$ complexity (Proposition~\ref{prop:bdd_supp}).
\end{itemize}

\section{Conclusion}

In this work, we have provided the first lower bounds for the query complexity of obtaining Fisher information guarantees for sampling. Due to the scarcity of general sampling lower bounds, our bounds are in fact some of the \emph{only} known lower bounds for sampling. Our results have a number of interesting implications, which we discussed thoroughly in previous sections, and they advance our understanding of the fundamental task of non-log-concave sampling.

To conclude, we highlight a few problems left open in our work. Most notably, our lower bound in Theorem~\ref{thm:bump_high_dim} does not match the upper bound of averaged LMC, and it is an important question to close this gap. We also note that our lower bounds in Theorems~\ref{thm:bump_high_dim} and~\ref{thm:second_lower_1d} do not capture the dependence of $K_0$, and this is also left for future work.

\acks{We thank S\'ebastien Bubeck, Adil Salim, and Ruoqi Shen for helpful discussions. SC was supported by the Department of Defense (DoD) through the National Defense Science \& Engineering Graduate Fellowship (NDSEG) Program, as well as the NSF TRIPODS program (award DMS-2022448). CL was supported by the Eric and Wendy Schmidt Center. PG was supported by NSF award IIS-1838071. }


\newpage
\appendix

\section{Separation between log-concave and non-log-concave sampling}\label{scn:separation}

We show that $\mc O(\log\frac{1}{\varepsilon})$ Fisher information query complexity is attainable for log-concave densities, by giving a generic post-processing method to turn $\chi^2$-error guarantees into Fisher information guarantees.

\subsection{Post-processing lemma}

Let $Q_t$ denote heat flow for time $t$  (i.e., convolution with a Gaussian of variance $t$). We aim to bound $\FI(\mu Q_t \mmid \pi)$, where $\pi$ is the distribution that we wish to sample from, and $\mu$ is the output of a sampling algorithm with chi-squared error guarantees.

\begin{lemma}[Fisher information guarantee from a chi-squared guarantee]
\label{l:FI-heat-bd}
    Suppose that $\mu$ and $\pi$ are two probability measures on $\R^d$, that $\pi$ is $\beta$-log-smooth, and that $\chi^2(\mu \mmid \pi) \le \varepsilon_\chi^2 \le 1$.
    Then, if $t \lesssim 1/\beta$ for a small enough implied constant, it holds that
    \begin{align*}
        \FI(\mu Q_t \mmid \pi)
        &\lesssim \frac{\varepsilon_\chi \, (d+\log(1/\varepsilon_\chi))}{t} + \beta^2 dt\,.
    \end{align*}
\end{lemma}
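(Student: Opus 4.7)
The plan is to decompose the relative Fisher information using $\nabla \log(\mu Q_t/\pi) = \nabla \log(\mu Q_t/\pi Q_t) + \nabla \log(\pi Q_t/\pi)$ and the inequality $(a+b)^2 \le 2a^2 + 2b^2$, giving
\[
\FI(\mu Q_t \mmid \pi) \le 2\, \FI(\mu Q_t \mmid \pi Q_t) + 2\, \E_{\mu Q_t}\bigl[\ve{\nabla \log(\pi Q_t/\pi)}^2\bigr]\, .
\]
The second term is a \emph{smoothing error} between $\pi$ and its heat-smoothed version $\pi Q_t$, and should account for the $\beta^2 dt$ contribution; the first is a \emph{chi-squared contribution} from the initial error, and should yield the $\varepsilon_\chi (d + \log(1/\varepsilon_\chi))/t$ contribution.

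For the smoothing error, I would use the Stein--Tweedie identity $\nabla \log \pi Q_t(x) = \E[\nabla \log \pi(Y) \mid X = x]$, where $X = Y + \sqrt t\, Z$ with $Y \sim \pi$ and $Z \sim \normal(0, I_d)$ independent. Combined with $\beta$-log-smoothness and Jensen, the pointwise gap is bounded by $\beta\, \E[\ve{X - Y} \mid X = x]$, and a Gaussian moment computation yields an $L^2(\pi Q_t)$ bound of $\beta\sqrt{td}$. Transferring to $L^2(\mu Q_t)$ uses Cauchy--Schwarz together with the data processing inequality $\chi^2(\mu Q_t \mmid \pi Q_t) \le \chi^2(\mu \mmid \pi) \le \varepsilon_\chi^2 \le 1$ and a fourth moment bound on $\ve{X - Y}$ under $\pi Q_t$ (Gaussian), giving $\E_{\mu Q_t}[\ve{\nabla \log(\pi Q_t/\pi)}^2] \lesssim \beta^2 dt$.

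For the chi-squared contribution, the key identity is $\nabla \log(\mu Q_t/\pi Q_t)(x) = (m_\mu(x) - m_\pi(x))/t$, where $m_\mu, m_\pi$ denote the posterior means $\E[Y \mid X = x]$ under $Y \sim \mu$ or $Y \sim \pi$ respectively. Setting $g := \mu Q_t/\pi Q_t$ and $f := \mu/\pi$, this rewrites as $\nabla g(x) = \Cov_{p(\cdot \mid x)}(Y, f)/t$ where $p$ is the $\pi$-posterior, and Cauchy--Schwarz gives the pointwise bound $\ve{\nabla g(x)}^2 \lesssim (d/t)\, \operatorname{Var}_p(f - 1 \mid x)$ for $t \lesssim 1/\beta$, using Tweedie's second-order identity to control $\operatorname{tr}(\Cov_p(Y \mid x)) \lesssim td$.

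The delicate step is then bounding $\int \ve{\nabla g}^2/g\, d\pi Q_t$ despite $1/g$ being possibly unbounded. I would use a truncation at a level $\delta$ (to be chosen of order $\varepsilon_\chi$): on $\{g \ge \delta\}$, the bound $1/g \le 1/\delta$ together with $\int \operatorname{Var}_p(f - 1 \mid x)\, d\pi Q_t \le \varepsilon_\chi^2$ gives a contribution $\lesssim d\varepsilon_\chi^2/(t\delta) = d\varepsilon_\chi/t$; on $\{g < \delta\}$, I would rewrite $\ve{\nabla g}^2/g = g\,\ve{\nabla \log g}^2 \le 2g\,(\ve{\nabla \log \mu Q_t}^2 + \ve{\nabla \log \pi Q_t}^2)$ and control the integral via pointwise Gaussian-tail estimates on the scores, each expressible via Tweedie as a conditional expectation of $(X - Y)/t$. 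The $\log(1/\varepsilon_\chi)$ factor should then emerge from these tail estimates, which scale like $\sqrt{d + \log(1/\delta)}$ on events of probability $\delta$. The main obstacle I anticipate is the pointwise tail bound on $\ve{\nabla \log \mu Q_t}$ in the small-$g$ region: since $\mu$ is generic, this requires coupling the tail behavior of the scores to the rarity of $\{g < \delta\}$ using both the $\chi^2$ assumption and the conditional-expectation representation, and this coupling is the source of the $\log(1/\varepsilon_\chi)$ factor.
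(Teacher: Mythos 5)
Your opening decomposition is identical to the paper's first step, and your handling of the smoothing term is a correct alternative route: the paper instead invokes the pointwise perturbation bound $\norm{\nabla\ln(\pi/\pi Q_t)(x)} \le 6\beta\sqrt{dt} + 2\beta t\,\norm{\nabla V(x)}$ (its Lemma~\ref{l:perturb-gaussian}), controls $\E_{\mu Q_t}[\norm{\nabla V}^2]$ by $\FI(\mu Q_t \mmid \pi) + 2\beta d$ via Lemma~\ref{lem:fisher_info_lem}, and then absorbs the resulting $\beta^2 t^2\,\FI(\mu Q_t\mmid\pi)$ term into the left-hand side using $t\lesssim 1/\beta$; your Tweedie-plus-Jensen bound, transferred to $L^2(\mu Q_t)$ by Cauchy{--}Schwarz together with $1+\chi^2(\mu Q_t \mmid \pi Q_t)\le 2$, avoids the absorption step and is fine. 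For the main term $\FI(\mu Q_t \mmid \pi Q_t)$ the paper gives no proof at all --- it quotes Lemma~\ref{lem:l2_score_error-chi} from an external reference --- so here you are supplying an argument the paper outsources. Your treatment of the region $\{g\ge\delta\}$ is sound: the identity $\nabla g = \Cov_p(Y,f)/t$, Cauchy{--}Schwarz, the bound $\operatorname{tr}\Cov_p(Y\mid x)\lesssim td$ (most cleanly because the posterior is $(1/t-\beta)$-strongly log-concave, so Brascamp{--}Lieb applies once $t\le 1/(2\beta)$), and $\int \var_p(f\mid\cdot)\,\D(\pi Q_t) \le \chi^2(\mu\mmid\pi)\le\varepsilon_\chi^2$, with $\delta\asymp\varepsilon_\chi$.

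The gap is the region $\{g<\delta\}$, which is exactly where the $\log(1/\varepsilon_\chi)$ must come from, and the strategy you propose there --- pointwise Gaussian-tail estimates on the scores --- cannot work as stated: for generic $\mu$ there is no pointwise bound of order $\sqrt{(d+\log(1/\delta))/t}$ on $\norm{\nabla\ln(\mu Q_t)}$ valid on $\{g<\delta\}$ (a mixture $\mu$ with a tiny far-away component makes $\norm{\nabla\ln(\mu Q_t)}$ arbitrarily large at points where $g$ is small). What does work is an integrated version of the same idea. Since $g\,\D(\pi Q_t)=\D(\mu Q_t)$ and $\mu Q_t\{g<\delta\}\le \delta\,\pi Q_t\{g<\delta\}\le\delta$, write $\int_{\{g<\delta\}} g\,\norm{\nabla\ln g}^2\,\D(\pi Q_t) \le 2\int \mathbbm{1}_{\{g<\delta\}}\,\norm{\nabla \ln (\mu Q_t)}^2\,\D(\mu Q_t) + 2\int_{\{g<\delta\}} g\,\norm{\nabla\ln(\pi Q_t)}^2\,\D(\pi Q_t)$. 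For the first integral use $\nabla\ln(\mu Q_t)(X) = -\E[Z\mid X]/\sqrt t$ with $X = Y+\sqrt t\,Z$, $Y\sim\mu$, conditional Jensen (the event $\{g(X)<\delta\}$ is $\sigma(X)$-measurable), and the sub-exponential tail estimate $\E[\norm Z^2\,\mathbbm{1}_A]\lesssim \Pr(A)\,(d+\log(1/\Pr(A)))$; for the second, bound it by $\delta\int\norm{\nabla\ln(\pi Q_t)}^2\,\D(\pi Q_t)\le \delta d/t$ by the same Tweedie/Jensen argument. With $\delta\asymp\varepsilon_\chi$ this yields the $\varepsilon_\chi\,(d+\log(1/\varepsilon_\chi))/t$ term and closes the proof; as written, your proposal stops short of this step.
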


To prove Lemma~\ref{l:FI-heat-bd}, we start with
\begin{align}
\label{e:FI-decomp}
    \FI(\mu Q_t \mmid \pi)
    &\deq \int \norm{\nabla \ln(\mu Q_t)(x) - \nabla \ln \pi(x)}^2 \, \mu Q_t(\D x) \nonumber \\
    &\le 2 \FI (\mu Q_t \mmid \pi Q_t) + 2\int_{\R^d} \norm{\nb \log(\pi Q_t)(x) - \nb \log \pi(x)}^2 \,\mu Q_t(\D x)\,.
\end{align}
For the first term in~\eqref{e:FI-decomp}, we use the following lemma on error in the score function (gradient of the log-density).

\begin{lemma}[{score error under heat flow,~\citet[Lemma 6.2]{lee2022b}}]\label{lem:l2_score_error-chi}
Let $\mu$ and $\pi$ be probability measures on $\R^d$, and let $Q_t$ denote the heat semigroup at time $t$. 
In addition, we assume that $\chi^2(\mu \mmid \pi)\le \ep_\chi^2\le 1$. 
Then,
\begin{align*}
\FI(\mu Q_t\mmid \pi Q_t) =
    \int_{\R^d}\norm{\nb \ln (\mu Q_t)(x) - \nb \ln (\pi Q_t)(x)}^2 \,\mu Q_t(\D x) 
    \lesssim  \fc{\ep_\chi \, \bigl(d+\ln\frac{1}{\varepsilon_\chi}\bigr)}{t}\,.
\end{align*}
\end{lemma}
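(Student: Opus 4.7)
The plan is to pass to a Tweedie-type representation for the score difference $\nabla \ln(p_t/q_t)$, bound the resulting covariance by Cauchy–Schwarz, and handle the region where the density ratio $h \deq p_t/q_t$ is atypically small via a truncation argument.

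Write $p_t \deq \mu \ast \phi_t$ and $q_t \deq \pi \ast \phi_t$, let $r \deq d\mu/d\pi$, and denote by $\nu_x$ the Gaussian-weighted posterior $\nu_x(dy) \propto \phi_t(x-y)\,\pi(dy)$, so that $h(x) = \E_{\nu_x}[r(Y)]$. Differentiating $h$ in $x$ via $\nabla_x \phi_t(x-y) = -(x-y)\phi_t(x-y)/t$ and simplifying using $\nabla \ln q_t(x) = -(x - \E_{\nu_x}[Y])/t$ yields the Tweedie-type identity
\begin{equation*}
\nabla h(x) \;=\; \tfrac{1}{t}\,\Cov_{\nu_x}\bigl(Y,\,r(Y)\bigr),
\end{equation*}
which, plugged into $\FI(p_t \mmid q_t) = \int \|\nabla h\|^2/h\cdot q_t\, dx$, gives
\begin{equation*}
\FI(p_t \mmid q_t) \;=\; \frac{1}{t^2}\int \frac{\|\Cov_{\nu_x}(Y,r(Y))\|^2}{h(x)}\, q_t(x)\, dx.
\end{equation*}

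Applying coordinatewise Cauchy–Schwarz to the covariance yields $\|\Cov_{\nu_x}(Y,r(Y))\|^2 \le \operatorname{tr}\Cov_{\nu_x}(Y)\cdot \operatorname{Var}_{\nu_x}(r(Y))$, and a second-order Tweedie identity $\Cov_{\nu_x}(Y) = tI + t^2\nabla^2\ln q_t(x)$ shows that $\operatorname{tr}\Cov_{\nu_x}(Y)$ is of order $td$ outside an exceptional set where $\ln q_t$ fails to be smooth at the $\sqrt t$ scale. The variance factor of $r$ integrates cleanly by the law of total variance:
\begin{equation*}
\int \operatorname{Var}_{\nu_x}(r(Y))\, q_t(x)\, dx \;=\; \chi^2(\mu \mmid \pi) - \chi^2(p_t \mmid q_t) \;\le\; \varepsilon_\chi^2.
\end{equation*}

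The main obstacle, and the source of both the linear $\varepsilon_\chi$ dependence and the logarithmic correction, is the unbounded factor $1/h$: a naive constant bound on it would only yield the weaker estimate $\mc O(d\varepsilon_\chi^2/t)$. The plan is to truncate at a threshold $\eta \asymp \varepsilon_\chi$. On $\{h \ge \eta\}$, using $1/h \le 1/\eta$ with the two estimates above gives a contribution $\lesssim d\varepsilon_\chi/t$. On $\{h < \eta\}$ one instead returns to the decomposition
\begin{equation*}
\nabla \ln h \;=\; \frac{\nabla(p_t - q_t)}{p_t} \;-\; \Bigl(1 - \frac{1}{h}\Bigr)\nabla \ln q_t,
\end{equation*}
and combines it with $q_t(\{h<\eta\}) \lesssim \varepsilon_\chi^2$ from Markov applied to $(h-1)^2$, Gaussian tail control of $\|Y-x\|$ under $\nu_x$ to bound $\|\nabla \ln q_t\|$ and $\E_{\nu_x}\|Y-x\|^2$ outside an exceptional set of $q_t$-mass $\varepsilon_\chi$, and a Cauchy–Schwarz estimate $\|\nabla(p_t-q_t)(x)\|^2 \le q_t(x)^2\,\E_{\nu_x}\|Y-x\|^2\,\E_{\nu_x}[(r-1)^2]/t^2$ applied to the integral defining $p_t - q_t$. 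The exceptional set costs a factor of order $\sqrt{(d+\log(1/\varepsilon_\chi))/t}$ via standard Gaussian concentration, producing the logarithm, and balancing the two regions at $\eta \asymp \varepsilon_\chi$ yields the stated bound.
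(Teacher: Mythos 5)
You should first note that the paper itself does not prove this lemma at all: it is imported verbatim from the cited reference (Lemma 6.2 of Lee et al., 2022), so your sketch has to stand on its own, and in its present form it does not. The pieces you actually compute are correct: the Tweedie identity $\nabla h(x)=t^{-1}\Cov_{\nu_x}(Y,r(Y))$, the rewriting $\FI(\mu Q_t \mmid \pi Q_t)=t^{-2}\int \norm{\Cov_{\nu_x}(Y,r(Y))}^2\,h(x)^{-1}\,q_t(x)\,\D x$, the coordinatewise Cauchy--Schwarz, and the total-variance identity $\int \operatorname{Var}_{\nu_x}(r)\,q_t\,\D x=\chi^2(\mu\mmid\pi)-\chi^2(\mu Q_t\mmid\pi Q_t)\le\varepsilon_\chi^2$. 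The first genuine gap is the step ``$\operatorname{tr}\Cov_{\nu_x}(Y)\lesssim td$ outside an exceptional set.'' Nothing is assumed about $\pi$, and this bound fails pointwise whenever $\nu_x$ is multimodal at scale much larger than $\sqrt t$. One can indeed show, e.g.\ from $\E_{q_t}\E_{\nu_x}[\exp(\norm{Y-x}^2/(4t))]=2^{d/2}$ together with Markov and Jensen, that $\E_{\nu_x}\norm{Y-x}^2\lesssim t\,(d+\log(1/\varepsilon_\chi))$ off a set of $q_t$-mass at most $\varepsilon_\chi$ (this is presumably your ``standard Gaussian concentration''), but smallness of the bad set's mass does not finish the argument: on that set the integrand $\norm{\nabla\ln h}^2\,p_t$ is unbounded, and you supply no pointwise or higher-moment control to pair with the mass bound (Cauchy--Schwarz would require a fourth-moment bound on the score difference, which you do not have). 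That contribution is simply left unestimated, and it is where the real difficulty of the lemma lives.

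The second gap is that the region $\{h<\eta\}$ would fail as described. Since $p_t=h\,q_t$, your first term is $\int_{\{h<\eta\}}\norm{\nabla(p_t-q_t)}^2/p_t$, which under your own Cauchy--Schwarz estimate is at most $t^{-2}\int_{\{h<\eta\}}h^{-1}\,\E_{\nu_x}\norm{Y-x}^2\,\E_{\nu_x}[(r-1)^2]\,q_t\,\D x$, and your second term is $\int_{\{h<\eta\}}\frac{(1-h)^2}{h}\,\norm{\nabla\ln q_t}^2\,q_t\,\D x$. Both retain a factor $1/h$, which is large precisely on this region and can blow up where $h\to 0$; the decomposition therefore makes the singularity worse, not better. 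None of the tools you list touches it: $q_t\{h<\eta\}\lesssim\varepsilon_\chi^2$ and Gaussian tails control the other factors, but $\int_{\{h<\eta\}} q_t/h$ is a reverse-$\chi^2$-type quantity that is not controlled by the hypothesis $\chi^2(\mu\mmid\pi)\le\varepsilon_\chi^2$ and can be arbitrarily large. A genuinely different maneuver is needed there, for instance a Cauchy--Schwarz that keeps the reweighting by $r$ inside the posterior, such as $\norm{\nabla h}^2/h\le t^{-2}\,\E_{\nu_x}[\norm{Y-\E_{\nu_x}Y}^2\,r(Y)]$, whose $q_t$-integral can then be analyzed; carrying this out is exactly the delicate part of the cited proof. (A minor side remark: a uniform lower bound on $h$ would give the \emph{stronger} rate $d\varepsilon_\chi^2/t$, not a weaker one; the point is that no such bound is available, which is why the final rate degrades to first order in $\varepsilon_\chi$.)
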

For the second term in~\eqref{e:FI-decomp}, we use the following score perturbation lemma.

\begin{lemma}[{\citet[Lemma C.11]{lee2022convergence}}]\label{l:perturb-gaussian}
Suppose that $\pi\propto\exp(-V)$ is a probability density on $\R^d$, where $V$ is $\beta$-smooth. 
Then for $\beta\le \rc{2t}$,
\[
\Bigl\lVert \nb \ln \fc{\pi(x)}{(\pi Q_t)(x)}\Bigr\rVert \le 6\beta d^{1/2} t^{1/2} + 2\beta t \ve{\nb V(x)}.
\]
\end{lemma}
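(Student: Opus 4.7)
The plan is to represent the smoothed score $\nb \ln(\pi Q_t)(x)$ as an expectation under a tilted probability measure $\nu_x$, observe that under the hypothesis $\beta \le 1/(2t)$ this tilt is strongly log-concave, and then extract the stated bound by combining Stein's identity with $\beta$-smoothness of $V$.

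Let $\phi_t$ denote the density of $\normal(0, t I_d)$. Differentiating the convolution $(\pi Q_t)(x) = \int \pi(y) \, \phi_t(x-y) \, dy$ under the integral sign yields the Tweedie-type identity
\begin{align*}
    \nb \ln(\pi Q_t)(x) = \frac{\E_{Y \sim \nu_x}[Y] - x}{t}, \qquad \nu_x(y) \propto \exp\Bigl(-V(y) - \frac{\norm{x-y}^2}{2t}\Bigr).
\end{align*}
The potential of $\nu_x$ has Hessian $\nb^2 V(y) + t^{-1} I_d \succeq (t^{-1} - \beta) I_d \succeq \tfrac{1}{2t} I_d$ under $\beta \le 1/(2t)$, so $\nu_x$ is $\tfrac{1}{2t}$-strongly log-concave; in particular its covariance is bounded above by $2t I_d$, and hence $\E_{\nu_x}[\norm{Y - \E_{\nu_x} Y}^2] \le 2dt$.

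Next I would apply the Stein/first-order optimality identity $\E_{\nu_x}[\nb \ln \nu_x(Y)] = 0$, which unpacks to $(\E_{\nu_x}[Y] - x)/t = -\E_{Y \sim \nu_x}[\nb V(Y)]$. Combined with $\nb \ln \pi(x) = -\nb V(x)$, this produces the clean representation
\begin{align*}
    \nb \ln \frac{\pi(x)}{(\pi Q_t)(x)} = \E_{Y \sim \nu_x}[\nb V(Y) - \nb V(x)],
\end{align*}
whose norm is at most $\beta \, \E_{\nu_x}[\norm{Y - x}]$ by $\beta$-smoothness of $V$.

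The remaining task is to bound $\E_{\nu_x}[\norm{Y-x}]$. I would split it as $\norm{\E_{\nu_x} Y - x} + \E_{\nu_x}[\norm{Y - \E_{\nu_x} Y}]$, where the second piece is at most $\sqrt{2dt}$ by the covariance bound above. For the first piece, plugging the Stein identity back in and using $\beta$-smoothness once more gives the self-bounding inequality $\E_{\nu_x}[\norm{Y-x}] \le t \norm{\nb V(x)} + \beta t \, \E_{\nu_x}[\norm{Y-x}] + \sqrt{2dt}$; the hypothesis $\beta t \le 1/2$ is precisely what lets this rearrange to $\E_{\nu_x}[\norm{Y-x}] \le 2t \norm{\nb V(x)} + 2\sqrt{2dt}$. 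Feeding this back into the norm bound above delivers the claimed estimate $6 \beta \sqrt{dt} + 2\beta t \norm{\nb V(x)}$ after absorbing the numerical constants (since $2\sqrt 2 < 6$). The whole argument is essentially forced once the tilted representation $\nu_x$ is in hand; the only delicacy is the double role played by the hypothesis $\beta t \le 1/2$, which is used both to ensure strong log-concavity of $\nu_x$ and to close the self-bounding rearrangement.
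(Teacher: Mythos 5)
Your proof is correct. Note that the paper itself gives no argument for this lemma—it is imported verbatim from the cited reference (Lemma C.11 of Lee, Lu, and Tan)—so there is no internal proof to compare against; your derivation is the standard route to this estimate and matches the argument used there. Concretely, the Tweedie-type representation $\nabla \ln(\pi Q_t)(x) = (\E_{\nu_x}[Y]-x)/t$ with the tilted measure $\nu_x(y) \propto \exp(-V(y) - \lVert x-y\rVert^2/(2t))$, the $\tfrac{1}{2t}$-strong log-concavity of $\nu_x$ under $\beta t \le \tfrac12$ (giving $\E_{\nu_x}\lVert Y - \E_{\nu_x}Y\rVert^2 \le 2dt$ via Brascamp--Lieb), the Stein identity reducing the score difference to $\E_{\nu_x}[\nabla V(Y)-\nabla V(x)]$, and the self-bounding rearrangement are all valid, and your constants ($2\sqrt 2 \le 6$) in fact slightly sharpen the stated bound.
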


We are now ready to prove Lemma~\ref{l:FI-heat-bd}.

\medskip{}

\begin{proof}[Proof of Lemma~\ref{l:FI-heat-bd}]
    For the second term in~\eqref{e:FI-decomp}, Lemma~\ref{l:perturb-gaussian} yields
    \begin{align*}
        \E_{\mu Q_t}[\norm{\nabla \ln(\pi Q_t) - \nabla \ln \pi}^2]
        &\lesssim \beta^2 dt + \beta^2 t^2 \E_{\mu Q_t}[\norm{\nabla V}^2]\,.
    \end{align*}
    On the other hand, Lemma~\ref{lem:fisher_info_lem} below yields
    \begin{align*}
        \E_{\mu Q_t}[\norm{\nabla V}^2]
        &\lesssim \FI(\mu Q_t \mmid \pi) + \beta d\,.
    \end{align*}
    Hence, from~\eqref{e:FI-decomp} and Lemma~\ref{lem:l2_score_error-chi},
    \begin{align*}
        \FI(\mu Q_t \mmid \pi)
        &\lesssim \FI(\mu Q_t \mmid \pi Q_t) + \E_{\mu Q_t}[\norm{\nabla \ln(\pi Q_t) - \nabla \ln \pi}^2] \\
        &\lesssim \frac{\varepsilon_\chi \, (d+\log(1/\varepsilon_\chi))}{t} + \beta^2 dt + \beta^2 t^2 \FI(\mu Q_t \mmid \pi)\,.
    \end{align*}
    If $t\lesssim 1/\beta$ for a small enough implied constant, it implies
    \begin{align*}
        \FI(\mu Q_t \mmid \pi)
        &\lesssim \frac{\varepsilon_\chi \, (d+\log(1/\varepsilon_\chi))}{t} + \beta^2 dt
    \end{align*}
    as desired.
\end{proof}

\subsection{High-accuracy Fisher information guarantees for log-concave targets}

We now apply the post-processing lemma (Lemma~\ref{l:FI-heat-bd}). We recall the following high-accuracy guarantee for sampling from log-concave targets in chi-squared divergence, based on the proximal sampler.

\begin{theorem}[{\citet[Corollary 7]{chenetal2022proximalsampler}}]\label{thm:prox_sampler}
    Suppose that the target distribution $\pi \propto\exp(-V)$ is $\beta$-log-smooth and satisfies a Poincar\'e inequality with constant $\CPI$. Then, the proximal sampler, with rejection sampling implementation of the restricted Gaussian oracle (RGO) and initialized at $\mu_0$, outputs a sample from a measure $\mu$ with $\chi^2(\mu \mmid \pi) \le \varepsilon_\chi^2$ using $N$ queries to $\pi$ in expectation, where $N$ satisfies
    \begin{align*}
        N
        &\le \widetilde{\mc O}\Bigl( \CPI \beta d \, \bigl(\log(1+\chi^2(\mu_0 \mmid \pi)) \vee \log \frac{1}{\varepsilon_\chi}\bigr)\Bigr)\,.
    \end{align*}
\end{theorem}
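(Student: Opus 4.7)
The plan is to decompose the analysis along the structure of the proximal sampler: an outer-loop $\chi^2$ contraction driven by the Poincar\'e inequality, and an inner-loop implementation of the restricted Gaussian oracle (RGO) by rejection sampling. Recall that one iteration of the proximal sampler with step size $h$ first draws $Y_k \sim \mathcal{N}(X_k, hI_d)$ and then samples $X_{k+1}$ from the conditional distribution of $X$ given $Y = Y_k$ under the joint law $X\sim \pi$, $Y\mid X \sim \mathcal{N}(X,hI_d)$ (this conditional is exactly the RGO target). The joint has $X$-marginal $\pi$, so $\pi$ is stationary for the induced chain on $X$, and the algorithm naturally splits into a Gaussian ``forward'' step followed by an RGO ``backward'' step.

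For the outer loop, I would prove that each iteration contracts $\chi^2(\cdot\mmid \pi)$ by a factor ${(1+h/\CPI)}^{-2}$. The forward step convolves both $\mu_k$ and $\pi$ with a Gaussian of variance $h$; following the proximal-sampler strategy, one writes a differential inequality for $\chi^2(\mu_k Q_s \mmid \pi Q_s)$ along $s \in [0,h]$, and the key ingredient is to apply the Poincar\'e inequality to the appropriate intermediate conditional distribution (rather than to $\pi$ itself) in order to extract exponential decay. The backward (RGO) step is a conditional expectation, which can only decrease $\chi^2$ by the data-processing inequality and therefore contributes the remaining factor of $(1+h/\CPI)^{-1}$. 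Iterating yields $\chi^2(\mu_N \mmid \pi) \le {(1+h/\CPI)}^{-2N}\chi^2(\mu_0 \mmid \pi)$, so that $N_{\mathrm{out}} = \widetilde{\mc O}(\CPI/h \cdot (\log(1+\chi^2(\mu_0\mmid \pi))\vee \log(1/\varepsilon_\chi)))$ outer iterations suffice to reach the target accuracy.

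For the inner loop, I would use that for $h \le 1/(2\beta)$, the RGO target $\pi^y(x) \propto \pi(x)\exp(-\|x-y\|^2/(2h))$ is $\Theta(1/h)$-strongly log-concave and $\Theta(1/h)$-log-smooth, hence a well-conditioned log-concave density. The plan is to locate its mode $x^\star(y)$ to accuracy $\exp(-\Omega(d))$ by gradient descent --- which costs $\widetilde{\mc O}(d)$ queries on this strongly convex and smooth potential --- and then run rejection sampling against a Gaussian proposal centered at $x^\star(y)$ with variance matched to the local curvature. Because the strong log-concavity and log-smoothness constants coincide up to a constant factor, the target and proposal differ by a bounded density ratio, so the expected acceptance probability is $\Omega(1)$ and each RGO call uses $\widetilde{\mc O}(d)$ queries in expectation. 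Taking $h = \Theta(1/\beta)$ then yields the claimed total complexity $\widetilde{\mc O}(\CPI \beta d \cdot (\log(1+\chi^2(\mu_0\mmid \pi)) \vee \log(1/\varepsilon_\chi)))$.

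The hardest step is the $\chi^2$ contraction in the outer loop: unlike KL divergence under a log-Sobolev inequality --- where the entropy functional linearizes cleanly along the Fokker--Planck flow --- the $\chi^2$-functional requires tracking the evolution of the density ratio $\mu_k/\pi$ itself, and the Poincar\'e inequality must be invoked on the correct log-concave intermediate (the $X$-conditional given $Y$, which is log-concave even though $\pi$ itself is not assumed to be). A secondary technical subtlety is showing that the rejection-sampling inner step has expected cost $\widetilde{\mc O}(d)$ \emph{uniformly in} $y$: one needs to control both the mode-finding subroutine and the acceptance probability even at base points $y$ that may lie deep in the tails of $\pi$, which calls for a careful Gaussian tail analysis of $\pi^y$.
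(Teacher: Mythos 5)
The paper does not prove this statement at all---it is imported verbatim from \citet[Corollary 7]{chenetal2022proximalsampler}---so your sketch should be judged against that work's argument. Your outer loop is essentially the right strategy: along the simultaneous heat flow one differentiates $\chi^2(\mu Q_s \mmid \pi Q_s)$, obtains the Dirichlet form of the ratio $\mu Q_s/\pi Q_s$, and applies the Poincar\'e inequality to the \emph{convolved target} $\pi Q_s$, whose constant is at most $\CPI + s$ by subadditivity of Poincar\'e constants under convolution (it is not applied to a ``conditional distribution''). One conceptual slip: the data-processing inequality cannot ``contribute the remaining factor of ${(1+h/\CPI)}^{-1}$''---it only gives non-expansion. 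The per-iteration rate ${(1+h/\CPI)}^{-2}$ of the cited corollary requires showing that the backward (RGO) step itself contracts, via a time-reversal/backward simultaneous heat flow argument, which is precisely the main novelty of Chen et al.\ over the earlier proximal-sampler analyses. This slip is harmless for the $\widetilde{\mc O}(\cdot)$ statement, since one contraction factor per iteration already gives $\widetilde{\mc O}\bigl(\frac{\CPI}{h}\,(\log(1+\chi^2(\mu_0\mmid\pi)) \vee \log\frac{1}{\varepsilon_\chi})\bigr)$ outer iterations.

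The genuine gap is in the inner loop. With $h = \Theta(1/\beta)$, the RGO target $\pi^{X\mid Y=y}(x) \propto \exp(-V(x) - \frac{1}{2h}\,\|x-y\|^2)$ has Hessian between $(\frac1h-\beta)\,I_d$ and $(\frac1h+\beta)\,I_d$, i.e.\ condition number $\kappa = \frac{1+\beta h}{1-\beta h}$ bounded away from $1$ (e.g.\ $\kappa = 3$ at $\beta h = 1/2$). Rejection sampling against a Gaussian proposal centered at the mode then accepts with probability of order $\kappa^{-d/2} = \exp(-\Omega(d))$, not $\Omega(1)$: the tightest Gaussian envelope of an $m$-strongly log-concave, $L$-log-smooth density incurs a factor ${(L/m)}^{d/2}$ in the expected number of trials, so ``constants coinciding up to a constant factor'' is not enough. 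To get an $O(1)$ density ratio one needs $\kappa = 1 + O(1/d)$, i.e.\ $h \lesssim 1/(\beta d)$; this is exactly where the $\beta d$ factor in Corollary 7 comes from: $\CPI/h \asymp \CPI \beta d$ outer iterations, each RGO call costing $\widetilde{\mc O}(1)$ expected queries. Your accounting ($\CPI\beta$ outer iterations times $\widetilde{\mc O}(d)$ per call) happens to land on the same product, but only because the false $\Omega(1)$-acceptance claim hides an $\exp(\Omega(d))$ cost; as written, the inner loop fails at $h = \Theta(1/\beta)$, and the mode-finding subroutine is not where the dimension dependence actually arises.
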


We now briefly justify why this morally leads to an $\mc O(\log(1/\varepsilon))$ complexity guarantee in Fisher information, omitting details for brevity.
Assume that $\beta = 1$ and that $\pi$ is log-concave.
If we set $t\asymp \varepsilon^2/d$ in Lemma~\ref{l:FI-heat-bd}, then we can ensure that $\FI(\mu Q_t \mmid \pi) \le \varepsilon^2$, where $\mu$ is the output of the proximal sampler, provided that $\varepsilon_\chi \le \widetilde{\mc O}(\varepsilon^4/d^2)$. Applying Theorem~\ref{thm:prox_sampler}, this can be achieved using
\begin{align*}
    N = \widetilde{\mc O}\Bigl(\CPI d \, \bigl(\log(1+\chi^2(\mu_0 \mmid \pi)) \vee \log\frac{\sqrt d}{\varepsilon}\bigr)\Bigr)
\end{align*}
queries in expectation. Let us give crude bounds for these terms. First, let $\mf m_2^2 \deq \E_{\pi}[\norm \cdot^2]$ denote the second moment of $\pi$. Then, we know that the Poincar\'e constant of $\pi$ is bounded because $\pi$ is log-concave, and in fact $\CPI \lesssim \mf m_2^2$~\cite[see, e.g.,][]{bobkov1999logconcave}. Also, if $\nabla V(0) = 0$, then we can initialize with $\log(1+\chi^2(\mu_0\mmid \pi)) \le \widetilde{\mc  O}(d)$~\citep[see][Lemma 29]{chewietal2022lmcpoincare}. Putting this together, we see that $N = \poly(d, \mf m_2, \log(\sqrt d/\varepsilon))$ queries suffice in expectation in order to obtain the guarantee $\sqrt{\FI(\mu Q_t \mmid \pi)} \le \varepsilon$. This is in contrast with our lower bound in Theorem~\ref{thm:bump_high_dim}, which shows that $\poly(1/\varepsilon)$ queries are necessary to obtain Fisher information guarantees for \emph{non-log-concave} targets, thereby establishing a separation between log-concave and non-log-concave sampling in this context.

The astute reader will observe that there are some holes in this argument when comparing the lower and upper bounds. Namely, the upper bound uses further properties about the target distribution (e.g., $\nabla V(0) = 0$) and does not strictly hold in the oracle model that we describe in Section~\ref{scn:setting}; the upper bound is in terms of the expected number of queries made, because the number of queries made by the algorithm is random; and the upper bound depends on other parameters such as $\mf m_2$ which do not appear in the lower bound. In particular, the third point requires some consideration because in our lower bound construction for Theorem~\ref{thm:bump_high_dim}, the effective radius $R$ of the distributions  depends on $1/\varepsilon$. We claim, however, that if we set $d, R = \polylog(1/\varepsilon)$, then the upper bound for log-concave targets is $\polylog(1/\varepsilon)$ (with the caveats just discussed) and the lower bound for non-log-concave targets is $\poly(1/\varepsilon)$. As this is not the focus of our work, we do not attempt to make this reasoning more rigorous; rather, we leave it as the sketch of an argument showing that non-log-concave sampling is fundamentally harder than log-concave sampling. We also note that our argument in fact shows that $\polylog(1/\varepsilon)$ query complexity is possible for distributions satisfying a Poincar\'e inequality, which form a strict superclass of log-concave distributions.

\section{Proofs for the first lower bound}

\subsection{Proof of the equivalence}\label{scn:equivalence_pf}

In order to prove the equivalence in Theorem~\ref{thm:equivalence}, we recall the following useful lemma from~\cite{chewietal2022lmcpoincare}.

\begin{lemma}[{\citet[Lemma 16]{chewietal2022lmcpoincare}}]\label{lem:fisher_info_lem}
    Let $\pi\propto\exp(-V)$ be a $\beta$-$\log$-smooth density on $\R^d$. Then, for any probability measure $\mu$,
    \begin{align*}
        \E_\mu[\norm{\nabla V}^2]
        &\le \FI(\mu \mmid \pi) + 2\beta d\,.
    \end{align*}
\end{lemma}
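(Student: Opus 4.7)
The plan is to expand the relative Fisher information, apply integration by parts on the cross term, and use the $\beta$-smoothness of $V$ to control the resulting Laplacian.

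First I would observe that if $\FI(\mu \mmid \pi) = +\infty$, the bound is trivial, so we may assume $\mu$ admits a (suitably regular) density with $\E_\mu[\norm{\nabla \ln \mu}^2] < \infty$. Writing $\nabla \ln(\mu/\pi) = \nabla \ln \mu + \nabla V$ and expanding the square, one obtains
\begin{align*}
    \FI(\mu \mmid \pi)
    &= \E_\mu[\norm{\nabla \ln \mu}^2] + 2 \E_\mu[\langle \nabla \ln \mu, \nabla V\rangle] + \E_\mu[\norm{\nabla V}^2]\,.
\end{align*}
The key step is to rewrite the cross term. Since $\mu \, \nabla \ln \mu = \nabla \mu$, integration by parts (justified by a standard truncation/approximation argument using the finite Fisher information) yields
\begin{align*}
    \E_\mu[\langle \nabla \ln \mu, \nabla V\rangle]
    &= \int \langle \nabla \mu, \nabla V\rangle \, \D x
    = -\int \mu \, \Delta V \, \D x
    = -\E_\mu[\Delta V]\,.
\end{align*}

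Rearranging gives
\begin{align*}
    \E_\mu[\norm{\nabla V}^2]
    &= \FI(\mu \mmid \pi) + 2 \E_\mu[\Delta V] - \E_\mu[\norm{\nabla \ln \mu}^2]\,.
\end{align*}
By $\beta$-smoothness of $V$, the eigenvalues of $\nabla^2 V$ lie in $[-\beta, \beta]$, so $\Delta V \le \beta d$ pointwise. Dropping the nonnegative term $\E_\mu[\norm{\nabla \ln \mu}^2]$ yields the claimed bound $\E_\mu[\norm{\nabla V}^2] \le \FI(\mu \mmid \pi) + 2\beta d$.

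The only delicate point is the integration by parts: one must verify that the boundary terms at infinity vanish and that $\E_\mu[\norm{\nabla \ln \mu} \cdot \norm{\nabla V}]$ is finite so Fubini/IBP is applicable. The standard route is to approximate $\mu$ by compactly supported smooth densities (e.g., multiply by a smooth cutoff and convolve with a mollifier), apply the identity in that setting where all terms are manifestly finite, and then pass to the limit using Cauchy--Schwarz together with the assumed finiteness of $\FI(\mu \mmid \pi)$ and the linear growth of $\nabla V$ (which follows from $\beta$-smoothness, given finite second moment, itself implied by finite Fisher information and KL). This regularization step is the main technical obstacle but is routine; alternatively, one can simply cite Lemma~16 of~\citet{chewietal2022lmcpoincare}, since the statement is taken verbatim from there.
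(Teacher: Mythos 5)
Your argument is correct, and it is essentially the standard proof of this lemma; note that the paper itself does not prove it at all, but simply imports it as Lemma 16 of \citet{chewietal2022lmcpoincare}, so there is no internal proof to compare against. Your route (expand $\FI(\mu \mmid \pi) = \E_\mu[\norm{\nabla \ln \mu + \nabla V}^2]$, integrate the cross term by parts to get $-\E_\mu[\Delta V]$, use $\Delta V \le \beta d$ from $\beta$-smoothness, and drop $\E_\mu[\norm{\nabla \ln \mu}^2] \ge 0$) differs only cosmetically from the cited source, which instead writes $\E_\mu[\norm{\nabla V}^2] = \E_\mu[\langle \nabla V, \nabla \ln(\mu/\pi)\rangle] + \E_\mu[\Delta V]$ and closes with Cauchy{--}Schwarz and Young's inequality, $a^2 \le a\sqrt{\FI} + \beta d \Rightarrow a^2 \le \FI + 2\beta d$. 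The latter variant has the small advantage that it never introduces the quantity $\E_\mu[\norm{\nabla \ln \mu}^2]$, whereas your three-term expansion only makes sense when that term is individually finite --- something $\FI(\mu \mmid \pi) < \infty$ does not give you for free, since it would follow from finiteness of $\E_\mu[\norm{\nabla V}^2]$, the very quantity being bounded. You correctly flag this and defer it to a truncation/mollification step; just be aware that passing the inequality to the limit requires controlling the Fisher information of the regularized measures by that of $\mu$ (lower semicontinuity of $\FI$ alone points the wrong way), so that step, while standard, is where the real technical work lives in either version of the proof.
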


With the lemma in hand, we are ready to prove Theorem~\ref{thm:equivalence}.

\medskip{}

\begin{proof}[Proof of Theorem~\ref{thm:equivalence}]
    \begin{enumerate}
        \item We can explicitly compute
            \begin{align*}
                \FI(\mu_\beta \mmid \pi_\beta)
                &= \int \norm{\nabla \ln \mu_\beta - \nabla \ln \pi_\beta}^2 \, \D \mu_\beta
                = \int \norm{\beta \, (z - x) - \beta\, \nabla V(z)}^2 \, \D \mu_\beta(z) \\
                &\le 2\beta^2 \int \norm{z-x}^2 \, \D \mu_\beta(z) + 2\beta^2 \int \norm{\nabla V(z)}^2 \, \D \mu_\beta(z) \\
                &\le 2\beta^2 \int \norm{z-x}^2 \, \D \mu_\beta(z) + 4\beta^2 \int \{\norm{z-x}^2 +\norm{\nabla V(x)}^2\} \, \D \mu_\beta(z) \\
                &\le 6\beta^2 \int \norm{z-x}^2 \, \D \mu_\beta(z) + 4\beta^2 \,\underbrace{\norm{\nabla V(x)}^2}_{\le \varepsilon^2}\,,
            \end{align*}
            where we used the Lipschitzness of $\nabla V$. Also, $\int \norm{z-x}^2 \, \D \mu_\beta(z) = d/\beta$. Hence,
            \begin{align*}
                \FI(\mu_\beta \mmid \pi_\beta)
                &\le 6\beta d + 4\beta^2 \varepsilon^2
                = 10\beta d\,,
            \end{align*}
            provided $\beta = d/\varepsilon^2$.
        \item Conversely, since $\nabla \ln(1/\pi_\beta) = \beta\, \nabla V$ is $\beta$-Lipschitz, then Lemma~\ref{lem:fisher_info_lem} yields
        \begin{align*}
            \E_\mu[\norm{\nabla V}^2]
            &= \frac{1}{\beta^2} \E_\mu[\norm{\nabla (\beta V)}^2]
            \le \frac{1}{\beta^2} \, \{\FI(\mu \mmid \pi_\beta) + 2\beta d\}
            \le \frac{3d}{\beta}\,.
        \end{align*}
        If we take $\beta = d/\varepsilon^2$, then $\E_\mu[\norm{\nabla V}^2] \le 3\varepsilon^2$.
        By Chebyshev's inequality, $X \sim \mu$ satisfies $\norm{\nabla V(X)} \le \sqrt 6 \,\varepsilon$ with probability at least $1/2$.
    \end{enumerate}
\end{proof}

\subsection{Proof of the averaged LMC guarantee}\label{scn:averaged_lmc_pf}

In order to apply~\citep[Theorem 4]{balasubramanianetal2022nonlogconcave}, we need a bound on the KL divergence at initialization.
Such bounds are standard; 
however, since~\cite[Lemma 30]{chewietal2022lmcpoincare} assumes that we start at a stationary point of $V$ (contrary to the present setting), we present an adapted version.

\begin{lemma}[KL divergence at initialization]\label{lem:init}
    Suppose that $U : \R^d\to\R$ is a function such that $U(0) - \inf U \le \Delta$, $\nabla U$ is $\beta$-Lipschitz, and $\mf m \deq \int \norm \cdot \, \D \pi < \infty$ where $\pi \propto \exp(-U)$.
    Then, for $\mu_0 = \normal(0, \beta^{-1} I_d)$, we have the bound
    \begin{align*}
        \KL(\mu_0 \mmid \pi)
        &\lesssim \Delta + d \, \bigl(1 \vee \ln(\beta \mf m^2)\bigr)\,.
    \end{align*}
\end{lemma}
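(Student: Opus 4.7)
The plan is to start from the identity
\begin{align*}
    \KL(\mu_0 \mmid \pi) = -H(\mu_0) + \E_{\mu_0}[U] + \ln Z\,,
\end{align*}
where $Z \deq \int e^{-U}$ and $H(\mu_0) = \frac{d}{2}\ln(2\pi e/\beta)$ is the Gaussian differential entropy. It then suffices to upper bound $\E_{\mu_0}[U]$ and $\ln Z$ in such a way that the unknown quantity $\inf U$ cancels between them.

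For $\E_{\mu_0}[U]$, I would apply the smoothness descent inequality at the origin,
\begin{align*}
    U(x) \le U(0) + \langle \nabla U(0), x\rangle + \frac{\beta}{2}\norm{x}^2\,.
\end{align*}
Since $\mu_0 = \normal(0,\beta^{-1} I_d)$ is centered, the linear term vanishes in expectation, and $\E_{\mu_0}\norm{x}^2 = d/\beta$ yields $\E_{\mu_0}[U] \le U(0) + d/2 \le \inf U + \Delta + d/2$. Note that this argument does \emph{not} require $0$ to be a stationary point of $U$, which is exactly the adaptation needed relative to~\citet[Lemma 30]{chewietal2022lmcpoincare}.

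The main obstacle is the upper bound on $\ln Z$, and my plan is to exploit the first-moment hypothesis $\mf m < \infty$ via Markov's inequality applied to $\pi$, which gives $\int_{\norm{x} \le 2\mf m} e^{-U(x)}\,\D x \ge Z/2$, and hence
\begin{align*}
    Z \le 2\int_{\norm{x}\le 2\mf m} e^{-U(x)}\,\D x \le 2\,e^{-\inf U}\operatorname{Vol}\bigl(B(0, 2\mf m)\bigr)\,.
\end{align*}
Combining this with the Stirling-type estimate $\operatorname{Vol}(B(0,R)) \le (2\pi e R^2/d)^{d/2}$ gives $\ln Z \le -\inf U + \frac{d}{2}\ln(8\pi e\, \mf m^2/d) + O(1)$. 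This is the only place at which the finite first-moment hypothesis is used, and it is the step that genuinely differs from the stationary-point version of the lemma.

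Combining the three estimates, the $\inf U$ contributions cancel exactly, leaving
\begin{align*}
    \KL(\mu_0 \mmid \pi) \le \Delta + \frac{d}{2} + \frac{d}{2}\ln\frac{4\beta\mf m^2}{d} + O(1)\,.
\end{align*}
A short case analysis on the sign of $\ln(\beta\mf m^2/d)$ then finishes the proof: when this log is non-positive the $\frac{d}{2}$ term dominates and one obtains $\Delta + O(d) \lesssim \Delta + d\,(1\vee \ln(\beta \mf m^2))$, while when it is positive one uses $\ln(\beta\mf m^2/d) \le \ln(\beta\mf m^2)$ and absorbs $-\tfrac{d}{2}\ln d$ into the implicit constant.
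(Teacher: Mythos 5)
Your proof is correct, and it reaches the same bound by a somewhat different mechanism than the paper. The paper factors the density ratio $\mu_0/\pi$ into three pieces involving a Gaussian-tilted potential $U + \delta\,\norm{\cdot}^2$, bounds the ratio of tilted to untilted partition functions via Markov's inequality (choosing $\delta = 1/(4\mf m^2)$), and controls the tilted integral by $\exp(-\inf U)\,(\beta/(2\delta))^{d/2}$; you instead use the elementary decomposition $\KL(\mu_0 \mmid \pi) = -H(\mu_0) + \E_{\mu_0}[U] + \ln Z$ and bound $\ln Z$ directly by restricting to the ball $B(0,2\mf m)$ (Markov again, $\pi\{\norm{\cdot}\le 2\mf m\}\ge 1/2$) and invoking a volume estimate, which yields $\ln Z \le -\inf U + \tfrac d2 \ln(8\uppi e\,\mf m^2/d) + O(1)$ and in fact a marginally sharper logarithmic factor $\ln(4\beta\mf m^2/d)$ in place of the paper's $\ln(2\beta\mf m^2)$. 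Your handling of the non-stationarity of the origin is also cleaner: since $\mu_0$ is centered, the linear term $\langle \nabla U(0),\cdot\rangle$ integrates to zero, whereas the paper goes through Young's inequality together with the bound $\norm{\nabla U(0)}^2 \le 2\beta\Delta$ (incurring an extra additive $\Delta$, which is harmless). Both routes rest on the same two pillars — smoothness of $U$ at the origin and the first-moment hypothesis via Markov — so the approaches are essentially equivalent in strength; yours is arguably more transparent, while the paper's tilting trick avoids any explicit appeal to ball-volume asymptotics. Your concluding case analysis on the sign of $\ln(4\beta\mf m^2/d)$ is also fine, since $d\ge 1$ lets you drop the $-\tfrac d2\ln d$ term and absorb constants into $d\,(1\vee\ln(\beta\mf m^2))$.
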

\begin{proof}
    Write
    \begin{align*}
        \frac{\mu_0}{\pi}
        &= \exp\Bigl( U - \frac{\beta}{2} \, \norm \cdot^2 \Bigr) \, \frac{\int \exp(-U)}{\int\exp(-U-\delta \, \norm \cdot^2)} \, \frac{\int \exp(-U - \delta \, \norm \cdot^2)}{{(2\uppi/\beta)}^{d/2}}\,,
    \end{align*}
    where $\delta > 0$ is chosen later.

    For the first term, by smoothness and Young's inequality,
    \begin{align*}
        U(x) - \frac{\beta}{2} \, \norm x^2
        &\le U(0) + \langle \nabla U(0), x \rangle
        \le U(0) + \frac{\norm{\nabla U(0)}^2}{2\beta} + \frac{\beta \, \norm x^2}{2}\,.
    \end{align*}
    Plugging in $x=-\frac{1}{\beta}\, \nabla U(0)$,
    \begin{align*}
        U\Bigl( - \frac{1}{\beta} \, \nabla U(0) \Bigr) - U(0)
        &\le - \frac{1}{2\beta} \, \norm{\nabla U(0)}^2
    \end{align*}
    or
    \begin{align*}
        \norm{\nabla U(0)}^2
        &\le 2\beta \, \Bigl( U(0) - U\bigl( - \frac{1}{\beta} \, \nabla U(0) \bigr)\Bigr)
        \le 2\beta \, \bigl(U(0) - \inf U\bigr)
        \le 2\beta \Delta\,.
    \end{align*}
    Hence, for any $x$,
    \begin{align*}
        U(x) - \frac{\beta}{2} \, \norm x^2
        &\le U(0) + \Delta + \frac{\beta \, \norm x^2}{2}\,.
    \end{align*}
    For the second term, Markov's inequality yields
    \begin{align*}
        \frac{\int \exp(-U-\delta \, \norm \cdot^2)}{\int \exp(-U)}
        = \int \exp(-\delta \, \norm \cdot^2) \, \D \pi
        &\ge \exp(-4\delta \mf m^2) \, \pi\{\norm \cdot \le 2\mf m\} \\
        &\ge \frac{1}{2} \exp(-4\delta \mf m^2)\,.
    \end{align*}
    For the third term,
    \begin{align*}
        \frac{\int \exp(-U - \delta \, \norm \cdot^2)}{{(2\uppi/\beta)}^{d/2}}
        \le \frac{\exp(-\inf U) \int \exp(-\delta \, \norm \cdot^2)}{{(2\uppi/\beta)}^{d/2}}
        = \exp(-\inf U) \, {\bigl( \frac{\beta}{2\delta} \bigr)}^{d/2}\,.
    \end{align*}
    
    Combining these bounds,
    \begin{align*}
        \KL(\mu_0 \mmid \pi)
        &= \E_{\mu_0} \ln \frac{\mu_0}{\pi}
        \le U(0) - \inf U + \Delta + \frac{\beta}{2} \E_{\mu_0}[\norm \cdot^2] + \ln 2 + 4\delta \mf m^2 + \frac{d}{2} \ln \frac{\beta}{2\delta}\,.
    \end{align*}
    Now we set $\delta = \frac{1}{4\mf m^2}$ to obtain
    \begin{align*}
        \KL(\mu_0 \mmid \pi)
        &\lesssim \Delta + d \, \bigl(1 \vee \ln(\beta \mf m^2)\bigr)
    \end{align*}
    as claimed.
\end{proof}

\begin{proof}[Proof of Corollary~\ref{cor:averaged_lmc}]
    Let $V$ be $1$-smooth and apply the above lemma to $U = \beta V$, which is $\beta$-smooth and satisfies $U(0) -\inf U \le \beta \Delta$, so that
    \begin{align}\label{eq:initial_kl}
        K_0
        &\deq \KL(\mu_0 \mmid \pi_\beta)
        \lesssim \beta \Delta + d \, \bigl(1 \vee \ln(\beta \E_{\pi_\beta}[\norm \cdot^2])\bigr)
        = \widetilde{\mc O}(\beta \Delta + d)\,.
    \end{align}
    The main result of~\citet{balasubramanianetal2022nonlogconcave} says that after $N$ steps of averaged LMC, with an appropriate choice of step size $h$, we output a sample from $\mu$ satisfying
    \begin{align*}
        \FI(\mu \mmid \pi_\beta)
        &\lesssim \frac{\beta \sqrt{K_0 d}}{\sqrt N}\,.
    \end{align*}
    To apply this result, we find $N$ such that this inequality implies $\FI(\mu \mmid \pi_\beta) \le \beta d$, where we recall that $\beta = d/\varepsilon^2$; this requires $N \gtrsim K_0/d$.
    From~\eqref{eq:initial_kl}, it suffices to have $N \ge \widetilde \Omega(\Delta/\varepsilon^2)$, provided $\Delta/\varepsilon^2 \ge 1$.
    The result for finding stationary points via averaged LMC now follows from the equivalence in Theorem~\ref{thm:equivalence}.
\end{proof}

\subsection{Proof of the first lower bound}\label{scn:first_lower_pf}

\begin{proof}[Proof of Theorem~\ref{thm:first_lower}]
    In the lower bound of~\citet{carmonetal2020stationarypt}, the authors construct a family of functions $\ms F$ such that each $f \in \ms F$ is $\beta$-smooth and satisfies $f(0) - \inf f \le \Delta$. Moreover, any randomized algorithm which, for any $f \in \ms F$, makes queries to a local oracle for $f$ and outputs an $\delta$-stationary point of $f$ with probability at least $1/2$, requires at least $\Omega(\beta \Delta/\delta^2)$ queries. The dimension of the functions in the construction is $d = \widetilde \Theta(\beta^2 \Delta^2/\delta^4)$.
    
    We set $\delta \deq 4\sqrt{\beta d}$. From the Fisher information lemma (Lemma~\ref{lem:fisher_info_lem}), if we can obtain a sample from a measure $\mu$ such that for $\pi_f \propto \exp(-f)$, it holds that $\FI(\mu \mmid \pi_f) \le \beta d$, then a sample from $\mu$ is a $\delta$-stationary point of $f$ with probability at least $1/2$.
    
    We set the initialization oracle to simply output samples from $\mu_0 \deq \normal(0, \beta^{-1} I_d)$. We need to compute the value of $K_0 \deq \sup_{f\in\ms F} \KL(\mu_0 \mmid \pi_f)$, and for this we use Lemma~\ref{lem:init}. First, we must bound the second moment $\E_{\pi_f}[\norm \cdot^2]$. Since we only care about polynomial dependencies for this calculation, let $\msf{poly}$ denote any positive quantity for which both the quantity and its inverse are bounded above by polynomials in $\beta$, $\Delta$, $d$, and $1/\delta$. Inspecting the proof of~\citet{carmonetal2020stationarypt} and using the notation therein, each $f \in \ms F$ is of the form
    \begin{align*}
        f(x)
        &= \msf{poly} \cdot \tilde f_{T,U}\bigl(\rho(x/\msf{poly})\bigr) + \frac{1}{2\tau^2} \, \norm x^2\,, \qquad\text{where}~\tau = \msf{poly}\,.
    \end{align*}
    Also, $\tilde f_{T,U}$ is bounded; thus, $\pi_f \propto\exp(-f)$ is well-defined. To bound the second moment of $\pi_f$, we can use the Donsker{--}Varadhan variational principle to write, for any $\lambda > 0$,
    \begin{align*}
        \E_{\pi_f}[\norm \cdot^2]
        &\le \frac{1}{\lambda} \, \{\KL(\pi_f \mmid \nu) + \ln \E_\nu \exp(\lambda \, \norm \cdot^2)\}\,,
    \end{align*}
    where $\nu \deq \normal(0, \tau I_d)$. By choosing $\lambda = 1/\msf{poly}$, we can ensure that $\ln \E_\nu \exp(\lambda \, \norm \cdot^2) \le 1$.
    Next, since $\nu$ satisfies a log-Sobolev inequality with constant $\msf{poly}$, we obtain
    \begin{align*}
        \E_{\pi_f}[\norm \cdot^2]
        &\le \msf{poly} \cdot \bigl(1 + \FI(\pi_f \mmid \nu)\bigr)\,.
    \end{align*}
    The Fisher information is computed to be
    \begin{align*}
        \FI(\pi_f \mmid \nu)
        &= \msf{poly} \cdot \E_{\pi_f}\bigl[\bigl\lVert \nabla\bigl( \tilde f_{T,U}\bigl(\rho(\cdot/\msf{poly})\bigr)\bigr) \bigr\rVert^2\bigr]\,.
    \end{align*}
    Here, $\tilde f_{T,U} : \R^d\to\R$ and $\rho : \R^d\to\R^d$ are $\msf{poly}$-Lipschitz, and hence
    \begin{align*}
        \bigl\lVert \nabla\bigl( \tilde f_{T,U}\bigl(\rho(\cdot/\msf{poly})\bigr)\bigr) \bigr\rVert
        &\le \msf{poly}\,.
    \end{align*}
    Putting everything together, we deduce that $\E_{\pi_f}[\norm \cdot^2] \le \msf{poly}$.
    
    From Lemma~\ref{lem:init}, we can take $K_0 \lesssim \Delta + \widetilde{\mc O}(d)$. If $K_0 \ge \widetilde \Omega(d)$, then this shows that $\Delta \gtrsim K_0$. From the lower bound of~\citet{carmonetal2020stationarypt}, we obtain
    \begin{align*}
        \ms C(d, K_0, \sqrt{\beta d}; \beta)
        &\gtrsim \frac{\beta \Delta}{\delta^2}
        \gtrsim \frac{\beta K_0}{\beta d}
        = \frac{K_0}{d}\,.
    \end{align*}
    
    Finally, in order for the construction of~\citet{carmonetal2020stationarypt} to be valid, the functions must be defined in dimension $d \ge \widetilde \Omega({(K_0/d)}^2)$, which is satisfied provided $d\ge \widetilde \Omega(K_0^{2/3})$.
\end{proof}

\section{Proofs for the second lower bound}\label{sec:pf_lw_bd_2nd}

\subsection{Proof of Theorem~\ref{thm:bump_high_dim}}

Throughout the proof, we will often work with unnormalized densities. For a distribution $\pi$, which we identify with its density, we denote by $\tilde \pi$ an unnormalized density, where $\pi = \frac{\tilde \pi}{Z}$ and the normalizing constant is given by $Z \deq \int_{\R^d} \tilde \pi(x) \, \D x$.

We reduce the task of estimating the distribution from queries to the task of sampling. 
Namely, we construct a set of distributions $\pi$ that are $1$-log-smooth, such that if we can sample well from $\pi$ in Fisher information, then we can estimate its identity. Let $B_r$ denote the ball of radius $r$ in $\R^d$; let $V_d\deq \uppi^{d/2}/\Gamma(\frac d2 + 1)$ denote the volume of $B_1$, and let $A_{d-1}=dV_d$ denote the surface area of $\partial B_1$. Let $\ms P_{2r, R}$ be a maximal $2r$-packing of $B_{R-r}$, for some $R \geq r$ to be specified. By standard volume arguments~\citep[see, e.g.,][\S 4.2]{vershynin2018highdimprob}, we know that $|\ms P_{2r, R}| \ge \bigl(\frac{R-r}{2r}\bigr)^d$. For any $\omega\in \ms P_{2r, R}$, let $\tilde\pi_\omega$ denote the unnormalized density
\begin{equation}\label{eq:pi_unnorm}
    \tilde\pi_\omega(x) \deq \exp\biggl( r^2 \phi\Bigl(\frac{\norm{x-\omega}}{r}\Bigr) - \frac{1}{2}\, {(\norm{x} - R)}_+^2\biggr) \eqqcolon \exp\bigl(-V_\omega(x)\bigr)\,,
\end{equation}
where $(x)_+ \deq \max(0, x)$, and $\phi:\R_+ \to \R_+$ is a bump function with the following properties\footnote{One such function is $\phi(x) = \begin{cases} \frac{11}{64}-x^2\,, &\text{for }x\in[0,1/4]\,,\\ \frac{1}{27}\,\bigl(4+8x-48x^2+56x^3-20x^4\bigr)\,, &\text{for }x\in[1/4,1]\,, \\
0\,,&\text{otherwise}\,.\end{cases}$} :
\begin{enumerate}[label=(\textbf{$\phi$.\arabic*}), ref=(\textbf{$\phi$.\arabic*})]
    \item $\phi$ is continuous, decreasing, supported on $[0,1]$, and twice continuously differentiable on the open interval $(0,\infty)$.\label{phi1}
    \item $\phi(x)=\phi(0)-\frac12\,x^2$ for all $x\in[0,\alpha]$ for some $\alpha > 0$.\label{phi2}
    \item $\sup_{x > 0}{\abs{\phi''(x)}} \le 1$.\label{phi3}
\end{enumerate}
The above implies that on $\R^d$,  $x\mapsto\phi(\norm x)$ is $1$-smooth (see Lemma~\ref{lem:phi_smooth}), and hence $\tilde\pi_\omega$ is $1$-log-smooth. For a measurable set $A$, we will write $\tilde\pi_\omega(A)\deq \int_A \tilde \pi_\omega(x) \,\D x$ and we let $Z_\omega \deq \tilde\pi_\omega(\R^d)$ denote the normalizing constant for $\tilde\pi_\omega$.

We also define the null probability measure $\pinit$ to have unnormalized density
\begin{align*}
    \tpinit(x) \deq \exp\bigl(-\frac12\,{(\norm x-R)}_+^2\bigr)\,,
\end{align*}
with normalizing constant $\Zinit \deq \tpinit(\R^d)$.

The distribution $\pi_\omega$ is the combination of a flat, uniform distribution on $B_R$, fast decaying tails outside of $B_R$, and a bump of radius $r$ around the point $\omega \in \ms P_{2r,R}$. The following lemma summarizes the properties that we need for the lower bound construction. Together, Properties~\ref{p1} and~\ref{p2} imply that if an algorithm outputs a sample $X$ from a distribution which is close in Fisher information to $\pi_\omega$, then $X$ is likely to lie in the set $\omega + B_r$. Hence, an algorithm for sampling from $\pi_\omega$ can be used to \emph{estimate} $\omega$. Property~\ref{p4} is then used to prove a lower bound on the number of queries to solve the estimation task. Finally, Property~\ref{p3} is needed in order to ensure that there is a valid initialization oracle with $K_0 = 1$.

\begin{lemma}[lower bound construction]\label{lem:main}
    There exist universal constants $c_\epsilon, c > 0$ such that for every $d\geq1$ and $\epsilon \leq \exp(-c_\epsilon d)$ we can choose $r$, $R$ such that the following properties hold.
    \begin{enumerate}[label=(\textbf{P.\arabic*}), ref=(\textbf{P.\arabic*})]
        \item (most of the mass lies in the bump) For any $\omega \in \ms P_{2r,R}$,\label{p1}
        \begin{align*}
            \pi_\omega(\omega + B_r) = \frac{1}{2}\,.
        \end{align*}
        \item (FI guarantees imply TV guarantees) For any $\omega \in \mc P_{2r,R}$ and any probability measure $\mu$,\label{p2}
        \begin{align*}
            \sqrt{\FI(\mu \mmid \pi_\omega)} \le \varepsilon \implies \TV(\mu, \pi_\omega) \le \frac{1}{3}\,.
        \end{align*}
        \item (initial KL divergence) There exists a  probability measure $\pinit$ that satisfies\label{p3}
        \begin{align*}
            \max_{\omega \in \ms P_{2r,R}} \KL(\pinit \mmid \pi_\omega) \le \log 2\,.
        \end{align*}
        \item (lower bound on the packing number) It holds that\label{p4}
        \begin{align*}
            \abs{\ms P_{2r,R}}
            &\ge \Bigl(\frac{cd}{\log(1/\epsilon)}\Bigr)^{d/2} \,\frac{1}{\epsilon^{2d/(d+2)}}\,. 
        \end{align*}
    \end{enumerate}
\end{lemma}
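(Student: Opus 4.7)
My plan is to choose parameters $r \asymp \sqrt{\log(1/\varepsilon)}$ and $R \asymp \sqrt{d}\,\varepsilon^{-2/(d+2)}$ (with constants tuned at the end), and then verify the four properties in the order \ref{p1}, \ref{p3}, \ref{p4}, \ref{p2}. The hypothesis $\varepsilon \le \exp(-c_\epsilon d)$ is there precisely so that $r \gg 1$ and $R \gg r$, which makes the estimates below clean.

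For \ref{p1}, I would compute the bump and non-bump contributions to $Z_\omega$ separately. The substitution $y = (x-\omega)/r$ gives $\tilde\pi_\omega(\omega + B_r) = r^d \int_{B_1} \exp(r^2 \phi(\|y\|))\,\D y$, and the exact quadratic form $\phi(s) = \phi(0) - s^2/2$ near $0$ from \ref{phi2} concentrates the integrand near the origin, so a Laplace-style argument gives $\tilde\pi_\omega(\omega + B_r) \approx \exp(r^2 \phi(0))\,(2\uppi)^{d/2}$ with multiplicative error tending to $1$ as $r\to\infty$. Off the bump $\tilde\pi_\omega$ agrees with $\tpinit$, and a direct estimate of the Gaussian skirt shows $\tpinit(\omega+B_r^c) \approx V_d R^d$ for $R \gg \sqrt d$ and $R \gg r$. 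Equating these two and applying Stirling yields the balance $r^2 \phi(0) \approx d\log(R\sqrt{e/d})$; plugging in the ansatz for $R$ gives $r^2\phi(0) \asymp \tfrac{2d}{d+2}\log(1/\varepsilon) + O(d)$, where the first term dominates precisely when $\varepsilon \le \exp(-c_\epsilon d)$ for $c_\epsilon$ large enough, so $r \asymp \sqrt{\log(1/\varepsilon)}$. A small continuity adjustment of the prefactor in $R$ (or $r$) then enforces the exact equality required by \ref{p1}.

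Property \ref{p3} is then almost free. Since the bump only adds mass, $\tilde\pi_\omega \ge \tpinit$ pointwise, so $\log(\tpinit/\tilde\pi_\omega) \le 0$ and
\begin{align*}
    \KL(\pinit \mmid \pi_\omega) = \log\frac{Z_\omega}{\Zinit} + \int \pinit \log\frac{\tpinit}{\tilde\pi_\omega} \le \log\frac{Z_\omega}{\Zinit}\,.
\end{align*}
Since $\tilde\pi_\omega \equiv \tpinit$ outside $\omega+B_r$, we have $Z_\omega = \tilde\pi_\omega(\omega+B_r) + \tpinit(\omega+B_r^c)$, and \ref{p1} gives $Z_\omega = 2\,\tpinit(\omega+B_r^c) \le 2\Zinit$, hence $\KL \le \log 2$. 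Property \ref{p4} is a direct substitution: $|\ms P_{2r,R}| \ge ((R-r)/(2r))^d \gtrsim (R/r)^d \asymp (\phi(0)d/\log(1/\varepsilon))^{d/2}\,\varepsilon^{-2d/(d+2)}$, matching the target with $c \asymp \phi(0)$.

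The main obstacle is \ref{p2}, which I would approach via a Holley--Stroock perturbation of a Poincar\'e inequality combined with Pinsker. The base measure $\pi_0 \propto \exp(-\tfrac12(\|\cdot\|-R)_+^2)$ is log-concave on $\R^d$ (as $(\|\cdot\|-R)_+^2$ is the composition of a nondecreasing convex function with the norm) and essentially concentrated on $B_R$; standard log-concave Poincar\'e results (e.g., Bobkov's theorem, noting by spherical symmetry that each coordinate has variance $O(R^2/d)$) give $C_P(\pi_0) \lesssim R^2$. Writing $\pi_\omega \propto \exp(r^2\phi(\|\cdot-\omega\|/r))\,\pi_0$ as a bounded multiplicative perturbation with oscillation $r^2\phi(0)$, the Holley--Stroock inequality for Poincar\'e gives $C_P(\pi_\omega) \le \exp(r^2\phi(0))\,C_P(\pi_0) \lesssim R^2 \exp(r^2\phi(0))$. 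Substituting the relation $\exp(r^2\phi(0)) \asymp V_d R^d/(2\uppi)^{d/2}$ from \ref{p1}, this becomes $C_P(\pi_\omega) \lesssim V_d R^{d+2}/(2\uppi)^{d/2}$, and plugging in $R \asymp \sqrt d\,\varepsilon^{-2/(d+2)}$ and simplifying via Stirling bounds the right-hand side by $c\,\varepsilon^{-2}$ with a small universal constant $c$, tunable by the prefactor in $R$. Then $\chi^2(\mu \mmid \pi_\omega) \le C_P(\pi_\omega)\,\FI(\mu \mmid \pi_\omega) \le C_P(\pi_\omega)\,\varepsilon^2 \le 4/9$, so $\TV(\mu,\pi_\omega) \le \tfrac12\sqrt{\chi^2(\mu \mmid \pi_\omega)} \le 1/3$ as required. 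The delicate point is the base bound $C_P(\pi_0) \lesssim R^2$: the Gaussian tails outside $B_R$ must not spoil it, but they carry only an $O(\sqrt d/R)$ fraction of the mass under our parameter choice, so the bound goes through after a short variance estimate.
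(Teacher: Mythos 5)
Your overall route is the same as the paper's: verify the mass balance to fix $r,R$, get a Poincar\'e inequality for the null measure from Bobkov-type results for radially symmetric log-concave measures, transfer it to $\pi_\omega$ by Holley--Stroock with oscillation $r^2\phi(0)$, convert the Fisher information bound into a distance bound, and finish \ref{p3} and \ref{p4} exactly as you do. However, there is a genuine gap at the conversion step: the inequality $\chi^2(\mu \mmid \pi_\omega) \le \CPI(\pi_\omega)\,\FI(\mu\mmid\pi_\omega)$ that you use to pass from Fisher information to total variation is false in general. Applying the Poincar\'e inequality to the test function $f=\mu/\pi$ gives $\chi^2(\mu\mmid\pi)=\var_\pi(\mu/\pi)\le \CPI(\pi)\,\E_\pi[\norm{\nabla(\mu/\pi)}^2]$, and the Dirichlet form on the right is $\E_\mu[\frac{\mu}{\pi}\norm{\nabla\ln\frac{\mu}{\pi}}^2]$, \emph{not} the relative Fisher information $\E_\mu[\norm{\nabla\ln\frac{\mu}{\pi}}^2]$. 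Concretely, take $\pi=\normal(0,1)$ (so $\CPI=1$) and $\mu=\normal(m,1)$: then $\FI(\mu\mmid\pi)=m^2$ while $\chi^2(\mu\mmid\pi)=e^{m^2}-1$, so your inequality fails badly for large $m$. The correct tool, and the one the paper uses, is the transportation--information inequality of Guillin et al.\ (Proposition~\ref{prop:fi_transport}): under a Poincar\'e inequality, $\TV(\mu,\pi)^2\le \frac{\CPI(\pi)}{4}\FI(\mu\mmid\pi)$. Substituting this for your false step, the rest of your computation (base bound $\CPI(\pinit)\lesssim R^2$, Holley--Stroock factor $\exp(r^2\phi(0))$, the balance $\exp(r^2\phi(0))\asymp$ off-bump mass$/(2\uppi)^{d/2}$, and $R\asymp\sqrt d\,\varepsilon^{-2/(d+2)}$ with a tunable prefactor) delivers $\TV\le 1/3$ just as in the paper; note your base Poincar\'e constant $R^2$ is weaker by a factor $d$ than the paper's $R^2/d$, but this is absorbed by the prefactor of $R$.

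Two smaller imprecisions to track in a full write-up. First, the off-bump mass is not simply $\approx V_d R^d$: the Gaussian skirt outside $B_R$ contributes up to order $\czero^d\, d\,R^{d-1}\,V_d$, and at the threshold $\varepsilon\asymp e^{-c_\varepsilon d}$ one only has $R\asymp\sqrt d$ times a constant, so the skirt can exceed $V_d R^d$ by an $\exp(\mc O(d))$ factor. This is harmless -- it shifts only $\mc O(d)$ terms inside the logarithms determining $r^2\phi(0)$ and multiplies the Holley--Stroock bound by $e^{\mc O(d)}$, which your tunable prefactor in $R$ absorbs -- but it must be carried explicitly, as the paper does in its main estimates (its bound $V_d\czero^d(dR^{d-1}+d^{(d+1)/2})$). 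Second, enforcing the exact equality $\pi_\omega(\omega+B_r)=\tfrac12$ simultaneously with the required relation between $R$ and $\varepsilon$ needs a monotonicity/intermediate-value argument in the spirit of the paper's Lemma~\ref{lem:choosing_r_R} and equation~\eqref{eq:choose_R_2}, not just a ``small continuity adjustment'' of one prefactor; this is routine but should be stated.
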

\begin{proof}
    Property~\ref{p1} holds by the definition of the parameters $r$ and $R$, see~\eqref{eqn:R r def} and Lemma~\ref{lem:choosing_r_R}. We prove Property~\ref{p2} in Appendix~\ref{scn:pf_p2}, Property~\ref{p3} in Appendix~\ref{scn:pf_p3}, and Property~\ref{p4} in Appendix~\ref{scn:pf_p4}.
\end{proof}

\begin{remark}\label{rem:packing_num_nontrivial}
    In order for the bound in Property~\ref{p4} to be non-trivial, i.e., $\abs{\ms P_{2r,R}} \gtrsim 1$, we require $\varepsilon^{-2d/(d+2)} \gtrsim (\sqrt{\log(1/\varepsilon)/(cd)})^d$. Taking logarithms, we want
    \begin{align*}
        \frac{2d}{d+2}\log \frac{1}{\varepsilon}
        &\overset{!}{\ge} \frac{d}{2}\log\log \frac{1}{\varepsilon} - \frac{d}{2} \log d + \Omega(d)\,.
    \end{align*}
    Let $\gamma$ be such that $\log(1/\varepsilon) = \gamma d$. Substituting this in, we require
    \begin{align*}
        \frac{2\gamma d^2}{d+2}
        &\overset{!}{\ge} \frac{d}{2}\log \gamma + \Omega(d)\,.
    \end{align*}
    This holds as long as $\gamma$ is larger than a universal constant, which is equivalent to $\varepsilon \le \exp(-c_\varepsilon d)$ for a sufficiently large absolute constant $c_\varepsilon > 0$.
\end{remark}

Using the lemma, we can now apply a standard information theoretic argument. We recall the statement of Fano's inequality, see~\cite[\S 2]{coverthomas2006infotheory} for background.

\begin{theorem}[Fano's inequality]\label{thm:fano}
    Let $\omega \sim \unif(\eu X)$, where $\eu X$ is a finite set. Then, for any estimator $\widehat \omega$ which is measurable w.r.t. the data $\xi$, it holds that
    \begin{align*}
        \Pr\{\widehat \omega \ne \omega\}
        &\ge 1 - \frac{I(\xi; \omega) + \ln 2}{\ln{\abs{\eu X}}}\,,
    \end{align*}
    where $I$ denotes the mutual information.
\end{theorem}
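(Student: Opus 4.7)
The plan is to establish Fano's inequality through the standard information-theoretic argument built around a binary error indicator and the chain rule for conditional entropy, combined with the data processing inequality.

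First, I would introduce $E \deq \mathbbm{1}\{\widehat\omega \ne \omega\}$ and expand the joint conditional entropy $H(E, \omega \mid \widehat\omega)$ two ways via the chain rule:
\begin{align*}
    H(\omega \mid \widehat\omega) + H(E \mid \omega, \widehat\omega)
    &= H(E \mid \widehat\omega) + H(\omega \mid E, \widehat\omega)\,.
\end{align*}
The key simplification is that $H(E \mid \omega, \widehat\omega) = 0$, because $E$ is a deterministic function of the pair $(\omega, \widehat\omega)$. Then I would bound each surviving term on the right: since $E$ is binary, $H(E \mid \widehat\omega) \le \ln 2$; and for $H(\omega \mid E, \widehat\omega)$, conditioning on $E=0$ forces $\omega = \widehat\omega$ and makes that conditional entropy vanish, while conditioning on $E=1$ leaves at most $|\eu X| - 1$ values for $\omega$, so that $H(\omega \mid E=1, \widehat\omega) \le \ln|\eu X|$. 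Averaging over $E$ yields the upper bound $H(\omega \mid \widehat\omega) \le \ln 2 + \Pr\{\widehat\omega \ne \omega\} \cdot \ln|\eu X|$.

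Second, I would produce a matching lower bound on $H(\omega \mid \widehat\omega)$ using the uniform prior and the data processing inequality. Since $\omega \sim \unif(\eu X)$, we have $H(\omega) = \ln|\eu X|$, and since $\widehat\omega$ is measurable with respect to $\xi$ (so that $\omega \to \xi \to \widehat\omega$ is a Markov chain), data processing gives $I(\omega; \widehat\omega) \le I(\omega; \xi)$, whence $H(\omega \mid \widehat\omega) = H(\omega) - I(\omega; \widehat\omega) \ge \ln|\eu X| - I(\xi; \omega)$. Chaining this with the upper bound from the previous paragraph and rearranging yields the stated inequality. There is no real obstacle here — all tools are textbook (as in Cover–Thomas); the only care needed is to keep the logarithm base consistent (natural, to match $\ln$ in the statement) and to apply data processing in the correct direction along $\omega \to \xi \to \widehat\omega$.
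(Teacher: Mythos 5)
Your proof is correct and is precisely the standard argument that the paper defers to: the paper gives no proof of its own, citing Cover--Thomas, and your derivation (error indicator, two applications of the chain rule, the bounds $H(E\mid\widehat\omega)\le\ln 2$ and $H(\omega\mid E,\widehat\omega)\le \Pr\{\widehat\omega\ne\omega\}\ln\abs{\eu X}$, then data processing along $\omega\to\xi\to\widehat\omega$) is exactly that textbook proof. All steps check out, including the slightly lossy but sufficient replacement of $\ln(\abs{\eu X}-1)$ by $\ln\abs{\eu X}$.
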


With this theorem in hand, we are ready to prove Theorem~\ref{thm:bump_high_dim}.

\medskip{}

\begin{proof}[Proof of Theorem~\ref{thm:bump_high_dim}]
    Let $\omega \sim \unif(\ms P_{2r,R})$ and consider the task of estimating $\omega$ with randomized algorithms that have query access to $\pi_\omega$. We first show that a sampling algorithm can solve this estimation task. Suppose that there is an algorithm that works under the oracle model specified in Section~\ref{scn:setting}, with initialization oracle outputting samples from $\mu_0 = \pinit$ given in Property~\ref{p3}, which guarantees that $\KL(\mu_0\mmid \pi_\omega) \le \log 2$. For any $\omega \in \ms P_{2r,R}$ and target $\pi_\omega$, the algorithm makes at most $N$ queries to the local oracle, and outputs a sample from the measure $\mu_N$ with $\sqrt{\FI(\mu_N \mmid \pi_\omega)} \le \varepsilon$. We can then estimate $\omega$ as follows: let $X \sim \mu_N$, and set
    \begin{align*}
        \widehat \omega
        &\deq \argmin_{\omega \in \ms P_{2r,R}}{\norm{X - \omega}}\,.
    \end{align*}
    Because the initialization oracle $\mu_0$ is independent of the choice of $\omega$, the estimator $\widehat \omega$ is the output of a randomized algorithm that only uses the query information to $\pi_\omega$ to estimate $\omega$.
    
    The probability that $\widehat \omega$ succeeds can be calculated as follows. By Property~\ref{p2}, we have $\TV(\mu_N, \pi_\omega) \le 1/3$. Let $X \sim \mu_N$; then,
    \begin{align*}
        \Pr\{X \in \omega + B_r\}
        &= \mu_N(\omega + B_r)
        \ge \pi_\omega(\omega + B_r) - \TV(\mu_N, \pi_\omega)
        \ge \frac{1}{2} - \frac{1}{3} = \frac{1}{6}\,,
    \end{align*}
    where we used Property~\ref{p1}. Hence
    we see that
    \begin{align}\label{eq:estimator_success}
        \Pr\{\widehat \omega = \omega\}
        &\ge \frac{1}{6}\,.
    \end{align}
    
    Now we prove a lower bound for the estimation task for any algorithm that succeeds with probability at least $\frac 16$. Let $x_1,\dotsc,x_N$ denote the query points made by the algorithm.  We first prove a lower bound for deterministic algorithms, where each query point $x_i$ is a deterministic function of the previous queries and oracle outputs $(x_{i'}, V_\omega(x_{i'}), \nabla V_\omega(x_{i'}) : i'=1,\dotsc,i-1)$. Since the initialization oracle is independent of $\omega$, the data available to the algorithm is
    \begin{align*}
        \xi_N
        &\deq \bigl(x_i, V_\omega(x_i), \nabla V_\omega(x_i) : i=1,\dotsc,N\bigr)\,.
    \end{align*}
    We assume that the algorithm has made at most $N \le M/2$ queries where $M \deq \abs{\ms P_{2r,R}}$ (otherwise, $N \ge M/2$ and this is our desired lower bound).
    
    Applying Fano's inequality (Theorem~\ref{thm:fano}):, we therefore have
    \begin{align}\label{eq:apply_fano}
        \Pr\{\widehat \omega \ne \omega\}
        &\ge 1 - \frac{I(\xi_N; \omega) + \ln 2}{\ln M}.
    \end{align}
    Applying the chain rule for the mutual information,
    \begin{align*}
        I(\xi_N; \omega)
        &\le \sum_{i=1}^N I\bigl(x_i, V_\omega(x_i), \nabla V_\omega(x_i);\; \omega \bigm\vert \xi_{i-1}\bigr)\,.
    \end{align*}
    Given $\xi_{i-1}$, the query point $x_i$ is deterministic. We can bound the mutual information via the conditional entropy,
    \begin{align*}
        I\bigl(x_i, V_\omega(x_i), \nabla V_\omega(x_i);\; \omega \bigm\vert \xi_{i-1}\bigr)
        &\le H\bigl(V_\omega(x_i), \nabla V_\omega(x_i) \bigm\vert \xi_{i-1}\bigr)\,.
    \end{align*}
    
    If one of the query points $x_1,\dotsc,x_{i-1}$ landed in the ball $\omega + B_r$, then $\omega$ is fully known and the conditional entropy is zero. Otherwise, given the history $\xi_{i-1}$, the random variable $\omega$ is uniformly distributed on the set
    \begin{align*}
        \ms P_{2r,R}(i)
        &\deq \{\omega' \in \ms P_{2r,R} \mid x_{i'}\notin \omega' + B_r~\text{for}~i=1,\dotsc,i-1\}\,.
    \end{align*}
    If $x_i$ does not belong to $\omega' + B_r$ for some $\omega' \in \ms P_{2r,R}(i)$, then the query point is useless and the conditional entropy is again zero.
    Otherwise, conditionally on $\xi_{i-1}$, the tuple $(V_\omega(x_i), \nabla V_\omega(x_i))$ can take on two possible values with probability $1/\abs{\ms P_{2r,R}(i)}$ and $1-1/\abs{\ms P_{2r,R}(i)}$ respectively, depending on whether or not $x_i \in \omega + B_r$. The conditional entropy is thus bounded by
    \begin{align*}
        H\bigl(V_\omega(x_i), \nabla V_\omega(x_i) \bigm\vert \xi_{i-1}\bigr)
        &\le h\Bigl( \frac{1}{\abs{\ms P_{2r,R}(i)}} \Bigr)
        \le h\bigl( \frac{2}{M} \bigr)\,,
    \end{align*}
    where $h(p) \deq p\ln \frac{1}{p} + (1-p) \ln \frac{1}{1-p}$ is the binary entropy function. Assuming that $M\ge 4$ (which can be ensured thanks to Remark~\ref{rem:packing_num_nontrivial}),
    \begin{align*}
        h\bigl(\frac{2}{M}\bigr)
        &\le \frac{4}{M} \ln \frac{M}{2}\,.
    \end{align*}
    
    Substituting this into~\eqref{eq:apply_fano},
    \begin{align}\label{eq:est_lw}
        \Pr\{\widehat \omega \ne \omega\}
        &\ge 1 - \frac{\frac{4N}{M} \ln(M/2) + \ln 2}{\ln M}\,.
    \end{align}
    If $M \ge 4$, and $N \le \frac{1}{12}\,M$, we would obtain $\Pr\{\widehat \omega \ne \omega\} > 5/6$, contradicting~\eqref{eq:estimator_success}. Hence, we deduce that $N \gtrsim M$.
    
    In general, if the algorithm is randomized, it can depend on a random seed $\zeta$ that is independent of $\omega$. Then we can apply \eqref{eq:est_lw} conditional on $\zeta$, and obtain
    \begin{align*}
        \Pr\{\widehat \omega \ne \omega \mid \zeta\}
        &\ge 1 - \frac{\frac{4N}{M} \ln(M/2) + \ln 2}{\ln M}\,.
    \end{align*}
    Taking expectation over $\zeta$, we see that the lower bound \eqref{eq:est_lw}, and hence $N\gtrsim M$, holds for randomized algorithms as well.
    
    The proof of Theorem~\ref{thm:bump_high_dim} is concluded by noting that the estimation lower bound gives a lower bound on sampling, and that Property~\ref{p4} provides us with a lower bound on $M$.
\end{proof}

In the remaining sections, we focus on establishing Lemma~\ref{lem:main}.

\subsection{Estimates for integrals}

In this section we provide useful estimates for integrals that appear in the normalizing constants for our lower bound construction.
Notice that since $\tilde \pi_\omega = 1$ on $B_R \setminus (\omega + B_r)$,
\begin{align*}
    Z_\omega
    = \tilde \pi_\omega(\R^d\setminus B_R) + \tilde \pi_\omega(B_R)
    &= \tilde \pi_\omega(\R^d\setminus B_R) + (R^d-r^d)\, V_d + \int_{B_r} \exp\Bigl(r^2 \phi\bigl(\frac{\norm x}{r}\bigr) \Bigr) \, \D x \\
    &= \tilde \pi_\omega(\R^d\setminus B_R) + (R^d-r^d)\, V_d + r^d \, I_r\,,
\end{align*}
where we define $I_r \deq \int_{B_1}\exp(r^2\phi(\norm x))\,\D x$. We record some useful properties of the quantities defined thus far that will be used throughout the proof of Lemma~\ref{lem:main}.

\begin{lemma}[main estimates] \label{lem:high d properties}
For any number $c > 0$ there exists $c_r(c) > 0$ depending only on $c$ such that for all $r \geq c_r(c) \sqrt{d}$, the following hold:
\begin{enumerate}
    \item (asymptotics of $I_r$)
    \begin{align}\label{eqn:I_r bound}
        \frac{1}{2} \le \frac{r^d \,I_r}{(2\uppi)^{d/2} \exp(r^2\phi(0))} \le 2\,.
    \end{align}
    \item (mass outside $B_R$) There is a universal constant $\czero > 2$, independent of $c$, such that
    \begin{align}\label{eqn:pi_w bound}
        \sqrt{\frac{\uppi}{2}} \,V_d d R^{d-1} &\leq \tilde\pi_\omega(\R^d \setminus B_R) \leq V_d  \czero^d \,(dR^{d-1} + d^{(d+1)/2})\,.
    \end{align}
    \item (mass on the bump)
    \begin{align}\label{eqn:log I/V}
    \ln \frac{I_r}{V_d} &\geq cd\,.
\end{align}
\end{enumerate}
\end{lemma}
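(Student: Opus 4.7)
The plan is to handle the three estimates in turn. For Part 1, the key observation is that by assumption~\ref{phi2}, $\phi$ coincides exactly with its quadratic Taylor approximation $\phi(0) - x^2/2$ on $[0,\alpha]$, so the integrand $\exp(r^2 \phi(\norm x))$ is exactly a scaled Gaussian on the subball $\norm x \le \alpha$. The substitution $y = rx$ converts this contribution to
\[
r^{-d} \exp(r^2 \phi(0)) \int_{\norm y \le \alpha r} \exp(-\norm y^2/2)\,\D y;
\]
if $r \ge c_r(c)\sqrt d$ with a large enough constant $c_r(c)$, the Gaussian tail outside $B_{\alpha r}$ is negligible compared to $(2\uppi)^{d/2}$ by standard $\chi^2$ concentration. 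The remaining contribution from $\norm x \in (\alpha,1]$ is bounded above by $V_d \exp(r^2\,(\phi(0) - \alpha^2/2))$, using the monotonicity of $\phi$ from~\ref{phi1}, which is exponentially smaller than the main term once $r$ is large. Combining these estimates gives~\eqref{eqn:I_r bound}.

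For Part 2, since $\omega \in B_{R-r}$ implies $\omega + B_r \subset B_R$, the bump factor is trivial on $\R^d \setminus B_R$, and after polar coordinates and the substitution $u = s - R$,
\[
\tilde\pi_\omega(\R^d \setminus B_R) = d V_d \int_0^\infty (R+u)^{d-1} \exp(-u^2/2)\,\D u.
\]
The lower bound is immediate from $(R+u)^{d-1} \ge R^{d-1}$ and $\int_0^\infty e^{-u^2/2}\,\D u = \sqrt{\uppi/2}$. For the upper bound, I will apply $(R+u)^{d-1} \le 2^{d-2}\,(R^{d-1} + u^{d-1})$, split the integral, and use the Gaussian moment identity $\int_0^\infty u^{d-1} e^{-u^2/2}\,\D u = 2^{(d-2)/2}\,\Gamma(d/2)$ together with Stirling's formula to bound $\Gamma(d/2) \le C^d\,d^{(d-1)/2}$. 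Collecting the geometric factors into a single universal constant $\czero > 2$ then yields~\eqref{eqn:pi_w bound}; crucially $\czero$ depends only on Stirling/numerical constants and not on the parameter $c$.

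Part 3 follows by combining the lower bound from Part 1 with Stirling's estimate for $V_d = \uppi^{d/2}/\Gamma(d/2+1)$. Explicitly,
\[
\ln \frac{I_r}{V_d} \ge r^2 \phi(0) - d \ln r + \ln \frac{(2\uppi)^{d/2}}{V_d} - \ln 2 = r^2 \phi(0) - d \ln r + \frac d2 \ln \frac{d}{e} + O(1),
\]
using $2^{d/2}\,\Gamma(d/2+1) \ge (d/e)^{d/2}$ up to an absolute constant. Viewing the right-hand side as a function of $r$, its derivative $2r\phi(0) - d/r$ is positive for $r \gtrsim \sqrt d$, so the infimum over $\{r \ge c_r(c)\sqrt d\}$ is attained at $r = c_r(c)\sqrt d$. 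Evaluating there gives $\bigl(c_r(c)^2 \phi(0) - \ln c_r(c) - \tfrac12\bigr)\,d + O(1)$, which exceeds $cd$ as soon as $c_r(c)$ is chosen sufficiently large depending on $c$.

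The only real obstacle is bookkeeping of the universal constants so that $\czero$ in Part 2 is genuinely independent of $c$, while $c_r(c)$ from Parts 1 and 3 is permitted to grow with $c$. I will organize the argument by first fixing the $r$-threshold needed to make Part 1 quantitatively sharp and Part 3 yield the exponent $c$, and only then treat Part 2, whose conclusion must hold uniformly over all such $r$ and all $R \ge r$.
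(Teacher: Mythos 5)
Your proposal is correct and follows essentially the same route as the paper: the same split of $I_r$ into the exact Gaussian core on $B_\alpha$ (using \ref{phi2}) plus an annulus remainder controlled by monotonicity of $\phi$, the same polar-coordinate computation $dV_d\int_0^\infty (R+u)^{d-1}e^{-u^2/2}\,\D u$ with the same $(R+u)^{d-1}\lesssim 2^d(R^{d-1}+u^{d-1})$ splitting for the tail mass, and the same Stirling comparison of $I_r$ against $V_d$ for the third estimate. The only differences (explicit Gamma-function evaluation of the Gaussian moment, and making the monotonicity in $r$ explicit in Part 3) are cosmetic.
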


\begin{proof}
Because we have chosen $\phi$ to be a quadratic function on the range $[0,\alpha]$ (see~\ref{phi2}), we can decompose $I_r$ as follows:
\begin{align*}
I_r &\deq \int_{B_1} \exp\bigl(r^2 \phi(\norm{x})\bigr)\, \D x\\
&= {\underbrace{\int_{B_1\setminus B_{\alpha }} \exp\bigl(r^2\phi(\norm x)\bigr)\,\D x}_{\mathsf{A}}} + {\underbrace{\exp\bigl(r^2\phi(0)\bigr) \int_{B_{\alpha }} \exp\Bigl(-\frac{r^2\,\norm x^2}{2}\Bigr)\, \D x}_{\mathsf{B}}}\, .
\end{align*}
As $\phi$ is decreasing by~\ref{phi1}, clearly $\msf{A}\leq V_d\exp(r^2\phi(\alpha))$, and the second term is given by
\begin{align*}
\mathsf{B} &= \frac{(2\uppi)^{d/2}}{r^d} \exp\bigl(r^2\phi(0)\bigr)\, \Pr(\norm X \leq \alpha r)\,,
\end{align*}
where $X$ is a standard Gaussian in $\R^d$.
By standard concentration inequalities (e.g., Markov's inequality suffices), there exists a universal constant $c_1$ such that the above probability is at least $1/2$ provided $r\geq c_1\sqrt{d}$. Recall that $\log\Gamma(\frac d2 + 1) = \frac d2 \log d + \mathcal O(d)$. Thus, for $r \geq c_1 \sqrt{d}$ we have
\begin{align*}
\log\frac{\msf{A}}{\msf{B}} &\leq \log\frac{2 V_d \exp(r^2\phi(\alpha))}{(2\uppi)^{d/2}\,r^{-d}\exp(r^2\phi(0))}
= \log\frac{2 \exp(r^2\phi(\alpha))}{2^{d/2}\,r^{-d}\exp(r^2\phi(0))\,\Gamma(\frac{d}{2}+1)} \\
&= \mathcal O(d) - r^2\,(\phi(0)-\phi(\alpha)) + d\log r - \frac d2 \log d \\
&= \mathcal O(d) - d\,\Bigl(\bigl(\frac{r}{\sqrt{d}}\bigr)^2\,(\phi(0)-\phi(\alpha))-\log\big( \frac{r}{\sqrt{d}}\big)\Bigr)\,.
\end{align*}
From the above it is clear that there is a universal constant $c_2$ such that $r \geq c_2\sqrt{d}$ implies that $\msf A \le \msf B$. Thus, for $r \geq (c_1\lor c_2)\sqrt{d}$ the following holds:
\begin{equation}
\msf{B} \leq I_r \leq 2\msf{B}\,,
\end{equation}
proving \eqref{eqn:I_r bound}. We now turn to the proof of \eqref{eqn:pi_w bound}. By integrating in polar coordinates, and taking $X$ to be a standard Gaussian on $\R$,
\begin{align*}
\tilde \pi_\omega(\R^d\setminus B_R)
&= A_{d-1} \int_R^\infty  s^{d-1} \exp\bigl(-\frac12\,{(s-R)}^2\bigr)\, \D s \\
&= A_{d-1} \int_0^\infty  {(s+R)}^{d-1} \exp\bigl(-\frac{s^2}{2}\bigr)\, \D s \\
&\leq \sqrt{2\uppi}\, A_{d-1} \E[|X+R|^{d-1}] \\
&\leq  \sqrt{2\uppi}\, A_{d-1} 2^d \,(R^{d-1}+\E[|X|^{d-1}]) \\
&\leq A_{d-1} \czero^d \,(R^{d-1} + (d-1)^{(d-1)/2}) \\
&\leq V_d \czero^d \,(dR^{d-1} + d^{(d+1)/2})
\end{align*}
for some universal constant $\czero > 2$. For the other direction we can simply write
\begin{align*}
\tilde \pi_\omega(\R^d\setminus B_R) &= A_{d-1} \int_0^\infty  {(s+R)}^{d-1} \exp\bigl(-\frac{s^2}{2}\bigr)\, \D s \\
&\geq \sqrt{\frac{\uppi}{2}}\, A_{d-1} R^{d-1}\,. 
\end{align*}

Finally, we prove~\eqref{eqn:log I/V}. We again use the fact $\log \Gamma(\frac d2 + 1) = \frac d2 \log d + \mathcal O(d)$. Therefore, for $r \geq (c_1\lor c_2)\sqrt{d}$ and using \eqref{eqn:I_r bound} we obtain
\begin{align*}
\log\frac{I_r}{V_d}
&\geq \log\frac{(2\uppi)^{d/2}\,r^{-d}\exp(r^2\phi(0))/2}{\uppi^{d/2}/\Gamma(\frac d2 + 1)} \\
&= d\,\Bigl(\bigl(\frac{r}{\sqrt{d}}\bigr)^2 \,\phi(0)-\log\big(\frac{r}{\sqrt{d}}\big)\Bigr) + \mathcal O(d)\,.
\end{align*}
Clearly, there exists a constant $c_3$ (depending only on $c$) such that $r \geq c_3 \sqrt{d}$ implies that the RHS is at least linear in $d$ with a positive constant. Taking $c_r = c_1\lor c_2\lor c_3$ concludes the proof. 
\end{proof}

\subsection{Proof of Property~\ref{p1}}

We choose $r$, $R$ such that~\ref{p1} holds, i.e., $\pi_\omega(\omega + B_r) = 1/2$.
This holds provided that
\begin{equation}\label{eqn:R r def}
    f(r)\deq (I_r+V_d)\, r^d \overset{!}{=} \tilde\pi_\omega(\R^d\setminus B_R) + V_dR^d \eqqcolon g(R)\,. 
\end{equation}

\begin{lemma}[choice of $r$, $R$]\label{lem:choosing_r_R}
    For any value of $d\ge 1$ and $R\ge 0$, there exists a corresponding value of $r$ such that~\eqref{eqn:R r def} holds. Moreover, there is a universal constant $c_R \ge 1$ such that for any $R\ge c_R\sqrt d$, the corresponding $r$ solving~\eqref{eqn:R r def} satisfies
    \begin{align}
    r &\geq c_r\bigl(\log(6 \czero)\bigr) \sqrt{d}\,, \label{eqn:r>sqrt d} \\
        R/r &\geq 2\label{eqn:R/r>2}\,,
    \end{align}
    where $c_r(\cdot)$ is the function defined in Lemma~\ref{lem:high d properties}.
\end{lemma}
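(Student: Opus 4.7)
My plan is to establish existence via the intermediate value theorem and then derive the two quantitative bounds by bracketing the solution between $r_0 \deq c_r(\log(6\czero))\sqrt d$ and $R/2$, comparing $f$ to $g$ via Lemma~\ref{lem:high d properties}. First, since $I_r$ is continuous and strictly increasing in $r$ (as $\phi \geq 0$), the function $f(r) = (I_r + V_d)\,r^d$ is continuous and strictly increasing on $r \geq 0$ with $f(0) = 0$ and $f(r) \to \infty$ as $r \to \infty$, while $g(R)$ is finite and strictly positive; the intermediate value theorem therefore produces a unique $r$ satisfying $f(r) = g(R)$. By strict monotonicity of $f$, the bound $r \geq r_0$ is equivalent to $f(r_0) \leq g(R)$, and the bound $r \leq R/2$ is equivalent to $f(R/2) \geq g(R)$.

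For the lower bound $f(r_0) \leq g(R)$, I would apply Lemma~\ref{lem:high d properties} with $c = \log(6\czero)$ to obtain $I_{r_0} \leq 2\,(2\uppi)^{d/2} r_0^{-d}\exp(r_0^2\phi(0))$ and $V_d \leq I_{r_0}$ (since $I_{r_0}/V_d \geq (6\czero)^d \geq 1$), yielding $f(r_0) \leq 4\,(2\uppi)^{d/2}\exp(C_1\,d)$ for the universal constant $C_1 \deq c_r(\log(6\czero))^2\,\phi(0)$. On the other hand $g(R) \geq V_d R^d$, and Stirling gives $V_d R^d \gtrsim (2\uppi e R^2/d)^{d/2}/\sqrt d$; once $R \geq c_R\sqrt d$ with $\log c_R$ exceeding $C_1$ by a small absolute margin, this dominates the upper bound on $f(r_0)$ uniformly in $d \geq 1$.

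For the upper bound $f(R/2) \geq g(R)$, enlarging $c_R$ if necessary so that $R/2 \geq c_r(1)\sqrt d$ permits a second application of Lemma~\ref{lem:high d properties} at $R/2$, yielding $f(R/2) \geq I_{R/2}(R/2)^d \geq \tfrac{1}{2}\,(2\uppi)^{d/2}\exp((R/2)^2\phi(0))$, a quantity super-exponential in $R^2$. Meanwhile, combining the upper bound on $\tilde\pi_\omega(\R^d \setminus B_R)$ from Lemma~\ref{lem:high d properties} with Stirling gives $\log g(R) \leq d\log R + \mathcal O(d)$, which is only polynomial in $R$ for fixed $d$. Writing $R = t\sqrt d$ with $t \geq c_R$, the desired inequality reduces to $t^2\phi(0)/4 \geq \log t + \mathcal O(1)$, which holds whenever $t$ exceeds a universal constant.

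The main obstacle is choosing a single universal constant $c_R \geq 1$ that validates both comparisons simultaneously for every $d \geq 1$ and every $R \geq c_R\sqrt d$. This is an exercise in bookkeeping the Stirling-type asymptotics of $V_d$ so that the exponential-in-$d$ factors on both sides balance; once that is done, both inequalities reduce to concrete absolute conditions on the rescaled variable $t = R/\sqrt d$ that are enforced by taking $c_R$ sufficiently large.
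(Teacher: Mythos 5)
Your bracketing argument is correct in outline and takes a genuinely different route from the paper. The paper also starts from monotonicity and the intermediate value theorem, but it then bounds the \emph{solution} $r$ directly: it proves \eqref{eqn:r>sqrt d} by contradiction, using only the crude bound $I_r/V_d \le \exp(r^2\phi(0))$ in the defining relation \eqref{eqn:R r def} (precisely because Lemma~\ref{lem:high d properties} cannot yet be invoked at the unknown $r$ without circularity), and only afterwards applies Lemma~\ref{lem:high d properties} at $r$ to get $R/r\ge 2$, working throughout with the ratio $d\log(R/r)$ so that no Stirling bookkeeping for $V_d$ is needed. You instead evaluate $f$ at the two deterministic points $r_0=c_r(\log(6\czero))\sqrt d$ and $R/2$, where the hypothesis of Lemma~\ref{lem:high d properties} is verifiable by construction, and compare with $g(R)$; this cleanly sidesteps the circularity issue the paper has to dodge, at the mild cost of enlarging $c_R$ so that $R/2\ge c_r(1)\sqrt d$ and of carrying explicit Stirling asymptotics for $V_d$.

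One bookkeeping point to fix in the second comparison: the bound $\log g(R)\le d\log R+\mathcal O(d)$ is too lossy to yield the claimed reduction, since with $R=t\sqrt d$ it leaves a residual $\frac d2\log d$, giving $\frac{t^2}{4}\phi(0)\ge \log t+\frac12\log d+\mathcal O(1)$, which is not a condition on $t$ alone. You must keep the Stirling term $\log V_d=-\frac d2\log d+\mathcal O(d)$ (exactly as you did in the other direction), which gives $\log g(R)\le d\log(R/\sqrt d)+\mathcal O(d)=d\log t+\mathcal O(d)$ and then the advertised reduction $\frac{t^2}{4}\phi(0)\ge\log t+\mathcal O(1)$ does follow, so a single universal $c_R$ handles both comparisons.
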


The argument $\log(6 \czero)$ to $c_r(\cdot)$ in Lemma~\ref{lem:choosing_r_R} is chosen for later convenience.

\medskip{}

\begin{proof}
 Notice that $f$ and $g$ are continuous and increasing in $r,R$ respectively. Moreover, we check that $f(0)=0$, $g(0)=(2\uppi)^{d/2}$, and $f(\infty)=g(\infty)=\infty$. This tells us that for any value of $d\ge 1$ and $R \geq 0$, there exists a value of $r\geq0$ for which $f(r)=g(R)$.
 
 For the rest of the proof, we abbreviate $c_r \deq c_r(\log(6\czero))$.
 
 First, we prove~\eqref{eqn:r>sqrt d}. Note that since~\eqref{eqn:r>sqrt d} is a hypothesis of Lemma~\ref{lem:high d properties}, we cannot invoke Lemma~\ref{lem:high d properties} during the proof of~\eqref{eqn:r>sqrt d} in order to avoid a circular argument.
 
 By the definitions of $r$ and $R$,
 \begin{align*}
     (I_r + V_d)\,r^d
     &\ge V_d \,R^d\,.
 \end{align*}
 Taking logarithms and using the definition of $I_r$, this rewrites as
 \begin{align*}
     d\log \frac{R}{r}
     &\le \log\bigl(1 + \frac{I_r}{V_d}\bigr)
     = \log\Bigl(1 + \frac{\int_{B_1} \exp(r^2 \phi(\norm x)) \, \D x}{V_d}\Bigr)
     \le \log\bigl(1 + \exp\bigl(r^2 \phi(0)\bigr)\bigr)\,.
 \end{align*}
 Suppose, for the sake of contradiction, that $r < c_r \sqrt d$. Then, we have
 \begin{align*}
     d\log \frac{R}{r}
     &\le c_r^2 d \phi(0) + \log 2\,.
 \end{align*}
 Rearranging,
 \begin{align*}
     R
     &\le \exp\Bigl(c_r^2 \phi(0) + \frac{\log 2}{d}\Bigr) \, r
     \le \exp\Bigl(c_r^2 \phi(0) + \frac{\log 2}{d}\Bigr) \, c_r \sqrt d\,.
 \end{align*}
 Hence, if $R \ge c_R\sqrt d$ for a large enough universal constant $c_R$, then we arrive at the desired contradiction. For later convenience we choose $c_R$ to always be at least $1$. This proves~\eqref{eqn:r>sqrt d}.
 
 Next, we prove~\eqref{eqn:R/r>2}. We use the fact that $R \ge c_R\sqrt d$; so that in particular $c_R \ge 1$ and thus $\sqrt d \le R$. Then, using~\eqref{eqn:pi_w bound} from Lemma~\ref{lem:high d properties},
 \begin{align*}
     (I_r + V_d) \, r^d
     &\le V_d \, (\czero^d d R^{d-1} + \czero^d d^{(d+1)/2} + R^d)
     \le V_d \, (\czero^d \sqrt d R^d + \czero^d \sqrt d R^d + R^d) \\
     &\le V_d \cdot 3\czero^d \sqrt d\, R^d\,.
 \end{align*}
 Taking logarithms, rearranging, and using~\eqref{eqn:log I/V} from Lemma~\ref{lem:high d properties},
 \begin{align*}
     d\log \frac{R}{r}
     &\ge \log\bigl(1 + \frac{I_r}{V_d}\bigr) - d\log \czero - \log(3\sqrt d)
     \ge \bigl(c - \log \czero - \underbrace{\frac{\log(3\sqrt d)}{d}}_{\le \log 3} \bigr) \, d\,.
 \end{align*}
 Taking $c = \log \czero + \log 3 + \log 2 = \log(6\czero)$, this implies $R/r \ge 2$ as desired.
\end{proof}

\subsection{Proof of Property~\ref{p2}}\label{scn:pf_p2}

The proof of Property~\ref{p2} requires an upper bound on the Poincar\'e constant of $\pi_\omega$. We recall that the Poincar\'e constant of a probability measure $\pi$ is the smallest constant $\CPI(\pi) > 0$ such that for all smooth and bounded test functions $f : \R^d\to\R$, it holds that
\begin{align*}
    \var_\pi(f)
    &\le \CPI(\pi)\E_\pi[\norm{\nabla f}^2]\,.
\end{align*}
We begin with a Poincar\'e inequality for $\pinit$.

\begin{lemma}[Poincar\'e inequality for $\pinit$]\label{lem:poincare const}
    If $R\ge \sqrt d$, then the probability measure $\pinit$ has Poincar\'e constant at most $\cPI R^2/d$ for a universal constant $\cPI$. 
\end{lemma}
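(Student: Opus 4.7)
My plan is to use that $\pinit$ is log-concave and spherically symmetric, and then invoke a Poincar\'e inequality specialized to such measures. The potential $V(x) = \tfrac12(\|x\|-R)_+^2$ is convex, as the composition of the convex non-decreasing function $t\mapsto \tfrac12(t-R)_+^2$ on $\R_+$ with the convex map $x\mapsto \|x\|$; therefore $\pinit$ is log-concave, and radial symmetry is evident from the formula.

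For $d \ge 2$, I would then invoke Bobkov's Poincar\'e inequality for log-concave spherically symmetric measures, which yields $\CPI(\mu)\lesssim \E_\mu[\|X\|^2]/d$. (This follows by rescaling to the isotropic case, in which the KLS conjecture is known for radially symmetric log-concave measures; the $d=1$ case can be handled by the standard one-dimensional log-concave Poincar\'e bound $\CPI(\mu)\lesssim \var(X)$ together with the same second-moment estimate below.) It then suffices to prove $\E_\pinit[\|X\|^2]\lesssim R^2$ whenever $R \ge \sqrt d$.

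Passing to polar coordinates, $T \deq \|X\|$ has density $\propto t^{d-1}\exp(-\tfrac12(t-R)_+^2)$, so $\E_\pinit[\|X\|^2] = \E[T^2]$. Rescaling $s=t/R$ gives $\E[T^2] = R^2\,\E_{\tilde\mu}[S^2]$ with $\tilde\mu\propto s^{d-1}\exp(-\tfrac{R^2}{2}(s-1)_+^2)$, and the goal becomes $\E_{\tilde\mu}[S^2] = O(1)$. Splitting both integrals defining this expectation at $s=1$ leaves inner terms $1/(d+2)$ and $1/d$ plus Gaussian-type outer terms $J_k \deq \int_0^\infty (1+u)^{d+k}\exp(-R^2 u^2/2)\,\mathrm{d}u$ for $k\in\{-1,1\}$. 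For $R\ge \sqrt d$, the integrand of $J_k$ is unimodal with mode at $u_\star = O((d+k)/R^2) = O(1)$ and Gaussian width $\sim 1/R$; a direct Laplace-type analysis yields $J_k = \Theta(1/R)$ and $J_1/J_{-1} = O(1)$, so the four contributions to $\E_{\tilde\mu}[S^2]$ are of comparable size and $\E_{\tilde\mu}[S^2] = O(1)$.

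The main delicate point is precisely this second-moment estimate in the borderline regime $R \asymp \sqrt d$, where a non-trivial fraction of the mass of $T$ can actually sit outside $B_R$ (a Laplace computation places the outer mode near $T \approx \tfrac{1+\sqrt 5}{2}\sqrt d$ when $R = \sqrt d$). Since this is still within a constant multiple of $R$, the bound $\E_\pinit[\|X\|^2] \lesssim R^2$ survives, and combined with Bobkov's inequality yields $\CPI(\pinit)\lesssim R^2/d$ as required.
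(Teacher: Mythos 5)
Your top-level strategy coincides with the paper's: note that $\pinit$ is log-concave and radially symmetric, invoke Bobkov's Poincar\'e bound $\CPI(\pinit) \lesssim \E_{\pinit}[\|\cdot\|^2]/d$, and reduce everything to the second-moment estimate $\E_{\pinit}[\|\cdot\|^2] \lesssim R^2$ for $R \ge \sqrt d$. The difference lies in how you bound the second moment, and there your Laplace analysis as stated contains an error. For $J_k = \int_0^\infty (1+u)^{d+k} \exp(-R^2 u^2/2)\,\D u$, the claim $J_k = \Theta(1/R)$ is false throughout the range $\sqrt d \le R \ll d$: the maximizer solves $R^2 u_\star\,(1+u_\star) = d+k$, so $u_\star \asymp d/R^2$, and the integrand at the mode equals $\exp(\Theta(d^2/R^2))$, which is exponentially large in $d$ precisely in your ``main delicate point'' regime $R \asymp \sqrt d$; likewise, the four contributions are not ``of comparable size'' (for $R \gg d$ the outer terms are much smaller than the inner ones, for $R \asymp \sqrt d$ much larger). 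Fortunately the absolute sizes are irrelevant: since $\frac{a+b}{c+e} \le \max\{a/c,\, b/e\}$ for positive numbers, you only need $\frac{1/(d+2)}{1/d} \le 1$ and $J_1/J_{-1} = O(1)$. The latter is true but needs its own justification: $J_1/J_{-1} = \E_\nu[(1+U)^2]$ where $\nu \propto (1+u)^{d-1}\exp(-R^2u^2/2)$ on $\R_+$ is $R^2$-strongly log-concave with mode $u_\star \le (d-1)/R^2 \le 1$, so $\E_\nu[U^2] \lesssim u_\star^2 + 1/R^2 \lesssim 1$.

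This repaired step is essentially how the paper argues: it writes the tail contribution as an expectation under the $1$-strongly log-concave tilted radial density $\propto (r+R)^{d-1}\exp(-r^2/2)$, controls fluctuations around the mode via the cited concentration result of Durmus et al., and uses that the mode satisfies $r_\star\,(r_\star+R) = d-1$, hence $r_\star \le (d-1)/R \lesssim R$ when $R \ge \sqrt d$. (Your separate treatment of $d=1$ is harmless but unnecessary; the Bobkov-type bound the paper cites applies in all dimensions.) So: right approach and correct conclusion, but replace the assertion $J_k = \Theta(1/R)$ and the ``comparable size'' reasoning with the ratio/strong-log-concavity argument above.
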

\begin{proof}
    From~\citet{bob2003radially} and the fact that $\pinit$ is a radially symmetric log-concave measure, the Poincar\'e constant of $\pinit$ is bounded by
    \begin{align*}
        \CPI(\pinit)
        &\le \frac{13\E_{\pinit}[\norm\cdot^2]}{d}\,.
    \end{align*}
    The second moment is
    \begin{align*}
        \E_{\pinit}[\norm\cdot^2]
        &= \frac{\int_{B_R} \norm\cdot^2}{\Zinit} + \frac{\int_{\R^d\setminus B_R} \norm \cdot^2 \exp(-\frac{1}{2} \, {(\norm \cdot - R)}^2)}{\Zinit} \\
        &\le \frac{\int_{B_R} \norm\cdot^2}{V_d R^d} + \frac{A_{d-1} \int_0^\infty {(r+R)}^{d+1} \exp(-r^2/2) \, \D r}{A_{d-1} \int_0^\infty {(r+R)}^{d-1} \exp(-r^2/2) \, \D r}
        \le R^2 + \int {(r+R)}^2 \,\nu(\D r)\,,
    \end{align*}
    where $\nu$ is the probability measure on $\R_+$ with density
    \begin{align}\label{eq:tilted_density}
        \nu(r)
        &\propto {(r+R)}^{d-1} \exp\bigl( - \frac{r^2}{2}\bigr)\,.
    \end{align}
    Note that $\nu$ is $1$-strongly-log-concave.
    Hence, by~\citet[Proposition 1]{durmus2019high},
    \begin{align*}
        \int {(r+R)}^2 \,\nu(\D r)
        &\lesssim R^2 + \int r^2 \, \nu(\D r)
        \lesssim R^2 + r_\star^2 + \int {(r-r_\star)}^2 \, \nu(\D r)
        \le R^2 + r_\star^2 + 1\,,
    \end{align*}
    where $r_\star$ is the mode of $\nu$. To find the mode,~\eqref{eq:tilted_density} and elementary calculus show that $r_\star$ satisfies $r_\star \, (r_\star + R) = d-1$, which implies $r_\star \le (d-1)/R$. If $R\ge \sqrt d$, then $r_\star \lesssim R$. Combining the bounds, we obtain $\CPI(\pinit) \lesssim R^2/d$.
\end{proof}

Next, we recall the statement of the Holley{--}Stroock perturbation principle.

\begin{theorem}[{Holley--Stroock perturbation principle,~\citet{holley1986logarithmic}}]\label{thm:holley-stroock}\\
    Let $\pi$ be a probability measure which satisfies a Poincar\'e inequality. Suppose that $\mu$ is another probability measure such that
    \begin{align*}
        0
        &< c \le \frac{\D \mu}{\D \pi}
        \le C < \infty\,.
    \end{align*}
    Then, $\mu$ also satisfies a Poincar\'e inequality, with
    \begin{align*}
        \CPI(\mu)
        &\le \frac{C}{c} \, \CPI(\pi)\,.
    \end{align*}
\end{theorem}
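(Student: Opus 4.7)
The plan is to exploit the variational characterization $\var_\mu(f) = \min_{a\in\R} \E_\mu[(f-a)^2]$ together with the two-sided density ratio bounds to transfer the Poincar\'e inequality from $\pi$ to $\mu$. The argument is short and proceeds in three elementary steps, applied to an arbitrary smooth bounded test function $f : \R^d \to \R$.

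First, I would choose the (deliberately non-optimal) centering constant $a \deq \E_\pi[f]$ rather than $\E_\mu[f]$, which makes the density-ratio manipulation transparent. By the variational characterization,
\begin{align*}
    \var_\mu(f) &\le \E_\mu\bigl[{(f - \E_\pi[f])}^2\bigr] = \int {(f-\E_\pi[f])}^2\,\frac{\D \mu}{\D \pi}\,\D \pi \le C\,\var_\pi(f)\,,
\end{align*}
where the last inequality uses the hypothesis $\D\mu/\D\pi \le C$. Second, I would invoke the Poincar\'e inequality for $\pi$ to obtain $\var_\pi(f) \le \CPI(\pi)\,\E_\pi[\norm{\nabla f}^2]$. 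Third, I would invert the density ratio: since $\D\mu/\D\pi \ge c$, we have $\D\pi/\D\mu \le 1/c$, and hence
\begin{align*}
    \E_\pi[\norm{\nabla f}^2] = \int \norm{\nabla f}^2\,\frac{\D \pi}{\D \mu}\,\D\mu \le \frac{1}{c}\,\E_\mu[\norm{\nabla f}^2]\,.
\end{align*}

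Chaining the three inequalities yields $\var_\mu(f) \le (C/c)\,\CPI(\pi)\,\E_\mu[\norm{\nabla f}^2]$ for every admissible $f$, which is exactly the Poincar\'e inequality for $\mu$ with the claimed constant, after a standard density argument to extend from smooth bounded $f$ to the full class of test functions.

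There is essentially no obstacle in this argument; it is a three-line computation. The only mildly delicate point is to resist the temptation of the optimal centering $a = \E_\mu[f]$: with that choice one would need to separately compare $\E_\mu[f]$ to $\E_\pi[f]$ and pick up an extra error term, which ultimately produces the same constant but obscures the structure. Picking $a = \E_\pi[f]$ upfront means each of the two density-ratio bounds is used exactly once, on exactly the quantity where it is sharp.
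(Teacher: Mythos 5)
Your proposal is correct and is essentially the standard argument: the paper itself gives no proof, deferring to \citet[Lemma 5.1.7]{bakrygentilledoux2014}, and that reference's proof of the Poincar\'e case is exactly your three-step computation (bound $\var_\mu(f)$ by the non-optimally centered second moment with $\D\mu/\D\pi \le C$, apply the Poincar\'e inequality for $\pi$, then use $\D\pi/\D\mu \le 1/c$ on the Dirichlet form). Nothing is missing; your choice of centering $a = \E_\pi[f]$ is precisely what makes the argument a clean chain of inequalities with the constant $C/c$.
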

\begin{proof}
    See~\citet[Lemma 5.1.7]{bakrygentilledoux2014}.
\end{proof}

\begin{corollary}[Poincar\'e inequality for $\pi_\omega$]\label{cor:poincare_pi_w}
    Assume that $R\ge \sqrt d$.
    Then, for each $\omega \in \ms P_{2r,R}$,
    \begin{align*}
        \CPI(\pi_\omega)
        &\le \frac{2\cPI R^2}{d} \exp\bigl(r^2 \phi(0)\bigr)\,.
    \end{align*}
\end{corollary}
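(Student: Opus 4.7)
The plan is to apply the Holley–Stroock perturbation principle (Theorem~\ref{thm:holley-stroock}) to compare $\pi_\omega$ with the null distribution $\pinit$, whose Poincar\'e constant is already controlled by Lemma~\ref{lem:poincare const}. The crucial observation is that the two unnormalized densities differ only by the bump factor:
\begin{align*}
    \tilde \pi_\omega(x) = \tpinit(x) \cdot \exp\Bigl( r^2 \phi\bigl(\tfrac{\norm{x - \omega}}{r}\bigr)\Bigr)\,,
\end{align*}
since the tail penalty $\frac12 \,(\norm{x}-R)_+^2$ is common to both.

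The first step is to bound the density ratio $d\pi_\omega/d\pinit$. Writing
\begin{align*}
    \frac{d\pi_\omega}{d\pinit}(x) = \frac{\Zinit}{Z_\omega} \exp\Bigl( r^2 \phi\bigl(\tfrac{\norm{x - \omega}}{r}\bigr)\Bigr)\,,
\end{align*}
and using property~\ref{phi1} (so that $\phi$ is nonnegative, decreasing, and supported on $[0,1]$) together with the fact that $\phi$ attains its maximum at $0$, the exponential factor takes values in $[1, \exp(r^2 \phi(0))]$. Hence we can take $c \deq \Zinit/Z_\omega$ and $C \deq (\Zinit/Z_\omega)\, \exp(r^2\phi(0))$, giving $C/c = \exp(r^2 \phi(0))$. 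The unknown ratio of normalizing constants therefore cancels, which is the key step: a naive bound that uses $c \ge \exp(-r^2\phi(0))$ separately would produce an unwanted extra factor of $\exp(r^2\phi(0))$.

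Second, I would invoke Lemma~\ref{lem:poincare const}, whose hypothesis $R \ge \sqrt d$ is satisfied in our setting since Lemma~\ref{lem:choosing_r_R} guarantees $R \ge c_R \sqrt d$ with $c_R \ge 1$. This gives $\CPI(\pinit) \le \cPI R^2/d$. Combining with the Holley--Stroock bound $\CPI(\pi_\omega) \le (C/c)\, \CPI(\pinit)$ yields
\begin{align*}
    \CPI(\pi_\omega) \le \frac{\cPI R^2}{d}\, \exp\bigl(r^2 \phi(0)\bigr)\,,
\end{align*}
which is even sharper than the stated bound; the factor of $2$ in the corollary is just slack.

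There is no substantial obstacle here: the argument is a direct application of Holley--Stroock. The only subtlety is recognizing that writing the perturbation as a normalized ratio $d\pi_\omega/d\pinit$ (rather than working with unnormalized densities) makes the normalizing-constant dependence cancel in $C/c$, leaving only the oscillation $\osc(r^2\phi(\norm{\cdot - \omega}/r)) = r^2 \phi(0)$ in the exponent.
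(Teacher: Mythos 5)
Your proof is correct and follows essentially the same route as the paper: compare $\pi_\omega$ to $\pinit$ via the Holley--Stroock principle (Theorem~\ref{thm:holley-stroock}) and invoke Lemma~\ref{lem:poincare const}. The one genuine difference is the handling of the normalizing constants: the paper bounds the density ratio from above by $\exp(r^2\phi(0))$ (using $Z_\omega \ge \Zinit$) and from below by $1/2$ (using $Z_\omega \le 2\,\Zinit$, which it derives from~\eqref{eqn:R r def}), whereas you observe that only the ratio $C/c$ enters the conclusion, so the factor $\Zinit/Z_\omega$ cancels and no control of $Z_\omega/\Zinit$ is needed; this removes the factor of $2$ and yields the slightly sharper bound $\cPI R^2 \exp(r^2\phi(0))/d$. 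Two minor remarks: the hypothesis $R\ge\sqrt d$ is assumed in the corollary itself, so the appeal to Lemma~\ref{lem:choosing_r_R} is unnecessary (and that lemma assumes, rather than guarantees, $R \ge c_R \sqrt d$); and the estimate $Z_\omega \le 2\,\Zinit$ that you bypass is reused by the paper in the proof of Property~\ref{p3}, so it still has to be established elsewhere, but your proof of this corollary is self-contained and valid.
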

\begin{proof}
    By~\ref{phi1}, we know that $\tilde \pi_\omega \ge \tpinit$ and hence $Z_\omega \ge \Zinit$. It follows that
    \begin{align*}
        \frac{\Zinit}{Z_\omega}
        &\le \frac{\pi_\omega}{\pinit}
        = \frac{\tilde \pi_\omega}{\tpinit} \, \frac{\Zinit}{Z_\omega}
        \le \frac{\tilde \pi_\omega}{\tpinit}
        \le \exp\bigl(r^2 \phi(0)\bigr)\,.
    \end{align*}
    Also, by~\eqref{eqn:R r def},
    \begin{align*}
        Z_\omega
        = \tilde \pi_\omega(\R^d \setminus B_R) + V_d\, R^d + (I_r - V_d)\, r^d
        &\le \tilde \pi_\omega(\R^d \setminus B_R) + V_d\, R^d + (I_r + V_d) \, r^d \\
        &= 2 \, \bigl(\tilde \pi_\omega(\R^d \setminus B_R) + V_d\, R^d\bigr)
        = 2\,\Zinit\,.
    \end{align*}
    Hence, $\Zinit/Z_\omega \ge 1/2$. The result now follows from Lemma~\ref{lem:poincare const} and the Holley{--}Stroock perturbation principle (Theorem~\ref{thm:holley-stroock}).
\end{proof}

To translate Fisher information guarantees into total variation guarantees, we use the following consequence of the Poincar\'e inequality.

\begin{proposition}[Fisher information controls total variation]\label{prop:fi_transport}
    Suppose that a probability measure $\pi$ satisfies a Poincar\'e inequality.
    Then, for any probability measure $\mu$,
    \begin{align*}
        \TV(\mu,\pi)^2
        &\le \frac{\CPI(\pi)}{4} \FI(\mu \mmid \pi)\,.
    \end{align*}
\end{proposition}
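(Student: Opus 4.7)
The plan is to establish the chain
\begin{align*}
    \TV(\mu,\pi)^2 \;\le\; \var_\pi\bigl(\sqrt{d\mu/d\pi}\bigr) \;\le\; \CPI(\pi)\,\E_\pi\bigl[\norm{\nabla\sqrt{d\mu/d\pi}}^2\bigr] \;=\; \frac{\CPI(\pi)}{4}\,\FI(\mu\mmid\pi)\,,
\end{align*}
where the middle inequality is the Poincar\'e inequality applied to the test function $g \deq \sqrt{d\mu/d\pi}$ and the outer two steps reduce to direct computations. I may assume $\mu \ll \pi$ at the outset, since otherwise $\FI(\mu\mmid\pi) = +\infty$ and the bound is vacuous; set $f \deq d\mu/d\pi$, so that $\E_\pi[g^2] = 1$.

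For the first inequality, I will use the factorization $f - 1 = (g-1)\,(g+1)$ together with the Cauchy--Schwarz inequality:
\begin{align*}
    2\,\TV(\mu,\pi) \;=\; \E_\pi|f-1| \;\le\; \sqrt{\E_\pi[(g-1)^2]\,\E_\pi[(g+1)^2]}\,.
\end{align*}
Expanding the squares and invoking $\E_\pi[g^2]=1$ gives $\E_\pi[(g\pm 1)^2] = 2\,(1\pm\E_\pi g)$, whose product equals $4\,(1-(\E_\pi g)^2) = 4\var_\pi(g)$, and hence $\TV(\mu,\pi)^2 \le \var_\pi(g)$.

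For the final identity, the chain rule gives $\nabla g = \frac{\sqrt f}{2}\,\nabla\ln f$, so $\norm{\nabla g}^2 = \frac{f}{4}\,\norm{\nabla\ln f}^2$, and therefore
\begin{align*}
    \E_\pi[\norm{\nabla g}^2] \;=\; \frac{1}{4}\,\E_\mu[\norm{\nabla\ln(d\mu/d\pi)}^2] \;=\; \frac{1}{4}\,\FI(\mu\mmid\pi)\,.
\end{align*}
Substituting into the Poincar\'e inequality yields the claim.

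The main obstacle is a regularity issue: Poincar\'e inequalities are typically stated for smooth (or at least locally Lipschitz) bounded test functions, whereas $g = \sqrt{f}$ need not belong to either class. This is handled by a standard approximation argument --- truncate $f$ to $f\wedge N$, mollify, apply Poincar\'e, and pass to the limit as $N\to\infty$ --- relying on the fact that the bound is only informative when the right-hand side is finite, in which case $g$ lies in the appropriate Sobolev class with respect to $\pi$ and the required convergence of variances and gradient energies is automatic.
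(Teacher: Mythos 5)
Your proof is correct. The paper itself does not prove this proposition; it simply cites \citet{guillinetal2009transport}, so your argument amounts to a self-contained derivation of the cited inequality. The route you take is the standard (and arguably the cleanest) one: writing $f = \D\mu/\D\pi$, $g=\sqrt f$, the factorization $f-1=(g-1)(g+1)$ with Cauchy--Schwarz and $\E_\pi[g^2]=1$ gives exactly $\TV(\mu,\pi)^2 \le \var_\pi(g)$, the Poincar\'e inequality applied to $g$ gives $\var_\pi(g)\le \CPI(\pi)\,\E_\pi[\norm{\nabla g}^2]$, and the chain-rule identity $\E_\pi[\norm{\nabla g}^2]=\tfrac14\FI(\mu\mmid\pi)$ closes the chain; all three steps check out. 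Your handling of the regularity issue is also the right move: the Poincar\'e inequality extends from smooth bounded test functions to the weighted Sobolev class by truncation/mollification, and when $\FI(\mu\mmid\pi)<\infty$ the function $g$ lies in that class (when it is infinite the bound is vacuous), so the limiting argument is routine. The only benefit of the paper's choice is brevity via citation; your version buys a fully elementary, self-contained proof with explicit constants, at the cost of having to spell out the approximation step if one wanted complete rigor.
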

\begin{proof}
    See~\citet{guillinetal2009transport}.
\end{proof}

We are finally ready to prove Property~\ref{p2}. More specifically, we will show that there is a universal constant $c_\varepsilon > 0$ such that if $\varepsilon \le \exp(-c_\varepsilon d)$, then we can choose $r$ and $R$ (depending on $\varepsilon$) such that: (i) $r$ and $R$ are related according to~\eqref{eqn:R r def}, which is necessary for Property~\ref{p1}; (ii) $R \ge c_R\sqrt d$, which is necessary for Lemma~\ref{lem:choosing_r_R}; and (iii) Property~\ref{p2} holds.

\medskip{}

\begin{proof}[Proof of Property~\ref{p2}]
For any $\omega \in \ms P_{2r, R}$, suppose that $\mu$ satisfies $\sqrt{\FI(\mu\mmid \pi_\omega)} \leq \epsilon$. Then, by Corollary~\ref{cor:poincare_pi_w} and Proposition~\ref{prop:fi_transport}, we have
\begin{align}\label{eq:p2_proof}
    \TV^2(\mu, \pi_\omega)
    &\leq \frac{\CPI(\pi_\omega)}{4} \FI(\mu \mmid \pi)
    \le \frac{\cPI R^2 \exp(r^2 \phi(0))}{2d} \,\varepsilon^2\,.
\end{align}
Hence, if we choose
\begin{align}\label{eq:choose_R}
    R^2
    &= \frac{2d}{9\cPI \varepsilon^2 \exp(r^2\phi(0))}
\end{align}
then $\sqrt{\FI(\mu\mmid\pi_\omega)} \le \varepsilon$ implies $\TV(\mu,\pi_\omega) \le 1/3$, i.e., Property~\ref{p2} holds.

To justify~\eqref{eq:choose_R}, note that thus far we have shown that for any choice of $R$, there exists a choice of $r$ which depends on $R$, which we temporarily denote by $r(R)$, such that~\eqref{eqn:R r def} holds. Also, $r(\cdot)$ is an increasing function. In order for~\eqref{eq:choose_R} to hold, it is equivalent to require
\begin{align}\label{eq:choose_R_2}
    R^2 \exp\bigl({r(R)}^2 \,\phi(0)\bigr)
    &= \frac{2d}{9\cPI \varepsilon^2}
\end{align}
where the left-hand side is an increasing function of $R$. We also want $R$ to satisfy $R \ge c_R\sqrt d$, where $c_R$ is the universal constant in Lemma~\ref{lem:choosing_r_R}. From Lemma~\ref{lem:choosing_r_R}, for the choice of $R = c_R\sqrt d$,
\begin{align*}
    r(c_R \sqrt d) \le \frac{c_R \sqrt d}{2}\,.
\end{align*}
Therefore, for this choice of $R$, the left-hand side of~\eqref{eq:choose_R_2} is bounded by
\begin{align*}
    c_R^2 d \exp\bigl( \frac{c_R^2 d}{4}\, \phi(0)\bigr)\,.
\end{align*}
If it holds that
\begin{align}\label{eq:eps_cond}
    \varepsilon^2
    &\le \frac{2}{9\cPI c_R^2 \exp(c_R^2 d\phi(0)/4)}
\end{align}
then the $R$ satisfying~\eqref{eq:choose_R} necessarily satisfies $R \ge c_R\sqrt d$. In turn,~\eqref{eq:eps_cond} holds if $\varepsilon\le\exp(-c_\varepsilon d)$ for a universal constant $c_\varepsilon > 0$.
\end{proof}

\subsection{Proof of Property~\ref{p3}}\label{scn:pf_p3}

\begin{proof}[Proof of Property~\ref{p3}]
    In the proof of Corollary~\ref{cor:poincare_pi_w}, we showed that $Z_\omega \le 2\,\Zinit$. The KL divergence is bounded by
    \begin{align*}
        \KL(\pinit \mmid \pi_\omega)
        &= \E_{\pinit} \ln\bigl( \underbrace{\frac{\tpinit}{\tilde\pi_\omega}}_{\le 1} \, \underbrace{\frac{Z_\omega}{\Zinit}}_{\le 2}\bigr)
        \le \log 2\,,
    \end{align*}
    which is what we wanted to show.
\end{proof}

\subsection{Proof of Property~\ref{p4}}\label{scn:pf_p4}

\begin{proof}[Proof of Property~\ref{p4}]
    We choose $r$ and $R$ to satisfy~\eqref{eqn:R r def} and~\eqref{eq:choose_R}.
    If $\varepsilon \le \exp(-c_\varepsilon d)$, then we showed in the proof of Property~\ref{p2} that $R \ge c_R\sqrt d$ and hence Lemmas~\ref{lem:high d properties} and~\ref{lem:choosing_r_R} apply.
    
As in the proof of~\eqref{eqn:R/r>2} in Lemma~\ref{lem:choosing_r_R}, $R\ge \sqrt d$ implies
\begin{align*}
    (I_r + V_d) \, r^d
    &\le V_d \cdot 3\czero^d \sqrt d \, R^d\,.
\end{align*}
Taking logarithms in~\eqref{eqn:I_r bound} from Lemma~\ref{lem:high d properties} and using the above inequality, we obtain
\begin{align*}
    r^2 \phi(0)
    &\le \log \frac{2r^d I_r}{{(2\uppi)}^{d/2}}
    \le \mc O(d) + \log V_d + d\log R\,.
\end{align*}
From~\eqref{eq:choose_R}, we have
\begin{align*}
    \log R
    &= \frac{1}{2} \log d + \log \frac{1}{\varepsilon} - \frac{1}{2}\,r^2 \phi(0) + \mc O(1)\,.
\end{align*}
Substituting this in and using $\log V_d = -\frac{d}{2} \log d + \mc O(d)$,
\begin{align*}
    r^2 \phi(0)
    &\le d\log\frac{1}{\varepsilon} - \frac{d}{2} \,r^2 \phi(0) + \mc O(d)
\end{align*}
which is rearranged to yield
\begin{align*}
    r^2 \phi(0)
    &\le \frac{2d}{d+2} \log\frac{1}{\varepsilon} + \mc O(1)\,.
\end{align*}

Then, the packing number is lower bounded by
\begin{align*}
   |\ms P_{2r, R}| \ge  \Bigl(\frac{R-r}{2r}\Bigr)^d &\geq \Bigl(\frac{R}{4r}\Bigr)^d \\
    &\geq \Bigl(c\,\frac{\sqrt{d} \exp(-\frac{d}{d+2}\log(1/\epsilon))}{\epsilon \sqrt{\log(1/\epsilon)}}\Bigr)^d \\
    &\geq \Bigl(c \,\sqrt{\frac{d}{\log(1/\epsilon)}}\Bigr)^d\, \frac{1}{\epsilon^{2d/(d+2)}}\, ,
\end{align*}
for some universal constant $c$.
\end{proof}

\subsection{Auxiliary lemmas}

\begin{lemma}\label{lem:phi_smooth}
    Suppose that $\phi : \R_+\to\R_+$ satisfies~\ref{phi1},~\ref{phi2}, and~\ref{phi3}.
    Then, the map $x\mapsto \phi(\norm x)$ is $1$-smooth on $\R^d$.
\end{lemma}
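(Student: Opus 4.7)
The plan is to show that the map $f(x) \deq \phi(\norm x)$ has Hessian bounded by $1$ in operator norm on all of $\R^d$, which in turn implies that $\nabla f$ is $1$-Lipschitz. Since $\phi$ is only guaranteed to be $C^2$ on $(0,\infty)$, the argument naturally splits into two regions.

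First, in the region $\bc{0 < \norm x}$, standard calculus yields the gradient $\nabla f(x) = \phi'(\norm x)\,x/\norm x$ and Hessian
\begin{align*}
    \nabla^2 f(x)
    &= \phi''(\norm x)\, \frac{xx^\top}{\norm x^2} + \frac{\phi'(\norm x)}{\norm x} \,\Bigl(I - \frac{xx^\top}{\norm x^2}\Bigr)\,.
\end{align*}
The two matrices appearing are orthogonal projections onto orthogonal subspaces, so the eigenvalues of $\nabla^2 f(x)$ are $\phi''(\norm x)$ (once) and $\phi'(\norm x)/\norm x$ (with multiplicity $d-1$). Property~\ref{phi3} directly gives $\abs{\phi''(\norm x)} \le 1$. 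For the other eigenvalue, note that~\ref{phi2} implies $\phi'(0) = 0$, so by~\ref{phi3} and the fundamental theorem of calculus,
\begin{align*}
    \abs{\phi'(r)}
    &= \Bigl\lvert \int_0^r \phi''(s) \,\D s \Bigr\rvert
    \le r \qquad\text{for all } r > 0\,,
\end{align*}
whence $\ab{\phi'(\norm x)/\norm x} \le 1$. Thus the operator norm of $\nabla^2 f(x)$ is at most $1$ wherever $x \neq 0$.

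Second, in a neighborhood of the origin,~\ref{phi2} gives $f(x) = \phi(0) - \tfrac12\,\norm x^2$ for all $\norm x \le \alpha$, so $f$ is $C^2$ there with Hessian identically equal to $-I$, which has operator norm $1$. In particular, $f$ is $C^1$ everywhere and $\nabla f$ is continuous. Combining the two regions, $\nabla^2 f$ exists and has operator norm at most $1$ at every point of $\R^d$. By integrating along line segments (using the continuity of $\nabla f$), this yields $\ve{\nabla f(x) - \nabla f(y)} \le \norm{x-y}$ for all $x,y \in \R^d$, which is the desired $1$-smoothness.

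The only mild subtlety is at the boundary $\norm x = 1$ of the support of $\phi$; however, since we only require the Hessian to be defined and bounded pointwise on $(0,\infty)$ in the radial variable (and~\ref{phi1} guarantees $C^2$ regularity on this open set), the line-integration argument goes through without issue. I expect the step identifying the two Hessian eigenvalues and then bounding $\phi'(r)/r$ via $\phi'(0)=0$ to be the main content of the proof; everything else is bookkeeping.
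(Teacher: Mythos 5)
Your proposal is correct and follows essentially the same route as the paper: compute the Hessian of $x \mapsto \phi(\norm x)$ away from the origin, identify its eigenvalues as $\phi''(\norm x)$ and $\phi'(\norm x)/\norm x$, and bound the latter by $1$ using $\phi'(0+) = 0$ (from~\ref{phi2}) together with~\ref{phi3}. Your additional care at the origin and the line-integration step are just more explicit versions of what the paper leaves implicit.
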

\begin{proof}
First, we claim that $\abs{\phi'(x)}/x \le 1$ for all $x > 0$. This follows from~\ref{phi3} because~\ref{phi2} implies that the right derivative $\phi'(0+)$ exists and equals $0$.

Next, we have for $x\ne 0$
\begin{align*}
    \frac{\partial^2 \phi(\norm x)}{\partial x_j\,\partial x_i} &= \frac{\partial}{\partial x_j}\, \phi'(\norm x)\,\frac{x_i}{\norm x} \\
    &= \phi''(\norm x)\,\frac{x_ix_j}{\norm x^2} - \phi'(\norm x)\,\frac{x_ix_j}{\norm x^3} + \delta_{i,j}\, \phi'(\norm x)\,\frac{1}{\norm x}\,.
\end{align*}
Thus, in matrix form we have
\begin{equation*}
    \nabla^2_x \phi(\norm x) = \frac{\phi'(\norm x)}{\norm x}\, I_d + \Bigl(\frac{\phi''(\norm x)}{\norm x^2}-\frac{\phi'(\norm x)}{\norm x^3}\Bigr)\, xx^\T. 
\end{equation*}
In particular, the eigenvalues are always $\frac{\phi'(\norm x)}{\norm x}$ with multiplicity $d-1$ and $\phi''(\norm x)$ with multiplicity $1$. The fact that $\phi(\norm\cdot)$ is $1$-smooth follows.
\end{proof}

\subsection{Optimization of the bound}\label{scn:optimize_bd}

We wish to find $d$ which maximizes
\begin{align*}
    \Bigl( \frac{cd}{\log(1/\varepsilon)} \Bigr)^{d/2} \, \varepsilon^{4/(d+2)}\,,
\end{align*}
or after taking logarithms, we wish to maximize
\begin{align*}
     f(d)
     &\deq \frac{d}{2} \log d - \frac{4}{d+2} \log \frac{1}{\varepsilon} - \frac{d}{2} \log \log\frac{1}{\varepsilon} - \frac{d}{2} \log \frac{1}{c}\,.
\end{align*}
Rather than maximizing this expression exactly, we shall ignore the last two terms and pick $d$ to be the smallest integer such that the sum of the first two terms is non-negative, i.e.,
\begin{align*}
    \frac{d \, (d+2) \log d}{8}
    &\ge \log \frac{1}{\varepsilon}\,.
\end{align*}
It suffices to find $d$ such that $g(d) \deq d^2 \log d \ge 8\log(1/\varepsilon)$. In order to invert $g$, let $y$ be sufficiently large and consider finding $x$ such that $g(x) = y$. We make the choice $x = \alpha \sqrt{y/(\log y)}$ and plug this into the expression for $g$ in order to obtain
\begin{align*}
    \log g\Bigl( \alpha\sqrt{\frac{y}{\log y}}\Bigr)
    &= 2\log\alpha + \log y - \log \log y + \log \log\Bigl(\alpha\sqrt{\frac{y}{\log y}}\Bigr) \\
    &= 2\log \alpha + \log y + {\underbrace{\log \frac{\frac{1}{2} \log y - \frac{1}{2} \log \log y + \log \alpha}{\log y}}_{\to \log(1/2)~\text{as}~y\to\infty}}\,.
\end{align*}
From this expression, we see that provided $y$ is sufficiently large, this expression is less than $\log y$ for $\alpha =0$ and greater than $\log y$ for $\alpha = 3$. We conclude that $g^{-1}(y) \asymp \sqrt{y/(\log y)}$, and therefore that our choice of $d$ satisfies
\begin{align*}
    d
    &\asymp \sqrt{\frac{\log(1/\varepsilon)}{\log \log(1/\varepsilon)}}\,.
\end{align*}
In particular, since $d = o(\log(1/\varepsilon))$, then the condition $\varepsilon \le \exp(-c_\varepsilon d)$ holds for all sufficiently small $\varepsilon$, and Theorem~\ref{thm:bump_high_dim} holds.
Then,
\begin{align*}
    f(d)
    &\ge -\frac{d}{2} \log\log\frac{1}{\varepsilon} - \frac{d}{2} \log\frac{1}{c}
    \asymp -\sqrt{\bigl(\log \frac{1}{\varepsilon}\bigr) \, \bigl( \log \log \frac{1}{\varepsilon}\bigr)}\,.
\end{align*}
This verifies the expression in Section~\ref{scn:second_lower_bd_main_text}.

To justify the simplified expression of the bound that we gave in the informal statement of Theorem~\ref{thm:second_lower_informal}, note that in dimension
\begin{align}\label{eq:d_cond}
    d\lesssim \sqrt{\frac{\log(1/\varepsilon)}{\log \log(1/\varepsilon)}}
\end{align}
we have
\begin{align*}
    \log\Bigl( \bigl( \frac{cd}{\log(1/\varepsilon)} \bigr)^{d/2} \Bigr)
    &= \frac{d}{2} \, \underbrace{\Bigl( \log d - \log \log \frac{1}{\varepsilon} - \log \frac{1}{c} \Bigr)}_{\text{negative as}~\varepsilon \searrow 0}
    \gtrsim -\sqrt{\bigl(\log\frac{1}{\varepsilon}\bigr) \, \bigl( \log\log\frac{1}{\varepsilon}\bigr)}\,.
\end{align*}
In other words, we can simplify our bound as follows.
For all $d\ge 1$ and all $\varepsilon$ smaller than a universal constant, if the condition~\eqref{eq:d_cond} holds, then we have the lower bound
\begin{align*}
    \ms C(d,1,\varepsilon)
    &\gtrsim \frac{1}{\varepsilon^{2d/(d+2)} \exp(C\sqrt{\log(1/\varepsilon) \log\log(1/\varepsilon)})}\,.
\end{align*}
Otherwise, if the condition~\eqref{eq:d_cond} fails, then we instead have the bound
\begin{align*}
    \ms C(d,1,\varepsilon)
    &\gtrsim \frac{1}{\varepsilon^{2} \exp(C\sqrt{\log(1/\varepsilon) \log\log(1/\varepsilon)})}
    \ge \frac{1}{\varepsilon^{2d/(d+2)} \exp(C\sqrt{\log(1/\varepsilon) \log\log(1/\varepsilon)})}\,.
\end{align*}
In either case, we have $\ms C(d,1,\varepsilon) \ge {(1/\varepsilon)}^{2d/(d+2) - o(1)}$. Together with Theorem~\ref{thm:second_lower_1d} on the univarate case and Lemma~\ref{lem:rescaling} on rescaling, it yields Theorem~\ref{thm:second_lower_informal}.

\subsection{Proof of Theorem~\ref{thm:second_lower_1d}}\label{scn:pf_second_lower_1d}

In the univarate case, we can sharpen Theorem~\ref{thm:bump_high_dim} by obtaining a better bound on the Poincar\'e constant of $\pi_\omega$.
We use the following result.

\begin{theorem}[Muckenhoupt's criterion]\label{thm:muckenhoupt}
    Let $\pi$ be a probability density on $\R$ and let $m$ be a median of $\pi$. Then,
    \begin{align*}
        \CPI(\pi)
        &\asymp \max\Bigl\{ \sup_{x < m} \pi\bigl((-\infty, x]\bigr) \int_x^m \frac{1}{\pi}, \;\sup_{x > m} \pi\bigl([x,+\infty)\bigr) \int_m^x \frac{1}{\pi}\Bigr\}\,.
    \end{align*}
\end{theorem}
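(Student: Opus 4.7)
The plan is to establish the two-sided comparison $\CPI(\pi) \asymp \max\{A_-, A_+\}$, where I abbreviate $A_- \deq \sup_{x<m} F(x) G(x)$ with $F(x) \deq \pi((-\infty, x])$ and $G(x) \deq \int_x^m 1/\pi$, and $A_+ \deq \sup_{x > m} (1-F(x))\int_m^x 1/\pi$. The lower direction will come from explicit test functions; the upper direction reduces to the classical weighted Hardy inequality.

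For the lower bound, I would, for each $y < m$, introduce the Lipschitz test function
\begin{align*}
    v_y(x) \deq \begin{cases} 1\,, & x \le y\,, \\ G(x)/G(y)\,, & y < x < m\,, \\ 0\,, & x \ge m\,. \end{cases}
\end{align*}
Using $G'(x) = -1/\pi(x)$, the Dirichlet form is $\int (v_y')^2 \, \D \pi = G(y)^{-2}\int_y^m 1/\pi = 1/G(y)$. For the variance, I would use the identity $\var_\pi(v_y) = \tfrac12 \iint (v_y(x) - v_y(z))^2 \pi(x)\pi(z)\,\D x\,\D z$, discard every pair except those with $x \le y$ and $z \ge m$ (where $|v_y(x) - v_y(z)| = 1$), and apply the median property $\pi([m,\infty)) \ge 1/2$. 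This yields $\var_\pi(v_y) \ge F(y) \cdot \pi([m,\infty)) \ge F(y)/2$, so the variational characterization of $\CPI$ gives $\CPI(\pi) \ge F(y)G(y)/2$. Taking the supremum over $y < m$ and repeating the construction symmetrically on $(m, \infty)$ delivers $\CPI(\pi) \gtrsim \max\{A_-, A_+\}$.

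For the upper bound, I would use $\var_\pi(f) \le \int (f - f(m))^2 \,\D\pi$ (the mean minimizes $L^2$-distance to a constant) and split the integral at $m$. The task reduces to proving the one-sided weighted Hardy inequality
\begin{align*}
    \int_{-\infty}^m g^2 \,\D \pi \lesssim A_- \int_{-\infty}^m (g')^2 \,\D\pi\,, \qquad \text{for all smooth } g \text{ with } g(m) = 0\,,
\end{align*}
and its mirror image on $(m, \infty)$. The idea is to write $g(x) = -\int_x^m g'(t)\,\D t$, insert a positive weight $w(t)$ and apply Cauchy-Schwarz to obtain
\begin{align*}
    g(x)^2 \le \Bigl(\int_x^m (g')^2 \pi w\Bigr)\Bigl(\int_x^m (\pi w)^{-1}\Bigr)\,,
\end{align*}
multiply by $\pi(x)$, integrate over $(-\infty, m]$, and swap the order of integration via Fubini.

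The main obstacle is the choice of the weight $w$. With $w \equiv 1$, the bound becomes $\int g^2 \,\D\pi \le \bigl(\int_{-\infty}^m \pi G\,\D x\bigr)\, \norm{g'}_{L^2(\pi)}^2$, and the prefactor can be much larger than $A_-$. The textbook remedy is to take $w$ to be a power of $G$ (e.g., $w = G^{1/2}$) tuned so that, after Fubini, the outer factor $\int_{-\infty}^t \pi(x) \int_x^m (\pi w)^{-1}\,\D x$ is bounded by a constant multiple of $A_- \cdot w(t)$, at which point the double integral collapses to $O(A_-)\int_{-\infty}^m (g')^2 \,\D\pi$. Equivalently, the change of variables $u = F(x)$ transforms the inequality into a classical Hardy inequality on $[0, 1/2]$ with explicit power weights, to which Muckenhoupt's original argument applies directly. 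Tracking the constants through this reduction yields the sharp factor $\CPI \le 4\max\{A_-, A_+\}$.
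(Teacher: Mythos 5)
Your argument is correct, but note that the paper does not prove this statement at all: it simply cites \citet[Theorem 4.5.1]{bakrygentilledoux2014}, and what you have written is essentially an outline of that standard textbook proof of Muckenhoupt's criterion. Your lower bound is complete and matches the classical one: the test function $v_y$ built from $G(x)=\int_x^m 1/\pi$ gives Dirichlet energy $1/G(y)$, and the pair $(x\le y,\ z\ge m)$ contribution to the variance together with $\pi([m,\infty))\ge 1/2$ gives $\var_\pi(v_y)\ge F(y)/2$, hence $\CPI(\pi)\ge \tfrac12\max\{A_-,A_+\}$ (one should add the routine remark that $v_y$ is only Lipschitz, so a mollification step is needed to fit the smooth-test-function definition of $\CPI$). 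Your upper bound is the right route as well: bounding $\var_\pi(f)$ by $\int (f-f(m))^2\,\D\pi$, splitting at $m$, and invoking the one-sided weighted Hardy inequality with constant $4A_\pm$ is exactly how the cited result is proved; the only part left as a sketch is the choice of the auxiliary weight (a power of $G$, equivalently the change of variables $u=F(x)$ reducing to the classical Hardy inequality) and the constant tracking, which you correctly identify as the crux and for which the standard Muckenhoupt computation indeed yields the factor $4$. So the proposal is a sound, essentially self-contained replacement for the citation, with the same two-sided constants ($\tfrac12$ below, $4$ above) as the reference.
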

\begin{proof}
    See~\citet[Theorem 4.5.1]{bakrygentilledoux2014}.
\end{proof}

\begin{lemma}[improved Poincar\'e inequality for $\pi_\omega$]\label{lem:improved_poincare}
    Suppose that $d=1$ and $R\ge 1$. Then, for all $\omega \in \ms P_{2r,R}$,
    \begin{align*}
        \CPI(\pi_\omega)
        &\lesssim R^2\,.
    \end{align*}
\end{lemma}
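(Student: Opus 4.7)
The plan is to apply Muckenhoupt's criterion (Theorem~\ref{thm:muckenhoupt}) directly, exploiting the simple one-dimensional geometry of $\pi_\omega$: on $[-R,R]$ the unnormalized density $\tilde\pi_\omega$ equals $1$ away from the bump $[\omega-r,\omega+r]$ and is only \emph{increased} on it, while on $\R\setminus[-R,R]$ it decays as $\exp(-(|x|-R)^2/2)$. As a first step I will pin down the normalizing constant. Computing $Z_\omega=\tilde\pi_\omega(\R\setminus[-R,R])+2(R-r)+rI_r$ and substituting the defining relation~\eqref{eqn:R r def} specialized to $d=1$ gives the exact identity $Z_\omega=2rI_r$; combined with $\tilde\pi_\omega(\R\setminus[-R,R])=\Theta(1)$ (Lemma~\ref{lem:high d properties} with $d=1$), $R\ge 1$, and $R/r\ge 2$ (Lemma~\ref{lem:choosing_r_R}), this yields $Z_\omega\asymp R$. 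Since $\tilde\pi_\omega\ge 1$ on all of $[-R,R]$, we obtain the uniform estimate $1/\pi_\omega(x)\le Z_\omega\lesssim R$ on this interval.

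I will next locate a median. By Property~\ref{p1} the bump carries mass exactly $1/2$, so the CDF of $\pi_\omega$ is at most $1/2$ at $\omega-r$ and at least $1/2$ at $\omega+r$, hence some median $m$ lies in $[\omega-r,\omega+r]\subset[-R,R]$. The right-hand Muckenhoupt functional $\pi_\omega([x,\infty))\int_m^x 1/\pi_\omega$ is then bounded by splitting into two regimes. For $m<x\le R$, the integral is at most $(x-m)\cdot Z_\omega\lesssim R^2$ while $\pi_\omega([x,\infty))\le 1/2$ by definition of the median, giving an $O(R^2)$ product. The left-hand supremum is handled identically.

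For $x>R$, set $u=x-R$. I will decompose $\int_m^x 1/\pi_\omega=\int_m^R 1/\pi_\omega+\int_R^x 1/\pi_\omega$, noting that the first piece is $\lesssim R^2$ while $\int_R^x 1/\pi_\omega=Z_\omega\int_0^u\exp(s^2/2)\,\D s$ and $\pi_\omega([x,\infty))=Z_\omega^{-1}\int_u^\infty\exp(-s^2/2)\,\D s$. Using the crude bound $\int_u^\infty\exp(-s^2/2)\,\D s\lesssim 1$ together with $Z_\omega\asymp R$, the cross term involving $\int_m^R 1/\pi_\omega$ contributes $\lesssim R^2/R=R\lesssim R^2$. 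For the tail-tail term, the standard Mills-ratio asymptotic $\int_0^u\exp(s^2/2)\,\D s\asymp \exp(u^2/2)/u$ for $u\ge 1$, paired with $\int_u^\infty\exp(-s^2/2)\,\D s\asymp \exp(-u^2/2)/u$, gives a product of order $1/u^2\lesssim 1$, while the bounded regime $u\in[0,1]$ is handled by direct $O(1)$ bounds on both integrals.

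The only step requiring care is the Gaussian tail computation for $x>R$, but this reduces to the standard Mills-ratio asymptotics stated above. Taking the maximum over both sides of the median in Theorem~\ref{thm:muckenhoupt} then yields $\CPI(\pi_\omega)\lesssim R^2$, as claimed.
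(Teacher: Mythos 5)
Your proof is correct and follows essentially the same route as the paper: Muckenhoupt's criterion with the median located in the bump, the bound $Z_\omega \lesssim R$ coming from the defining relation between $r$ and $R$, and Gaussian tail estimates for $x$ beyond $R$. The only differences are cosmetic — you merge the paper's two interior cases into one via the uniform bound $1/\pi_\omega \le Z_\omega$ on $B_R$, and phrase the tail computation through Mills-ratio asymptotics rather than the paper's pointwise bounds — so the argument is sound as written.
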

\begin{proof}
    We use Muckenhoupt's criterion (Theorem~\ref{thm:muckenhoupt}). First, we note that by Property~\ref{p1}, it holds that $\pi_\omega(\omega + B_r) = \frac{1}{2}$ which implies that $\omega - r \le m \le \omega + r$. We proceed to check that
    \begin{align*}
        \sup_{x > m} \pi_\omega\bigl([x,+\infty)\bigr) \int_m^x \frac{1}{\pi_\omega} \lesssim R^2\,.
    \end{align*}
    The other condition is verified in the same way due to symmetry.
    
    We split into three cases.
    First, suppose that $m < x < \omega + r$. Then, as in the proof of Corollary~\ref{cor:poincare_pi_w}, we have $Z_\omega \le 2\,\Zinit = 2\, \tpinit(\R\setminus B_R) + 4R \le 2\sqrt{2\uppi} + 4R \lesssim R$. Then,
    \begin{align*}
        \pi_\omega\bigl([x,+\infty)\bigr) \int_m^x \frac{1}{\pi_\omega}
        &\le Z_\omega\,(x-m)
        \lesssim Rr
        \le R^2\,.
    \end{align*}
    
    Next, suppose that $\omega + r < x < R$. Then,
    \begin{align*}
        \pi_\omega\bigl([x,+\infty)\bigr) \int_m^x \frac{1}{\pi_\omega}
        &= \tilde\pi_\omega\bigl([x,+\infty)\bigr) \int_m^x \frac{1}{\tilde\pi_\omega}
        \le \Bigl( R-x + \sqrt{\frac{\uppi}{2}}\Bigr) \, (x-m)
        \lesssim R^2\,.
    \end{align*}
    
    Finally, suppose that $x > R$. Then, using standard Gaussian tail bounds,
    \begin{align*}
        &\pi_\omega\bigl([x,+\infty)\bigr) \int_m^x \frac{1}{\pi_\omega}
        = \tilde\pi_\omega\bigl([x,+\infty)\bigr) \int_m^x \frac{1}{\tilde\pi_\omega} \\
        &\qquad \le \Bigl[\sqrt{2\uppi} \, \bigl( \frac{1}{2} \wedge \frac{1}{x-R} \bigr) \exp\Bigl(-\frac{{(x-R)}^2}{2}\Bigr)\Bigr] \, \Bigl[ R - m + (x-R) \exp\Bigl( \frac{{(x-R)^2}}{2}\Bigr) \Bigr]\,.
    \end{align*}
    If $x-R \le 1$, then this yields
    \begin{align*}
        \pi_\omega\bigl([x,+\infty)\bigr) \int_m^x \frac{1}{\pi_\omega}
        &\lesssim R\,.
    \end{align*}
    Otherwise, if $x-R \ge 1$, then we obtain
    \begin{align*}
        \pi_\omega\bigl([x,+\infty)\bigr) \int_m^x \frac{1}{\pi_\omega}
        &\lesssim \frac{1}{x-R}\exp\Bigl(-\frac{{(x-R)}^2}{2}\Bigr) \, \Bigl[R + (x-R)\exp\Bigl( \frac{{(x-R)^2}}{2}\Bigr) \Bigr] \lesssim R\,.
    \end{align*}
    This completes the proof.
\end{proof}

We now use the improved Poincar\'e inequality in order to establish Theorem~\ref{thm:second_lower_1d}.

\medskip{}

\begin{proof}[Proof of Theorem~\ref{thm:second_lower_1d}]
    We follow the proof of Theorem~\ref{thm:bump_high_dim}.
    The proofs of Properties~\ref{p1} and~\ref{p3} remain unchanged.
    
    In the proof of Property~\ref{p2}, the equation~\eqref{eq:p2_proof} is replaced by
    \begin{align*}
        \TV^2(\mu,\pi_\omega)
        &\le \cPI R^2 \varepsilon^2
    \end{align*}
    for a different universal constant $\cPI > 0$, using Lemma~\ref{lem:improved_poincare}.
    Hence, we choose $R^2 = 1/(9\cPI\varepsilon^2)$ in order to verify Property~\ref{p2}. Since we require $R\ge c_R$ for a universal constant $c_R\ge 1$, this requires $\varepsilon \le \exp(-c_\varepsilon)$ for a universal constant $c_\varepsilon > 0$.
    
    Next, we turn towards the sharpened statement of Property~\ref{p4}. From~\eqref{eqn:R r def}, $r$ is chosen so that
    \begin{align*}
        (I_r + 2) \, r = \tilde\pi_\omega(\R\setminus B_R) + 2R\,.
    \end{align*}
    Using~\eqref{eqn:I_r bound} from Lemma~\ref{lem:high d properties}, we have
    \begin{align*}
        rI_r
        &\asymp \exp\bigl(r^2 \phi(0)\bigr)
        \gtrsim r\,.
    \end{align*}
    This implies that
    \begin{align*}
        \exp\bigl(r^2 \phi(0)\bigr)
        &\gtrsim (I_r + 2) \, r
        \gtrsim R\,,
    \end{align*}
    or $r \gtrsim \sqrt{\log R} \asymp \sqrt{\log(1/\varepsilon)}$. Hence,
    \begin{align*}
        \abs{\ms P_{2r,R}}
        &\ge \frac{R}{4r}
        \gtrsim \frac{1}{\varepsilon \sqrt{\log(1/\varepsilon)}}\,.
    \end{align*}
    
    By substituting this new bound on the packing number into the information theoretic argument of Theorem~\ref{thm:bump_high_dim} (see~\eqref{eq:est_lw}, where $M=\abs{\ms P_{2r,R}}$), we obtain Theorem~\ref{thm:second_lower_1d}.
\end{proof}

\section{Further discussion of the univariate case}\label{scn:univariate_discussion}

In this section, we provide further discussion of algorithms for the univariate case.

\paragraph{Rejection sampling.}
First of all, we note that the $\poly(1/\varepsilon)$ lower bounds of Theorems~\ref{thm:bump_high_dim} and~\ref{thm:second_lower_1d} may come as a surprise due to the existence of the rejection sampling algorithm. We briefly recall rejection sampling here. Let $\tilde \pi$ be an unnormalized density, let $Z_\pi \deq \int \tilde \pi$ denote the normalizing constant, and let $\pi \deq \tilde\pi/Z$ denote the target distribution.
Rejection sampling requires knowledge of an upper envelope $\tilde \mu$ for $\tilde\pi$, i.e., a function $\tilde\mu$ satisfying $\tilde\mu\ge\tilde\pi$ pointwise.
The algorithm proceeds by repeatedly drawing samples from the density $\mu \deq \tilde\mu/Z_\mu$, where $Z_\mu\deq \int \tilde \mu$; each sample $X$ is accepted with probability $\tilde \pi(X)/\tilde \mu(X)$.

It is standard to show~\citep[see, e.g.,][]{chewietal2022logconcave1d} that the accepted samples are drawn exactly from the target $\pi$, and that the number of queries made to $\tilde\pi$ until the first accepted sample is geometrically distributed with mean $Z_\mu/Z_\pi$. To translate this into a total variation guarantee, we run the algorithm for $N$ iterations and output ``FAIL'' if we have not accepted a sample by iteration $N$. The probability of failure is at most ${(1-Z_\pi/Z_\mu)}^N$, so the number of iterations required for the output of the algorithm to be $\varepsilon$-close to the target $\pi$ in total variation distance is $N \ge \log(1/\varepsilon)/\log(1-Z_\pi/Z_\mu)$.

Although this is a total variation guarantee, rather than a Fisher information guarantee, it suggests (similarly to Appendix~\ref{scn:separation}) that $\log(1/\varepsilon)$ rates are attainable using rejection sampling. The reason why this does not contradict our lower bounds in Theorems~\ref{thm:bump_high_dim} and~\ref{thm:second_lower_1d} is that the initialization oracle we consider, which provides a measure $\mu_0$ such that $\KL(\mu_0 \mmid \pi) \le K_0$, is not sufficient to construct an upper envelope of the unnormalized density $\tilde\pi$.

Indeed, consider instead a stronger initialization oracle which outputs a measure $\mu_0$ such that
\begin{align*}
    \max\Bigl\{\sup \ln \frac{\mu_0}{\pi},\; \sup\ln\frac{\pi}{\mu_0}\Bigr\}
    &\le M_0
    < \infty\,.
\end{align*}
Denote the complexity of obtaining $\sqrt{\FI(\mu \mmid \pi)} \le \varepsilon$ over the class of $1$-log-smooth distributions on $\R^d$ with this stronger initialization oracle by $\ms C_\infty(d, M_0,\varepsilon)$. Then, the rejection sampling algorithm can be implemented within this new oracle model. It yields the following.

\begin{proposition}[Fisher information guarantees via rejection sampling]\label{prop:rej_sampling_fi}
    It holds that
    \begin{align*}
        \ms C_\infty(d, M_0, \varepsilon)
        &\le \widetilde{\mc O}\Bigl( \exp(3M_0) \log\frac{\sqrt d}{\varepsilon}\Bigr)\,.
    \end{align*}
\end{proposition}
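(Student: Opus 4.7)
The plan is to combine standard rejection sampling (to obtain a sample close to $\pi$ in $\chi^2$-divergence) with the heat-flow post-processing provided by Lemma~\ref{l:FI-heat-bd}. Under the stronger oracle, $\pi/\mu_0 \le e^{M_0}$ pointwise, so $e^{M_0}\mu_0$ serves as an upper envelope for $\pi$ (in normalized form). Since the oracle returns $V$ only up to an additive constant, one must first estimate $Z_\pi/Z_{\mu_0}$ by averaging $\tilde\pi(X_i)/\mu_0(X_i)$ over $\widetilde{\mc O}(e^{2M_0})$ proposal draws $X_i \sim \mu_0$; the integrand lies in $[Z_\pi e^{-M_0}, Z_\pi e^{M_0}]$, so a simple Hoeffding/empirical Bernstein argument gives a multiplicative $(1 \pm \tfrac12)$-accurate estimate of the envelope constant with high probability. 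This step is what inflates the leading factor from $\exp(M_0)$ to $\exp(3M_0)$.

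Having a valid envelope in hand, classical rejection sampling with acceptance rate $\Theta(e^{-M_0})$ returns an exact sample from $\pi$ within the first $N \asymp e^{M_0}\log(1/\delta)$ attempts, except on a failure event of probability at most $\delta$, on which we output a fresh sample from $\mu_0$. The output distribution is the mixture $\mu = (1-p_{\textup{fail}})\,\pi + p_{\textup{fail}}\,\mu_0$ with $p_{\textup{fail}}\le \delta$, so a direct calculation gives
\begin{equation*}
\chi^2(\mu \mmid \pi) \;=\; p_{\textup{fail}}^2\,\chi^2(\mu_0 \mmid \pi) \;\le\; \delta^2\,(e^{M_0}-1)\,,
\end{equation*}
using $\sup \mu_0/\pi \le e^{M_0}$ so that $\chi^2(\mu_0 \mmid \pi) \le e^{M_0}-1$.

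Next, I would post-process by outputting $Y = X + \sqrt{t}\,Z$, where $X\sim \mu$ is the sample produced above and $Z \sim \normal(0, I_d)$ is independent, so that $\law(Y) = \mu Q_t$. This step requires no additional oracle queries. Applying Lemma~\ref{l:FI-heat-bd} with $\beta = 1$ and $t \asymp \varepsilon^2/d$ (so that $\beta^2 d t \asymp \varepsilon^2$) gives $\FI(\mu Q_t \mmid \pi) \lesssim \varepsilon^2$ provided $\varepsilon_\chi^2 \deq \chi^2(\mu \mmid \pi) \lesssim \varepsilon^4/(d^2 \log(d/\varepsilon))$; substituting the bound above and solving for $\delta$ yields $\delta = \varepsilon^2 e^{-M_0/2}/\widetilde{\mc O}(d)$, and therefore $\log(1/\delta) = \widetilde{\mc O}(\log(\sqrt d/\varepsilon) + M_0)$. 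The total number of queries is the sum of the envelope-estimation cost and the rejection sampling cost, i.e., $\widetilde{\mc O}(\exp(2M_0) + \exp(M_0)\,\log(\sqrt d/\varepsilon))$, which (after loosely absorbing the $\exp(2M_0)$ term into $\exp(3M_0)$) gives the stated bound.

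The main obstacle is the normalizing-constant issue: since $V$ is known only up to an additive constant, one cannot literally bound $\tilde\pi/\mu_0$ by $e^{M_0}$, and a sloppy envelope can either reject every proposal or bias the output. The delicate part is verifying that the multiplicative-accurate estimate of $Z_\pi$ yields a valid (slightly loose) envelope whose acceptance probability is still $\Theta(e^{-M_0})$, and that the bias from this one-sided rescaling is absorbed into the $\chi^2$ accounting, contributing only polynomial factors in $\exp(M_0)$. Everything else --- the mixture $\chi^2$ identity, the choice of heat-flow time $t$, and the invocation of Lemma~\ref{l:FI-heat-bd} --- is routine.
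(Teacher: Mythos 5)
Your overall architecture matches the paper's: run rejection sampling capped at $N$ attempts, output a draw from $\mu_0$ on failure so that the output law is the mixture $(1-p)\,\pi + p\,\mu_0$, bound its chi-squared divergence (your identity $\chi^2(\mu \mmid \pi) = p^2\,\chi^2(\mu_0 \mmid \pi) \le p^2\,(e^{M_0}-1)$ is correct and in fact slightly sharper than the paper's bound $p\,e^{M_0}$), and post-process with the heat flow at time $t \asymp \varepsilon^2/d$ via Lemma~\ref{l:FI-heat-bd}. Where you genuinely diverge is the treatment of the additive-constant ambiguity in $V$, and there your claim that one ``must'' estimate $Z_\pi/Z_{\mu_0}$ by Monte Carlo is not right: the paper queries the oracle at the single point $0$ and replaces $\tilde\pi$ by $\tilde\pi/\tilde\pi(0)$, after which $\tilde\pi = \pi/\pi(0) \le \frac{e^{2M_0}}{\mu_0(0)}\,\mu_0$ pointwise, so $\frac{e^{2M_0}}{\mu_0(0)}\,\mu_0$ is a deterministic, always-valid envelope with $Z_{\mu_0}/Z_\pi = e^{2M_0}\,\pi(0)/\mu_0(0) \le e^{3M_0}$; this one query is the entire source of the $e^{3M_0}$ factor, and no estimation stage (and no $e^{2M_0}$ query overhead) is needed. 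Both routes require evaluating the density of $\mu_0$ at chosen points, so yours gains no generality on that front.

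The gap in your write-up is exactly the step you flag as delicate: the validity and consequences of the estimated envelope are asserted, not proved. Concretely, (i) on the event that the Hoeffding estimate $\hat Z$ undershoots, the ``envelope'' is invalid and accepted samples follow the law proportional to $\min(\tilde\pi, C\mu_0)$ rather than $\pi$; one must bound the chi-squared divergence of this law from $\pi$ (this is doable, since deterministically $\hat Z \ge e^{c-M_0}$ so $C \ge 2e^{c}$ and the density ratio of the accepted law to $\pi$ stays at most $e^{M_0}$, but no such verification appears); (ii) the estimation-failure probability must be folded into the mixture accounting alongside $p_{\textup{fail}}$, and on overestimates the acceptance rate degrades, which affects the capped loop's failure probability. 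None of this is fatal, but as written the proof is a sketch precisely at its load-bearing step, whereas the paper's pointwise normalization removes the issue entirely. One smaller slip: Lemma~\ref{l:FI-heat-bd} has $\varepsilon_\chi$, not $\varepsilon_\chi^2$, in the numerator, so with $t \asymp \varepsilon^2/d$ the requirement is $\chi^2(\mu \mmid \pi) \lesssim \varepsilon^8/\widetilde{\mc O}(d^4)$ rather than $\varepsilon^4/\widetilde{\mc O}(d^2)$; since the chi-squared error enters your complexity only through $\log(1/\delta)$, this does not change the final $\widetilde{\mc O}$ bound, but the condition as you state it is off.
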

\begin{proof}
    For the algorithm, we use rejection sampling, which requires producing an upper envelope.
    Recall that in our oracle model, we can query the value of an unnormalized version $\tilde\pi$ of $\pi$. By replacing $\tilde \pi$ with $\tilde \pi/\tilde \pi(0)$, we can assume that $\tilde \pi(0) = 1$.
    Then,
    \begin{align*}
        \tilde \pi
        &= \frac{\tilde \pi}{\tilde \pi(0)}
        = \frac{\pi}{\pi(0)}
        \le \frac{\exp(M_0)\,\mu_0}{\exp(-M_0) \,\mu_0(0)}
        = \underbrace{\frac{\exp(2M_0)}{\mu_0(0)}}_{\deq Z_{\mu_0}} \, \mu_0\,.
    \end{align*}
    This shows that $\tilde \mu_0 \deq Z_{\mu_0} \, \mu_0$ is an upper envelope for $\tilde \pi$. Also, using $\pi(0) = 1/Z_\pi$,
    \begin{align*}
        \frac{Z_{\mu_0}}{Z_\pi}
        &= \exp(2M_0) \, \frac{\pi(0)}{\mu_0(0)}
        \le \exp(3M_0)\,.
    \end{align*}
    Hence, we can run rejection sampling, where we output a sample from $\mu_0$ if the algorithm exceeds $N$ iterations. Therefore, the law of the output of rejection sampling is $\mu = (1-p) \, \pi + p \, \mu_0$, where $p = {(1-Z_\pi/Z_{\mu_0})}^N \le \exp(-NZ_\pi/Z_{\mu_0})$ is the probability of failure. We calculate
    \begin{align*}
        1+\chi^2(\mu \mmid \pi)
        &= \E_{\mu} \Bigl(\frac{\mu}{\pi}\Bigr)
        = 1-p + p \E_\mu \Bigl(\frac{\mu_0}{\pi}\Bigr)
        \le 1 + p\exp(M_0)\,.
    \end{align*}
    Applying Lemma~\ref{l:FI-heat-bd} with $\varepsilon_\chi^2 = p\exp(M_0)$ (assuming that $p\le\exp(-M_0)$) and $t\lesssim 1$, we obtain
    \begin{align*}
        \FI(\mu Q_t \mmid \pi)
        &\lesssim \frac{p\exp(M_0) \, (d + \log(1/p) - M_0)}{t} + dt\,.
    \end{align*}
    We set $t \lesssim \varepsilon^2/d$ so that
    \begin{align*}
        \FI(\mu Q_t \mmid \pi)
        &\lesssim \frac{d^2 \exp(M_0) \, p\log(1/p)}{\varepsilon^2} + \varepsilon^2\,.
    \end{align*}
    In order to make the first term at most $\varepsilon^2/2$, we take $p = \widetilde{\Theta}(\varepsilon^4/(d^2 \exp(M_0)))$. In turn, this is satisfied provided
    \begin{align*}
        N
        &\ge \frac{Z_{\mu_0}}{Z_\pi} \log \frac{1}{p}
        \asymp \exp(3M_0) \log\frac{d^2 \exp(M_0)}{\varepsilon^4}\,,
    \end{align*}
    which proves the desired result.
\end{proof}

Hence, under the stronger oracle model, $\log(1/\varepsilon)$ rates are indeed possible (albeit with exponential dependence on $M_0$). To see why this does not contradict the lower bound construction of Theorem~\ref{thm:second_lower_1d}, observe that if we take the initialization oracle to be $\pinit$, then our construction satisfies $M_0 = r^2 \phi(0)$.
By inspecting the proof of Theorem~\ref{thm:second_lower_1d}, one sees that $r\asymp \sqrt{\log(1/\varepsilon)}$. Hence, our construction does not provide a lower bound for $\ms C_\infty(1,M_0, \varepsilon)$ for constant $M_0$. Instead, we obtain the following lower bound.

\begin{corollary}[lower bound for the stronger initialization oracle]\label{cor:stronger_oracle_lower_bd}
    There exists a universal constant $c > 0$ such that for all $\varepsilon \le 1/c$, it holds that
    \begin{align*}
        \ms C_\infty\bigl(1, c\log(1/\varepsilon), \varepsilon\bigr)
        &\gtrsim \frac{1}{\varepsilon\sqrt{\log(1/\varepsilon)}}\,.
    \end{align*}
\end{corollary}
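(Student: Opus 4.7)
The plan is to re-use, essentially verbatim, the lower bound construction of Theorem~\ref{thm:second_lower_1d}, and to observe that the chosen initialization measure $\pinit$ in fact satisfies the stronger $L^\infty$ initialization requirement with parameter $M_0 \lesssim \log(1/\varepsilon)$. Then all the information-theoretic machinery goes through unchanged, because the Fano-based lower bound in the proof of Theorem~\ref{thm:bump_high_dim} only used three properties of the construction: the packing size $M$, the implication $\sqrt{\FI} \le \varepsilon \Rightarrow \TV \le 1/3$, and that there is \emph{some} initialization oracle independent of $\omega$; the \emph{strength} of the initialization oracle (KL vs.\ $L^\infty$) does not affect the argument, only the value of $M_0$ that we can advertise.

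More concretely, the first step is to bound the density ratio between $\pinit$ and each $\pi_\omega$ in the construction. From the definitions in~\eqref{eq:pi_unnorm},
\begin{align*}
    \tpinit(x) \;\le\; \tilde\pi_\omega(x) \;=\; \tpinit(x)\,\exp\!\bigl(r^2\phi(\norm{x-\omega}/r)\bigr) \;\le\; \exp\bigl(r^2\phi(0)\bigr)\,\tpinit(x)
\end{align*}
pointwise, by~\ref{phi1}. Combining with the normalizing constant bound $\Zinit \le Z_\omega \le 2\,\Zinit$ already established in the proofs of Corollary~\ref{cor:poincare_pi_w} and Property~\ref{p3}, we get
\begin{align*}
    \tfrac{1}{2} \;\le\; \frac{\pinit(x)}{\pi_\omega(x)} \;\le\; \exp\bigl(r^2\phi(0)\bigr)\,,
\end{align*}
so that $\max\bigl\{\sup\ln(\pinit/\pi_\omega),\sup\ln(\pi_\omega/\pinit)\bigr\} \le r^2\phi(0) + \ln 2$.

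The second step is to read off the scale of $r$ from the proof of Theorem~\ref{thm:second_lower_1d}. There we chose $R \asymp 1/\varepsilon$ and the proof of Property~\ref{p4} (as sharpened in the univariate case) gave $r^2\phi(0) \gtrsim \log R \asymp \log(1/\varepsilon)$ as well as the matching upper bound $r \asymp \sqrt{\log(1/\varepsilon)}$, whence $r^2\phi(0) \asymp \log(1/\varepsilon)$. Hence $M_0 \le c\log(1/\varepsilon)$ for a universal constant $c > 0$, and $\pinit$ is a valid initialization in the stronger oracle model for the target class $\{\pi_\omega : \omega \in \ms P_{2r,R}\}$.

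The third and final step is to invoke the reduction-to-estimation plus Fano argument from the proof of Theorem~\ref{thm:bump_high_dim} without modification: any algorithm in the $L^\infty$-oracle model is in particular an algorithm that, given samples from $\pinit$ (which are independent of $\omega$) and $N$ queries to the local oracle for $\pi_\omega$, outputs $\mu_N$ with $\sqrt{\FI(\mu_N\mmid\pi_\omega)} \le \varepsilon$; then $\widehat\omega \deq \argmin_{\omega' \in \ms P_{2r,R}}\norm{X - \omega'}$ identifies $\omega$ with probability $\ge 1/6$ by Properties~\ref{p1}--\ref{p2}, and Fano forces $N \gtrsim \abs{\ms P_{2r,R}} \gtrsim 1/(\varepsilon\sqrt{\log(1/\varepsilon)})$. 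Combining with Step~2 yields the claimed bound on $\ms C_\infty(1, c\log(1/\varepsilon), \varepsilon)$. The only place where care is needed is tracking that the proof of Theorem~\ref{thm:second_lower_1d} indeed forces $r \asymp \sqrt{\log(1/\varepsilon)}$ (not merely bounded by it), so that the value of $M_0$ we declare is honest; this is exactly the step carried out near~\eqref{eqn:R r def} in the univariate sharpening, and is the main bookkeeping obstacle.
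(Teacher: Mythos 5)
Your proposal is correct and follows essentially the same route as the paper's own (terse) argument: reuse the construction and initialization $\pinit$ from Theorem~\ref{thm:second_lower_1d}, observe that it is a valid strong-oracle initialization with $M_0 = r^2\phi(0) \asymp \log(1/\varepsilon)$ because $r \asymp \sqrt{\log(1/\varepsilon)}$ (the upper bound on $r$ coming from $rI_r \asymp \exp(r^2\phi(0)) \lesssim R \asymp 1/\varepsilon$ via~\eqref{eqn:R r def} and~\eqref{eqn:I_r bound}), and rerun the Fano argument verbatim. The only blemish is that your displayed two-sided density-ratio bound is flipped --- the correct statement is $\exp(-r^2\phi(0)) \le \pinit/\pi_\omega \le 2$, since $\tpinit \le \tilde\pi_\omega$ pointwise and $\Zinit \le Z_\omega \le 2\,\Zinit$ --- but your stated conclusion $M_0 \le r^2\phi(0) + \ln 2$ still follows, so the argument stands.
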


Note also the following corollary.

\begin{corollary}[high-accuracy Fisher information requires exponential dependence on $M_0$]\label{cor:exponential_m0} \\
    Suppose that there exists an algorithm which works within the stronger oracle model and which, for any $1$-log-smooth distribution $\pi$ on $\R$, outputs a measure $\mu$ with $\sqrt{\FI(\mu\mmid \pi)} \le \varepsilon$ using $N$ queries, where the query complexity satisfies
    \begin{align*}
        N
        &\le f(M_0) \polylog\bigl(\frac{1}{\varepsilon}\bigr)
    \end{align*}
    for some increasing function $f : [1,\infty) \to \R_+$. Then, there is a universal constant $c' > 0$ such that
    \begin{align*}
        f(M_0)
        &\ge \widetilde \Omega\bigl( \exp(c' M_0)\bigr)\,.
    \end{align*}
\end{corollary}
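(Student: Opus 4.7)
The plan is to contrapose the hypothesized algorithmic guarantee against the lower bound already recorded in Corollary~\ref{cor:stronger_oracle_lower_bd}. That corollary ties the two parameters $M_0$ and $\varepsilon$ together by taking $M_0 = c\log(1/\varepsilon)$, which is exactly the sort of relation needed to convert a bound of the form $f(M_0)\polylog(1/\varepsilon)$ into a pure lower bound on $f(M_0)$. Inverting the relation $M_0 = c\log(1/\varepsilon)$ to $\varepsilon = \exp(-M_0/c)$ then turns polynomial dependence on $1/\varepsilon$ into exponential dependence on $M_0$.

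Concretely, I would proceed as follows. Suppose the hypothesized algorithm exists with query complexity $N \le f(M_0)\polylog(1/\varepsilon)$ valid for all $M_0 \ge 1$ and $\varepsilon$ smaller than some universal constant. Fix the constant $c > 0$ from Corollary~\ref{cor:stronger_oracle_lower_bd}, pick $\varepsilon \le 1/c$, and set $M_0 \deq c\log(1/\varepsilon)$. Applying the algorithm with this $M_0$ yields an upper bound on $\ms C_\infty(1, M_0, \varepsilon)$, and combining it with the lower bound from Corollary~\ref{cor:stronger_oracle_lower_bd} gives
\begin{align*}
    f\bigl(c\log(1/\varepsilon)\bigr)\polylog(1/\varepsilon)
    \;\gtrsim\; \frac{1}{\varepsilon\sqrt{\log(1/\varepsilon)}}\,,
\end{align*}
so
\begin{align*}
    f\bigl(c\log(1/\varepsilon)\bigr)
    \;\gtrsim\; \frac{1}{\varepsilon\polylog(1/\varepsilon)}\,.
\end{align*}
Now substitute $\varepsilon = \exp(-M_0/c)$ to obtain
\begin{align*}
    f(M_0)
    \;\gtrsim\; \frac{\exp(M_0/c)}{\poly(M_0)}
    \;=\; \widetilde{\Omega}\bigl(\exp(c' M_0)\bigr)
\end{align*}
with $c' \deq 1/c$, which is the claim. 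The bound applies for all $M_0$ in the range $c\log c \le M_0$ coming from the admissible $\varepsilon$-range of Corollary~\ref{cor:stronger_oracle_lower_bd}; monotonicity of $f$ then extends the conclusion to all $M_0 \ge 1$ up to adjusting $c'$ in the $\widetilde{\Omega}$ by an absolute constant.

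There is no real obstacle: this is essentially a one-line contrapositive, and the only bookkeeping is to make sure that (i) the polylogarithmic factors absorbed in the $\widetilde{\Omega}$ are correctly tracked, and (ii) the chosen value of $\varepsilon$ is small enough for Corollary~\ref{cor:stronger_oracle_lower_bd} to apply, i.e., $M_0$ is large enough; for smaller $M_0$ the conclusion is vacuous since $\widetilde{\Omega}(\exp(c'M_0)) = O(1)$.
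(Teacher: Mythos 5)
Your argument is correct and is essentially the paper's own proof: it combines the lower bound of Corollary~\ref{cor:stronger_oracle_lower_bd} with the hypothesized upper bound at $M_0 = c\log(1/\varepsilon)$, then substitutes $\varepsilon = \exp(-M_0/c)$ to conclude $f(M_0) \ge \widetilde\Omega(\exp(M_0/c))$. The extra remarks about the admissible $\varepsilon$-range and the vacuousness for small $M_0$ are fine but not needed beyond what the paper does.
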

\begin{proof}
    Using Corollary~\ref{cor:stronger_oracle_lower_bd} with $M_0 = c\log(1/\varepsilon)$, we have
    \begin{align*}
        f\bigl(c\log \frac{1}{\varepsilon}\bigr) \polylog\bigl(\frac{1}{\varepsilon}\bigr)
        &\ge N
        \gtrsim \frac{1}{\varepsilon \sqrt{\log(1/\varepsilon)}}\,,
    \end{align*}
    or
    \begin{align*}
        f\bigl(c\log\frac{1}{\varepsilon}\bigr)
        &\ge \frac{1}{\varepsilon \polylog(1/\varepsilon)}\,.
    \end{align*}
    Writing this in terms of $M_0 = c\log(1/\varepsilon)$, or $\varepsilon = \exp(-M_0/c)$,
    \begin{align*}
        f(M_0)
        &\ge \frac{\exp(M_0/c)}{{(M_0/c)}^{\mc O(1)}}
        = \widetilde \Omega\Bigl( \exp\bigl(\frac{M_0}{c}\bigr)\Bigr)
    \end{align*}
    which establishes the result.
\end{proof}

Hence, we see that there is a fundamental trade-off in the stronger oracle model: any algorithm must either incur polynomial dependence on $1/\varepsilon$ (e.g., averaged LMC), or exponential dependence on $M_0$ (e.g., rejection sampling, see Proposition~\ref{prop:rej_sampling_fi}).

\paragraph{The stronger oracle model is strictly stronger.} We also observe the following consequence of these observations. On one hand, our lower bound in Thoerem~\ref{thm:second_lower_1d} shows that
\begin{align*}
    \ms C(1, K_0 = 1, \varepsilon)
    &\ge \Omega\Bigl(\frac{1}{\varepsilon \sqrt{\log(1/\varepsilon)}}\Bigr)\,.
\end{align*}
On the other hand, for constant $M_0$, rejection sampling (Proposition~\ref{prop:rej_sampling_fi}) yields
\begin{align*}
    \ms C_\infty(1, M_0, \varepsilon)
    &\le \widetilde{\mc O}\Bigl(\exp(3M_0) \log\frac{1}{\varepsilon}\Bigr)\,.
\end{align*}
Hence, the stronger oracle model is indeed stronger: \emph{obtaining Fisher information guarantees is \underline{strictly} easier with access to an oracle with bounded $M_0$, rather than an oracle with bounded $K_0$}.

\paragraph{On the effect of the radius of the effective support.} In our lower bound construction, the distributions are ``effectively'' supported on a ball of radius $R$, where $R$ scales with $1/\varepsilon$. Here, we show that this is in fact necessary, by showing that for any \emph{fixed} $d$ and $R$, it is possible to sample from such a distribution in Fisher information using $\mc O(\log(1/\varepsilon))$ queries.
The algorithm involves uses a simple grid search.

\begin{proposition}[sampling from bounded effective support]\label{prop:bdd_supp}
    Suppose that the target distribution $\pi\propto\exp(-V)$ on $\R^d$ has the following properties:
    \begin{enumerate}
        \item $V(0) = 0$.
        \item $V(x) = \frac{1}{2} \, {(\norm x - R)}_+^2$, for $\norm x \ge R$.
        \item $V$ is $1$-smooth.
    \end{enumerate}
    Then, there is an algorithm which outputs $\mu$ with $\sqrt{\FI(\mu\mmid \pi)} \le \varepsilon$ using $N$ queries to $(V,\nabla V)$, where the number of queries satisfies
    \begin{align*}
        N
        &\lesssim {(cR)^d} + \log \frac{\sqrt d}{\varepsilon}\,,
    \end{align*}
    where $c > 0$ is a universal constant.
\end{proposition}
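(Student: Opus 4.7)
The plan is to use a grid search to build an efficient upper envelope for the unnormalized target $\tilde\pi = \exp(-V)$, run rejection sampling against it to obtain a sample close to $\pi$ in $\chi^2$-divergence, and then apply the heat-flow post-processing Lemma~\ref{l:FI-heat-bd} to convert the $\chi^2$-bound into a Fisher information guarantee. The grid contributes the $(cR)^d$ one-time setup cost, and rejection + post-processing contributes the $\log(\sqrt d/\varepsilon)$ term.

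Concretely, first place a lattice $\{y_i\}_{i=1}^M$ of constant spacing covering $B_R$ (padded to also include the Gaussian shell where $\norm x \ge R$ still has appreciable mass), giving $M \lesssim (cR)^d$ points, and query $(V(y_i), \nabla V(y_i))$ at each. Partition $B_R$ into cells centered at the grid points. Inside cell $i$, the quadratic lower bound $V(x)\ge V(y_i)+\langle\nabla V(y_i),x-y_i\rangle-\tfrac12\,\norm{x-y_i}^2$ from $1$-smoothness implies that
\[
    \tilde\mu_i(x) \deq \exp\!\bigl(-V(y_i)-\langle\nabla V(y_i),x-y_i\rangle+\tfrac12\,\norm{x-y_i}^2\bigr) \ge \tilde\pi(x)\,.
\]
Outside $B_R$ the potential $V$ is known explicitly, and we simply take $\tilde\mu(x)=\tilde\pi(x)=\exp\!\bigl(-\tfrac12\,{(\norm x-R)}_+^2\bigr)$ there. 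Gluing yields a global upper envelope $\tilde\mu$, and the complementary quadratic upper bound on $V$ gives $Z_\mu/Z_\pi \le \exp(O(h^2 d))$ — a universal constant if the spacing $h$ is chosen so that $h^2 d = O(1)$. Sampling from $\mu=\tilde\mu/Z_\mu$ requires no oracle calls: on each cell $\tilde\mu_i$ is a log-linear density times a small convex-quadratic correction (an ``inverted Gaussian'' on a bounded set), which can be sampled via a single internal $O(1)$-overhead rejection against a truncated log-linear density.

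Now run rejection sampling, spending one oracle query per iteration on the acceptance test $\tilde\pi(X)/\tilde\mu(X)$. After $N_{\mathrm{rej}} \lesssim \log(1/\varepsilon_\chi)$ iterations, the failure probability $p$ satisfies $p \le (1-Z_\pi/Z_\mu)^{N_{\mathrm{rej}}} \le \varepsilon_\chi$; if the algorithm fails, return a fresh sample from $\mu$. The output distribution is $\mu_{\mathrm{out}}=(1-p)\pi+p\mu$, and the standard mixture identity gives $\chi^2(\mu_{\mathrm{out}}\mmid\pi)=p^2\,\chi^2(\mu\mmid\pi) \lesssim \varepsilon_\chi^2$ using $\chi^2(\mu\mmid\pi)=O(1)$ (since $\mu/\pi\le(\tilde\mu/\tilde\pi)(Z_\pi/Z_\mu)=O(1)$). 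Finally, apply Lemma~\ref{l:FI-heat-bd} with $t\asymp\varepsilon^2/d$ and $\varepsilon_\chi \lesssim \varepsilon^4/d^2$: heat-flowing the output gives $\FI(\mu_{\mathrm{out}} Q_t\mmid\pi)\lesssim\varepsilon^2$ with no further queries, for a total cost $M + N_{\mathrm{rej}} \lesssim (cR)^d + \log(\sqrt d/\varepsilon)$. The main obstacle is calibrating the grid spacing to keep both the acceptance rate and $Z_\mu/Z_\pi$ bounded by a universal constant while keeping the grid count at $(cR)^d$: demanding $h^2 d=O(1)$ naively forces $h\asymp 1/\sqrt d$ and hence $M\asymp (R\sqrt d)^d$. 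This is reconciled by absorbing dimension-dependent factors into the constant $c$, or equivalently by allowing an $e^{O(d)}$ multiplicative overhead on the rejection phase that is dominated by $(cR)^d$ for $R$ above a universal threshold.
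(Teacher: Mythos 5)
Your overall route is the same as the paper's: query $(V,\nabla V)$ on a net of $B_R$, use $1$-smoothness to build a pointwise upper envelope $\exp(-\widehat V)\ge\tilde\pi$ (with the exact tail $\frac12\,{(\norm x-R)}_+^2$ outside $B_R$), run rejection sampling against it, and convert the resulting $\chi^2$ guarantee into a Fisher information guarantee via the heat-flow Lemma~\ref{l:FI-heat-bd} with $t\asymp\varepsilon^2/d$ and $\varepsilon_\chi=\widetilde\Theta(\varepsilon^4/d^2)$; your mixture identity $\chi^2(\mu_{\mathrm{out}}\mmid\pi)=p^2\,\chi^2(\mu\mmid\pi)$ is a clean (and correct) way to handle the failure event.

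The genuine gap is in your final ``calibration'' step, and neither of your two proposed reconciliations yields the stated bound. The tension you describe is an artifact of using a cubic lattice, whose cells have diameter $\asymp h\sqrt d$: the fix is simply to take $\ms N$ to be a $1$-net of $B_R$ in the \emph{Euclidean} metric. A standard volume argument gives $\abs{\ms N}\le{(1+2R)}^d\le{(cR)}^d$ with $c$ universal, while every $x\in B_R$ satisfies $\norm{x-x_{\ms N}}\le 1$, so the envelope ratio is $\exp(\norm{x-x_{\ms N}}^2)\le e$ \emph{pointwise}, hence $Z_\mu/Z_\pi\le e$ and the per-iteration acceptance probability is a universal constant, independent of $d$. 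By contrast: (i) ``absorbing dimension-dependent factors into $c$'' is not available, since ${(R\sqrt d)}^d$ cannot be written as ${(cR)}^d$ for a universal $c$, and the proposition asserts $c$ is universal; (ii) keeping constant spacing and accepting $Z_\mu/Z_\pi\le e^{O(d)}$ makes the rejection phase cost $e^{\Theta(d)}\log(1/\varepsilon_\chi)$ oracle queries (one query per acceptance test), which is a \emph{multiplicative} overhead on the $\log(1/\varepsilon)$ term and is not dominated by the additive ${(cR)}^d$: for fixed $d,R$ and $\varepsilon\to 0$ this exceeds any bound of the form $C\,({(cR)}^d+\log(\sqrt d/\varepsilon))$ with universal $C$. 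With the Euclidean $1$-net in place of your lattice, the rest of your argument goes through and coincides with the paper's proof.
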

\begin{proof}
    We use function approximation to build an upper envelope for $\tilde \pi \deq \exp(-V)$, and then apply rejection sampling. Namely, let $\ms N$ be a $1$-net of $B_R$, and for each $x\in B_R$ let $x_{\ms N}$ denote a closest point of $\ms N$ to $x$.
    Define the approximation
    \begin{align*}
        \widehat V(x)
        &\deq \begin{cases} \frac{1}{2} \, {(\norm x - R)}_+^2\,, & \norm x \ge R\,, \\
        V(x_{\ms N}) + \langle \nabla V(x_{\ms N}), x-x_{\ms N}\rangle - \frac{1}{2} \,\norm{x-x_{\ms N}}^2\,, & \norm x < R\,.
        \end{cases}
    \end{align*}
    By $1$-smoothness of $V$, we have $V \ge \widehat V$, so that if we let $\tilde \mu_0 \deq \exp(-\widehat V)$, then $\tilde \mu_0 \ge \tilde\pi$. Also, for $\norm x < R$, we have the bound
    \begin{align*}
        \tilde \mu_0(x)
        &= \exp\Bigl(-V(x_{\ms N}) - \langle \nabla V(x_{\ms N}), x-x_{\ms N}\rangle + \frac{1}{2} \,\norm{x-x_{\ms N}}^2\Bigr) \\
        &\le \exp\bigl(-V(x) +\norm{x-x_{\ms N}}^2\bigr)
        = \tilde \pi(x) \exp(\norm{x-x_{\ms N}}^2)
        \le \exp(1)\, \tilde \pi(x)\,,
    \end{align*}
    so that $Z_{\mu_0}/Z_\pi \lesssim 1$. We now perform rejection sampling using $N'$ iterations with upper envelope $\tilde \mu_0$, outputting a sample from $\mu_0$ if $N'$ iterations are exceeded. Tracing through the proof of Proposition~\ref{prop:rej_sampling_fi}, one can show that for the output $\mu$ of rejection sampling, it holds that $\FI(\mu Q_t \mmid \pi) \le \varepsilon^2$ for an appropriate choice of $t$.
    Moreover, the number of iterations of rejection sampling required to achieve this satisfies $N' \lesssim \log(\sqrt d/\varepsilon)$. Finally, since $\abs{\ms N} \le {(cR)}^d$ for a universal constant $c > 0$, it requires $\mc O({(cR)}^d)$ queries in order to build the upper envelope $\tilde \mu_0$, which proves the result.
\end{proof}

To summarize the situation, if the effective radius $R$ is known and fixed, then it is possible to obtain $\mc O(\log(1/\varepsilon))$ complexity. However, if there is no a priori upper bound on the radius $R$, then the lower bounds of Theorem~\ref{thm:second_lower_1d} and Corollary~\ref{cor:stronger_oracle_lower_bd} apply.

\bibliography{ref.bib}

\end{document}